\theoremstyle{plain}
\newtheorem{theorem}{Theorem}[section]
\newtheorem{lemma}[theorem]{Lemma}
\newtheorem{corollary}[theorem]{Corollary}
\newtheorem{proposition}[theorem]{Proposition}
\theoremstyle{remark}
\newtheorem{definition}[theorem]{Definition}
\newcommand{\ftime}{\text{F-TiME}}
\newcommand{\Acal}{\mathcal{A}}
\newcommand{\Bcal}{\mathcal{B}}
\newcommand{\Ccal}{\mathcal{C}}
\newcommand{\Ecal}{\mathcal{E}}
\newcommand{\Fcal}{\mathcal{F}}
\newcommand{\Gcal}{\mathcal{G}}
\newcommand{\Hcal}{\mathcal{H}}
\newcommand{\Ical}{\mathcal{I}}
\newcommand{\Pcal}{\mathcal{P}}
\newcommand{\Rcal}{\mathcal{R}}
\newcommand{\Scal}{\mathcal{S}}
\newcommand{\Tcal}{\mathcal{T}}
\newcommand{\Ucal}{\mathcal{U}}
\newcommand{\Xcal}{\mathcal{X}}
\newcommand{\Ycal}{\mathcal{Y}}
\newcommand{\Ocal}{\mathcal{O}}
\newcommand{\Ebb}{\mathbb{E}}
\newcommand{\Nbb}{\mathbb{N}}
\newcommand{\Pbb}{\mathbb{P}}
\newcommand{\Xbb}{\mathbb{X}}
\newcommand{\Ybb}{\mathbb{Y}}
\newcommand{\1}{\mathbbm{1}}
\definecolor{dark_red}{rgb}{0.2,0,0}
\DeclareMathOperator*{\argmax}{arg\,max}
\newcommand{\comment}[1]{}
\newcommand{\mb}[1]{\ensuremath{\boldsymbol{#1}}}
\newcommand{\Event}{\mathscr{E}}
\newcommand{\EXPINF}{\mathrm{EXPINF}}
\newcommand{\EXPIX}{\mathrm{EXP3.IX}}
\newcommand{\EXP}{\mathrm{EXP3}}
\newcommand{\X}{\mathcal X}
\newcommand{\A}{\mathcal A}
\renewcommand{\P}{\mathbb P}
\newcommand{\nats}{\mathbb{N}}
\newcommand{\E}{\mathbb E}
\newcommand{\argmin}{\mathop{\rm argmin}}
\renewcommand{\limsup}{\mathop{\rm limsup}}
\DeclareSymbolFont{bbold}{U}{bbold}{m}{n}
\DeclareSymbolFontAlphabet{\mathbbold}{bbold}
\newcommand{\ind}{\mathbbold{1}}
\newcommand{\ProcX}{\mathbb{X}}
\newcommand{\target}{f^{\star}}
\newcommand{\goodpol}{\pi^{\star}}
\newcommand{\KC}{\mathcal{C}_{1}}
\newcommand{\OKC}{\mathcal{C}_{2}}
\newcommand{\SMV}{\OKC}
\newcommand{\UKC}{\mathcal{C}_{3}}
\newcommand{\FS}{\UKC}
\newcommand{\ignore}[1]{}
\newcommand{\private}[1]{}
\colorlet{sgreen}{black!45!green}
\newcommand{\sh}[1]{{\color{sgreen} #1}}
\newcommand{\moise}[1]{{\color{blue} #1}}
\begin{document}

\begin{frontmatter}
\title {Contextual Bandits and Optimistically Universal Learning} 
\runtitle{Contextual Bandits and Optimistically Universal Learning}

\begin{aug}
\author{\fnms{Mo\"{i}se} \snm{Blanchard}\thanksref{m1}\ead[label=e1]{moiseb@mit.edu}}
\author{\fnms{Steve} \snm{Hanneke}\thanksref{m2}\ead[label=e2]{steve.hanneke@gmail.com}},
\author{\fnms{Patrick} \snm{Jaillet}\thanksref{m3}\ead[label=e3]{jaillet@mit.edu}}

\runauthor{Blanchard, Hanneke, and Jaillet}


\address{\thanksmark{m1}Massachusetts Institute of Technology, \printead{e1}}
\address{\thanksmark{m2}Purdue University, \printead{e2}}
\address{\thanksmark{m3}Massachusetts Institute of Technology, \printead{e3}}



\end{aug}

\begin{abstract}
We consider the contextual bandit problem on general action and context spaces, where the learner's rewards depend on their selected actions and an observable context. This generalizes the standard multi-armed bandit to the case where side information is available, e.g., patients' records or customers' history, which allows for personalized treatment. We focus on consistency---vanishing regret compared to the optimal policy---and show that for large classes of non-i.i.d. contexts, consistency can be achieved regardless of the time-invariant reward mechanism, a property known as \emph{universal consistency}. Precisely, we first give necessary and sufficient conditions on the context-generating process for universal consistency to be possible. Second, we show that there always exists an algorithm that guarantees universal consistency whenever this is achievable, called an \emph{optimistically universal} learning rule. Interestingly, for finite action spaces, learnable processes for universal learning are exactly the same as in the full-feedback setting of supervised learning, previously studied in the literature. In other words, learning can be performed with partial feedback without any generalization cost. The algorithms balance a trade-off between \emph{generalization} (similar to structural risk minimization) and \emph{personalization} (tailoring actions to specific contexts). Lastly, we consider the case of added continuity assumptions on rewards and show that these lead to universal consistency for significantly larger classes of data-generating processes.
\end{abstract}



\end{frontmatter}

\section{Introduction}
\label{sec:intro}

The contextual bandits setting is one of the core important problems in sequential statistical decision-making.
Abstractly, in the contextual bandit setting, a learner (or decision maker) interacts with a reward mechanism iteratively. At each iteration, the learner observes a \emph{context} (or covariate vector) $x \in \Xcal$
and selects an \emph{arm} (or action) $a\in \Acal$ to perform; it then receives a (potentially stochastic) reward depending on the context and selected action.
For example, a store may serve a sequence of customers, and for each provide a list of product recommendations, and receive reward if the recommendation leads to a purchase.
The key distinctions between the contextual bandit setting and standard supervised learning (or regression)
are that (1) the learner's objective is to obtain a near-maximum average reward over time (rather than merely estimating the reward conditional means),
and (2) the learner only observes the reward corresponding to the arm it chose.
These aspects introduce a fundamental trade-off between \emph{exploration} and \emph{exploitation}:
that is, while some arms may have high estimated reward values,
other arms may have higher uncertainty in their rewards: in particular, uncertainty about whether they would yield an even higher reward, 
so that selecting that arm may provide information about the potential for higher future rewards.

\subsection{Universal Consistency}

In the contextual bandit setting, a learner is \emph{consistent} if its average reward converges to the maximum-possible average reward obtained with an optimal policy. Naturally, one would aim for learning procedures that ensure consistency for a broad class of problem instances. In particular, we are interested in \emph{universal consistency} which asks that a learning rule achieves consistency for any underlying reward mechanism and as a by-product, any optimal policy. The equivalent notion can be defined for the full-information case: for a stream of data $(\Xbb,\Ybb)=(X_t,Y_t)_{t\geq 1}$ of instances modeled as a stochastic process on $\Xcal\times \Ycal$, a learning rule with predictions $\hat Y_t$ is consistent if it has vanishing excess error compared to any fixed measurable predictor function $f:\Xcal\to\Ycal$, i.e., $\frac{1}{T}\sum_{t=1}^T \ell(\hat Y_t,Y_t)-\ell(f(X_t),Y_t)\to 0\;(a.s.).$ Then, an algorithm is universally consistent if it is consistent irrespective of the generating process for the values $\Ybb$ from the instances $\Xbb$.
In this standard full-feedback setting, there are many works establishing universal consistency, beginning with the seminal work of \citep*{stone:77} who proved universal consistency for a broad family of \emph{local average estimators}.
Later works extended these results, to guarantee \emph{strong} universal consistency (i.e., almost sure convergence), other categories of learning rules, more general conditions on the metric space $\X$, and more general loss functions
\cite{devroye:96,gyorfi:02}. More recently, \cite{hanneke:21b,gyorfi2021universal,cohen:22} gave minimal assumptions on the space $\X$ for universal consistency---essentially-separable metric spaces. All of these works were restricted to \emph{i.i.d.} data $(X_t,Y_t)_{t\geq 1}$ sampled from a joint distribution on $\Xcal\times \Ycal$. Some of these results aimed to relax the i.i.d. assumption by considering non-i.i.d. mixing, stationary ergodic data generating processes \cite{morvai:96,gyorfi:99,gyorfi:02} or satisfying the law of large numbers \cite{morvai:99,gray:09,steinwart:09}.

\subsection{Optimistically universal learning}
In the present work we pursue a theory of universal consistency under provably-\emph{minimal} assumptions on the sequence of contexts.
This type of theory falls into a framework known as \emph{optimistically universal learning}, 
introduced by \citep*{hanneke:21}, that can be succinctly summarized as ``learning whenever learning is possible''.
The idea is to identify the minimal assumption on the data sequence sufficient for universal consistency to be possible.
Such an assumption is then both necessary and sufficient, and therefore amounts to merely assuming 
that universally consistent learning is \emph{possible}: aptly named the \emph{optimist's assumption}.
For any given process $\ProcX$ satisfying this minimal assumption, by definition there must exist a 
universally consistent learning rule. However, the interesting question becomes whether 
the optimist's assumption alone is sufficient to guarantee universal consistency for some well-designed learning rule: 
that is, whether there exists a single learning rule that is universally consistent 
for \emph{every} process $\ProcX$ satisfying the optimist's assumption.
Such a learning rule is said to be \emph{optimistically universal}.

\subsection{Optimistically universal learning with full-feedback}

The first general analysis of optimistically universal learning and provably-minimal assumptions for universal consistency in the full-feedback setting was introduced by \citep*{hanneke:21}. He provided general necessary and sufficient conditions 
for the existence of universally consistent learning rules for inductive learning---where one can only observe a finite amount of data $(X_t,Y_t)_{t\leq n}$ before committing to a prediction rule for the future steps $T\geq n$---
and for a slight variation called self-adaptive learning---where the learner only observes a finite amount of values $(X_t,Y_t)_{t\leq n}$ but can continue to update its predictions from the testing observations $X_{n+1},\ldots,X_T$, that is it continues to learn from test data. Interestingly, while there do not exist optimistically universal inductive learning rules, 
there do exist explicitly defined optimistically universal self-adaptive learning rules.
That work focused mostly on the \emph{noiseless} function learning setting where some unknown function $f^*:\Xcal\to\Ycal$ defines the values exactly via $Y_t=f(X_t)$. It also left open the question of characterizing universal learning for the standard \emph{online learning} framework, in which the learner can update its predictions from the complete available test and value data $(X_t, Y_t)_{t\leq T}$ \citep*[see also][]{hanneke:21c}.

Addressing the online learning problem, in the noiseless setting, \citep*{blanchard:22b} 
provided a simpler 
characterization and algorithm for unbounded losses while
\cite{blanchard:22a,blanchard:22c} provided a solution for the main case of interest of bounded losses. In particular, while the nearest neighbor algorithm may not be universally consistent even for i.i.d. data \cite{cerou2006nearest}, for noiseless responses, a simple variant with restricted memory is optimistically universal \cite{blanchard:22a}. For the generic case of noisy responses, \cite{hanneke:22a} showed that universal learning can be achieved even for arbitrarily dependent responses on large classes of processes. The complete characterization of universal online learning with noise was given in \cite{blanchard:22d}, showing that under mild conditions on the value space---including totally-bounded metric spaces---optimistically universal learning is possible for arbitrary or adversarial responses without generalizability cost compared to noiseless responses.

\subsection{Universal learning with partial feedback}

The contextual-bandit formulation was first introduced for one-armed bandits \cite{woodroofe1979one,sarkar1991one} in a rather restricted setting. Since then, progress has been made in the literature investigating stochastic contextual bandits under \emph{parametric} assumptions \cite{wang2005bandit,langford2007epoch, goldenshluger2009woodroofe,bubeck2012regret,auer2016algorithm,rakhlin2016bistro}. In the \emph{non-parametric} setting, significant advances have been made to obtain minimax guarantees under smoothness conditions (e.g. Lipschitz) and margin assumptions \cite{lu2009showing,rigollet2010nonparametric,slivkins2011contextual,perchet2013multi} with recent refinements including \cite{guan2018nonparametric,reeve2018k}.

However, to the best of our knowledge, there are no prior works establishing universal consistency even under all i.i.d. data sequences, i.e., consistency in the non-parametric setting without further assumptions. As such, the present work is also the first to propose such results and corresponding universally consistent learning rules. Closest to this work is the result from \cite{yang2002randomized} which shows that if rewards are continuous in the contexts, strong consistency can be achieved with familiar non-parametric methods, for Euclidean context spaces. This work significantly generalizes this result to unrestricted reward mechanisms, separable metric action and context spaces, and non-i.i.d. data.

Non-i.i.d. data has also been widely studied in the literature. Most relevant to our work are 
 non-i.i.d. generating processes for contexts. Examples include customers' profile distribution, which may change depending on seasonal patterns, or the extension of clinical trials to new populations. In these cases, the distribution of contexts $x$ changes while the underlying conditional distribution remains unchanged, a phenomenon known as \emph{covariate-shift}. Such formalism was adopted in works on domain adaptation for classification \cite{sugiyama2007direct,gretton2009covariate,ben2012hardness}. Moreover, several works have also considered distributional shifts in both contexts and responses for bandit problems, in both parametric \cite{besbes2014stochastic,luo2018efficient,wu2018learning,chen2019new} and non-parametric settings \cite{suk:21}.

\comment{

\moise{todo: beyond i.i.d., drift, adversarial...}

\sh{(actually, this is a little funny; I can't find any papers proving universal consistency under iid processes.  The closest seems to be  \citep*{yang:02}, which shows consistency under continuity in the context $x$ (and they actually use this assumption, to get $L_\infty$ approximation).  So maybe we're also the first to show universal consistency, even under iid contexts.)}

\citep*{yang:02} proved consistency under uniform continuity assumptions on the rewards as a function of the context $x$.
Under parameterized smoothness restrictions on the reward means as a function of the context $x$ (e.g., Lipschitzness), 
(woodroofe 79, sarkar 91, clayton 89, Langford, Freund and Schapire, Auer, Yang and Zhu, Suk and Kpotufe)

Beyond iid processes.  Drift \citep*{suk:21} (relative to a function class), 

adversarial (relative to a function class)
}

\comment{

\paragraph{Optimistically universal learning:}
\sh{(some of this subsection can get merged into the above paragraph on prior work on optimistically universal learning; the point of this subsection then becomes just introducing the contributions of the present work, regarding optimistically universal learning for contextual bandits.)}
In the present work we pursue a theory of universal consistency under provably-\emph{minimal} assumptions on the sequence of contexts.
This type of theory falls into a framework known as \emph{optimistically universal learning}, 
introduced by \citep*{hanneke:21}.
The idea is to identify the minimal assumption sufficient for universal consistency to be possible.
Such an assumption is then both necessary and sufficient, and therefore amounts to merely assuming 
that universally consistent learning is \emph{possible}: aptly named the \emph{optimist's assumption}.
For any given process $\ProcX$ satisfying this minimal assumption, by definition there must exist a 
universally consistent learning rule.  However, the interesting question becomes whether 
the optimist's assumption alone is sufficient to guarantee universal consistency for some well-designed learning rule: 
that is, whether there exists a single learning rule that is universally consistent 
for \emph{every} process $\ProcX$ satisfying the optimist's assumption.
Such a learning rule is said to be \emph{optimistically universal}.

In the case of fully-supervised function learning with bounded losses, \citep*{hanneke:21} showed that 
optimistically universal learning rules do \emph{not} exist for a natural formulation of \emph{inductive} 
learning.  However, for a slight variation of inductive learning called \emph{self-adaptive} learning, 
optimistically universal learning rules \emph{do} exist.  That work left open the question of whether 
optimistically universal learning rules exist for the stronger \emph{online} learning 
setting \citep*[see][]{hanneke:21c}.  Together with some relevant results from \citep*{hanneke:21} and \citep*{blanchard:22b},
this question was settled by \citep*{blanchard:22a}, 
who showed that optimistically universal online learning rules do indeed exist, and gave a simple 
learning rule achieving this.  Later works extended these results to noisy (even adversarial) responses
\citep*{hanneke:22a,blanchard:22b}, though all still in the fully-supervised setting.

}

\subsection{Summary of the present work}
In the present work we study optimistically universal learning in a partially-supervised setting: namely, standard contextual bandits \cite{slivkins2019introduction, lattimore2020bandit} with stationary reward functions. Precisely, there exists a time-invariant conditional probability distribution $P_{r\mid a,x}$ such that the reward $r_t$ at each iteration is sampled according to the distribution $P_{r\mid a=a_t,x=X_t}$ where $a_t$ (resp. $X_t$) denotes the selected action (resp. observed context) at time $t$, independently from the past history.
We are interested in online learning, where the learner may observe all past rewards $r_{t'}$ and contexts $X_{t'}$, $t' < t$,
when choosing its action $a_{t}$ given the context $X_t$.
We aim to achieve average reward $\frac{1}{T} \sum_{t=1}^{T} r_t$ that is (almost surely) competitive with any fixed policy $\X \to \Acal$ as $T\to\infty$.

\subsubsection{Bounded unrestricted rewards}
We first focus on the classical assumption that rewards are bounded. We show there always exists an optimistically universal learning rule.
Our approach to proving this is to first characterize which processes $\ProcX$ 
admit universally consistent learning rules, and then use this characterization to 
inform the design and analysis of a learning rule, which will be universally consistent under every such process.
However, this approach turns out to require three separate cases: 
namely, $\Acal$ finite, $\Acal$ countably infinite, and $\Acal$ uncountably infinite.
Each of these cases gives rise to a different characterization of the set of processes $\ProcX$ 
under which universally consistent learning is possible for contextual bandits, 
a fact which itself is of independent interest.
Moreover, each of these sets of processes corresponds to known families of processes 
from the past literature on optimistically universal learning.
When $\Acal$ is finite, the set of processes admitting universal consistency for contextual bandits is equivalent to the family of processes 
admitting universally consistent online learning with full supervision: a family known as $\Ccal_2$.
While this fact appears natural, interestingly this is not the case when $\Acal$ is countably infinite.
In that case, the set of processes admitting universal learning for contextual bandits is equivalent to the family of processes admitting 
universally consistent \emph{inductive} learning with full supervision: a family known as $\Ccal_1$, 
which is more restrictive than $\Ccal_2$.
Finally, when $\Acal$ is uncountably infinite,
universal learning can never be achieved.

\subsubsection{Bounded rewards under continuity assumptions}
For unrestricted rewards, although large classes of non-i.i.d. processes ($\Ccal_1$ or $\Ccal_2$) admit universal learning for countable action spaces, the answer for uncountable action spaces was very negative: universal consistency could never be achieved. However, we show that under continuity assumptions on the rewards, one can recover positive results for general action spaces. Further, in all cases, we provide optimistically universal learning rules. First, under the assumption that rewards are continuous, the characterization of processes admitting universal consistency now requires only two cases. If the action space is finite, the set of processes admitting universal learning remains unchanged and is $\Ccal_2$. On the other hand, if the action space is infinite, this set becomes $\Ccal_1$, irrespective of whether the action space was countably or uncountably infinite. Second, we consider a stronger assumption of uniform continuity on the rewards, in which the modulus of continuity of the expected reward in the actions $\bar r(\cdot,x)$ for $x\in \Xcal$ are uniform over the context space $\Xcal$. Under this assumption, universal learning under the more general set of processes $\Ccal_2$ becomes possible for a significantly larger class of action spaces, namely totally-bounded action spaces. Otherwise, universal learning is achievable exactly on $\Ccal_1$ processes.

\subsubsection{Unbounded rewards}
Last, we consider the most general case of unbounded rewards. It is known that the family of processes admitting universal consistency with full supervision and \emph{unbounded} losses is very restrictive. These are processes visiting only a finite number of distinct instances in $\Xcal$, known as $\Ccal_3$. For contextual bandits, in the standard case of unrestricted rewards, we show that there is a simple dichotomy: if the action space is countable then the set of processes admitting universal learning is still $\Ccal_3$; however, if the action space is uncountably infinite, universal learning can never be achieved. Nevertheless, under continuity assumptions on the rewards, universal learning can always be achieved under $\Ccal_3$ processes. Again, we give optimistically universal learning rules for all cases.

\subsection{Overview of probability-theoretic contributions}
In this work, we make use of the conditions $\Ccal_1$, $\Ccal_2$, and $\Ccal_3$ on stochastic processes from the universal learning literature to characterize the set of processes admitting universal learning. Along the way to establishing these results, another significant contribution of this work 
is establishing new equivalent characterizations of the families $\Ccal_1$ and $\Ccal_2$, crucial for the design of our optimistically universal algorithms.
In particular, we establish a new connection between these two families: 
proving that $\Ccal_2$ can essentially be characterized by processes that would be in $\Ccal_1$ 
if we were to replace duplicate values in the sequence $\ProcX$ by some default value $x_0$. As a result, $\Ccal_2$ processes differ from $\Ccal_1$ processes only through duplicates: if a process $\Xbb$ is guaranteed to almost never visit exactly the same context (e.g. i.i.d. processes with density) the properties $\Ccal_1$ and $\Ccal_2$ are equivalent. 
This fact has further interesting implications, such as a new technique for the design of 
optimistically universal learning rules for online learning with full supervision; prior to this, 
only one approach was known to yield such learning rules, based on a modified nearest neighbor algorithm \cite{blanchard:22a}.
The new approach suggested in the present work is instead based on an explicit model selection 
technique, in the spirit of structural risk minimization, analogous to the optimistically universal 
self-adaptive learning technique developed by \citep*{hanneke:21}.

\subsection{Overview of algorithmic techniques}
We present an overview of the optimistically universal learning rule for finite action sets, Algorithm \ref{alg:main_learning_rule}, which encompasses the main algorithmic innovation in this work. We use the property that $\Ccal_2$ processes without duplicates satisfy the $\Ccal_1$ property (Proposition \ref{prop:C2_equivalent_forms}) to separate times into two classes: points not appearing often recently and points which have many duplicates recently. 
\begin{enumerate}
    \item For the points in the first category, which behave as $\Ccal_1$ processes, we use an approach similar to structural risk minimization: we aim to achieve sublinear regret compared to a constructed countable set of policies that is empirically dense. To do so, we use a restarting technique introduced in \cite{hanneke:21}: we use classical bandit algorithms as a subroutine to achieve sublinear regret with respect to a fixed finite number of policies, and occasionally restart the bandit learner to gradually increase the number of competing policies considered.
    \item For the points in the second category, we use a completely different strategy. Intuitively, these correspond to instances with many duplicates in the recent past, hence it is advantageous to assign each frequent instance an independent bandit learner. In particular, this specific bandit learner is tailored to that point's rewards only and completely disregards historical data from other points.
\end{enumerate}

Interestingly, we can interpret the general strategy as balancing a tradeoff between \emph{generalization} and \emph{personalization}. The first strategy aims to find a policy that performs well at an aggregate level for points with few duplicates. On the other hand, the algorithm performs pure personalization for specific points that have many recent repetitions. This schematic presentation hides many details. In particular, to obtain vanishing excess error compared to the optimal policy, the algorithm needs to balance the generalization/personalization tradeoff carefully, to obtain the required generalization property. In effect, we allow for a cap $M$ of duplicates for each instance in the recent past to be treated with the generalization strategy, and adaptively increase this cap. To adaptively increase this cap, the algorithm occasionally uses ``exploration'' times to estimate the performance of each strategy, and decides to increase the cap based on these estimates. Last, in order to have decisions robust to non-stationarity in the sequence of contexts, the algorithm selects actions based on recent data: the learning procedure is broken down by periods that contain a given proportion of the past data, then this proportion adaptively decays to $0$.

\subsection{Outline of the paper}
The remainder of the paper is organized as follows. After giving the definitions and main results in Section~\ref{sec:preliminaries}, we provide in Section~\ref{sec:ingredients} new characterizations of stochastic process classes as well as base algorithms, used to construct our learning rules. With these tools, we study optimistic learning with bounded rewards for finite (Section~\ref{sec:finite-actions}), countably infinite (Section~\ref{sec:countable_actions}), and uncountable (Section~\ref{sec:uncountable_actions}) action sets. We then show that universal learning can be achieved on larger classes of processes under continuity assumptions on the rewards in Section~\ref{sec:continuity-assumptions}. Last, in Section~\ref{sec:unbounded} we consider the more restrictive case of unbounded rewards.

\newpage 
\section{Preliminaries and main results}
\label{sec:preliminaries}

\subsection{Formal setup and problem formulation}

The goal of this paper is to study the general framework of contextual bandits in an online setting. Given a separable metrizable Borel context space $(\Xcal,\Bcal)$ and a separable metrizable Borel action space $\Acal$, the learner interacts with the contextual bandit at each iteration $t\geq 1$ of the learning process in the following fashion. First, the learner observes a context $X_t\in \Xcal$, then selects an action $\hat a_t\in\Acal$ based on the past history only. As a result of the action, the learner receives a reward $r_t$. We will suppose for the most part that the rewards are bounded $r_t\in[0,\bar r] = \Rcal$ for some known $\bar r\geq 0$. Hence, except for Section \ref{sec:unbounded} in which we consider unbounded rewards, we will take without loss of generality $\bar r=1$. Crucially, the learning rule can only use the past history, which is defined formally as follows.

\begin{definition}[Learning rule]
    A \emph{learning rule} is a sequence $f_\cdot =(f_t)_{t\geq 1}$ of possibly randomized measurable functions $f_t:\Xcal^{t-1}\times \Rcal^{t-1}\times \Xcal\to \Acal$. The action selected at time $t$ by the learning rule is $\hat a_t = f_t((X_s)_{s\leq t-1},(r_s)_{s\leq t-1},X_t)$.
\end{definition}

We suppose that the contexts are generated from a stochastic process $\Xbb=(X_t)_{t \in \nats}$ on $\Xcal$. Further, we assume that rewards are sampled from a distribution conditionally on the context and actions. Formally, we assume that there exists a time-invariant conditional distribution $P_{r\mid a,x}$ such that the rewards $(r_t)_{t\geq 1}$ are conditionally independent given their respective selected action $a_t$ and observed context $x_t$, and follow this conditional distribution. Hence, $(r_t\mid a_t,x_t)_{t\geq 1}\overset{iid.}{\sim}P_{r\mid a,x}$. To emphasize the conditional dependence of $r_t$ on the actions and context, we denote $r_t(a,x)$ (resp. $r_t(a)$) the reward at time $t$, had the selected action been $a\in\Acal$ and the observed context $x\in \Xcal$ (resp. when the context at time $t$ is clear). Further, by abuse of notation, we will refer to a reward mechanism $r$ as a random variable $r\sim P_{r\mid a,x}$. For instance, we use the notation $\bar r(a,x)= \Ebb[r\mid a,x]$ to denote the immediate expected reward for any $a\in \Acal$ and $x\in \Xcal$. When we investigate unbounded rewards in Section \ref{sec:unbounded}, we will assume that the random variable $r(a, x)$ is integrable for any $(a,x)\in\Acal\times\Xcal$. We investigate three settings for the reward mechanism $r$: unrestricted, continuous, and uniformly-continuous. For the two last settings, we suppose that $\Acal$ is a separable metric space with metric $d$. We formally define the two continuity assumptions below.
\begin{definition}\label{def:continuous+unif_cont_rewards}
    The reward mechanism $r$ is continuous if for any $x\in\Xcal$, the immediate expected reward function $\bar r(\cdot, x):\Acal\to[0,1]$ is continuous.
    
    \noindent The reward mechanism $r$ is uniformly-continuous if for any $\epsilon>0$ there exists $\Delta(\epsilon)>0$ with
\begin{equation*}
    \forall x \in\Xcal,\forall a,a'\in \Acal,
    \quad d(a,a') \leq \Delta(\epsilon)\Rightarrow |\bar r(a, x)-\bar r(a', x) |\leq \epsilon.
\end{equation*}
\end{definition}

Our goal is to design algorithms that intuitively converge to the optimal policy $\pi^*:\Xcal\to\Acal$ that selects for any context $x\in\Xcal$ an optimal arm in $\argmax_{a\in\Acal} \bar r(a, x)$.  Such an \emph{optimal} policy $\pi^*$ is well-defined for finite $\Acal$; however, for infinite $\Acal$, this may no longer exist (e.g., if $\sup_{a\in \Acal} \bar{r}(a, x)$ is not attained).  Thus, to be fully general, we 
instead ask that the regret of the algorithm be sublinear compared to \emph{any} fixed measurable policy $\pi^*:\Xcal\to\Acal$. We are then interested in learning rules that are consistent irrespective of the unknown reward mechanism $r$, i.e., which intuitively converge to the (near-)optimal policy for all reward mechanisms. We follow the definitions from the universal learning literature for general processes as introduced in \cite{hanneke:21}.

\begin{definition}[Consistence and universal consistency]
    Let $\Xbb$ be a stochastic process on $\Xcal$, $r$ be a reward mechanism, and $f_\cdot$ be a learning rule. Denote by $(\hat a_t)_{t\geq 1}$ its selected actions. We say that $f_\cdot$ is \emph{consistent} under $\Xbb$ with rewards $r$ if for any measurable policy $\pi^*:\Xcal\to\Acal$,
    \begin{equation*}
        \limsup_{T\to\infty}\frac{1}{T}\sum_{t=1}^T r_t(\pi^*(X_t)) - r_t(\hat a_t) \leq 0,\quad (a.s.).
    \end{equation*}
    We say that a learning rule is \emph{universally consistent} if it is consistent under $\Xbb$ for any reward mechanism $r$.
\end{definition}

Unfortunately, universal consistency is not always achievable. For example, on $\Xcal=\Nbb$, under the process $\Xbb=(t)_{t\geq 1}$, there does not exist any universally consistent learning rule even in the simplest framework of noiseless---realizable---online learning with full-feedback---when one observes not only the reward $r_t(\hat a_t)$ but the complete vector $(r_t(a))_{a\in\Acal}$ at step $t$ \cite{hanneke:21,blanchard:22a}. Two natural questions then arise. First, when is universal consistency possible? And second, which algorithms are universally consistent for a large family of such stochastic processes? To this end, we introduce the notion of optimistically universal learning rules, that ``learn whenever learning is possible''.

\begin{definition}[Optimistically universal learning rule]
    Denote by $\Ccal $ the set of processes $\Xbb$ on $\Xcal$ such that there exists a learning rule universally consistent under $\Xbb$.
    
    We say that a learning rule $f_\cdot$ is \emph{optimistically universal} if it is universally consistent under any process $\Xbb\in\Ccal $.
\end{definition}
Similarly, we define $\Ccal^c$ (resp. $\Ccal^{uc}$) the set of processes admitting universal learning under continuous (resp. uniformly-continuous) rewards, and define accordingly the notion of optimistically universal learning rule for continuous (resp. uniformly-continuous) rewards. In this paper, we answer the informal questions described above by 1. characterizing the set of learnable processes and 2. showing that there indeed exists and providing optimistically universal learning rules.

\subsection{Useful classes of stochastic processes}

In this subsection we present the key conditions arising in the characterizations of processes on $\Xcal$ admitting universal learning. Let us first start with some notation. For any stochastic process $\Xbb=(X_t)_{t\geq 1}$, we denote $\Xbb_{\leq t} = (X_s)_{s\leq t}$ for any $t\geq 1$. We also introduce the empirical limsup frequency $\hat\mu_{\Xbb}$ as follows,
\begin{equation*}
    \hat\mu_{\Xbb}(A) = \limsup_{T\to\infty} \sum_{t=1}^T \1_A(X_t),\quad A\in\Bcal.
\end{equation*}
The first condition asks that the set function $\Ebb[\hat\mu_\Xbb(\cdot)]$ forms a \emph{continuous sub-measure}.

\begin{definition}[Condition 1 \citep*{hanneke:21}]
  \label{con:kc}
  For every monotone sequence $\{A_k\}_{k=1}^{\infty}$ of measurable subsets of $\X$ with $A_k \downarrow \emptyset$,
  \begin{equation*}
    \lim\limits_{k \to \infty} \E\!\left[ \hat{\mu}_{\ProcX}(A_k) \right] = 0.
  \end{equation*}
  We define $\Ccal_1$ as the set of processes $\Xbb$ satisfying this condition.
\end{definition}

For our purposes, we will need to extend this definition to extended stochastic processes which may take values on a subset of possibly random times $\Tcal\subset \Nbb$ instead of the complete set of times $\Nbb$. Overloading the notation $\Ccal_1$, we refer to the same condition $\Ccal_1$ for extended stochastic processes which satisfy the equivalent condition.

\begin{definition}[Extended condition 1]\label{def:extended_C1}
Given a possibly random set of times $\Tcal\subset\Nbb$, $\tilde\Xbb=(X_t)_{t\in\Tcal}$ satisfies the following condition: for every monotone sequence of measurable sets of $\Xcal$ with $A_k\downarrow\emptyset$,
\begin{equation*}
    \lim_{k\to\infty}\Ebb\left[\limsup_{T\to\infty} \frac{1}{T}\sum_{t\leq T, t\in\Tcal} \1_{A_k}(X_t)\right] = 0.
\end{equation*}

\end{definition}

As an important remark, the set of $\Ccal_1$ extended stochastic processes is larger than the processes $\tilde \Xbb = (X_t)_{t\in\Tcal}$ satisfying $(X_{t_i})_{i\geq 1}\in\Ccal_1$ where $\Tcal=\{t_1 \leq t_2\leq \ldots\}$ is an enumeration of $\Tcal$. For instance, on $\Xcal=\Nbb$, the process $(X_t=t)_{t\geq 1}$ does not belong to $\Ccal_1$---the decreasing sequence $A_k = \{n\geq k\}$ disproves the condition. However, for any increasing sequence of times $t_k=\omega(k)$, the extended process $\tilde \Xbb = (X_t)_{t\in\{t_k,k\geq 1\}}$ with $X_{t_k}=k$ for all $k\geq 1$, belongs to $\Ccal_1$ because $|\{t_k\leq T,k\geq 1\}|=o(T)$.

We then introduce a weaker condition on stochastic processes which asks that the process visits a sublinear number of sets from any measurable partition of $\Xcal$.

\begin{definition}[Condition 2 \citep*{hanneke:21}]
  \label{con:smv}
  For every sequence $\{A_k\}_{k=1}^{\infty}$ of disjoint measurable subsets of $\X$,
  \begin{equation*}
    |\{ k : \Xbb_{\leq T} \cap A_k \neq \emptyset \}| = o(T) \text{ (a.s.)}.
  \end{equation*}
  Denote by $\SMV$ the set of all processes 
  $\ProcX$ satisfying this condition.
\end{definition}

It is known \cite{hanneke:21} that $\Ccal_1\subset\Ccal_2$ and that i.i.d. processes, stationary ergodic processes, stationary processes and processes satisfying the law of large numbers---for any $A\in\Bcal$, the limit $\lim_{T\to\infty}\frac{1}{T}\sum_{t=1}^T\1_A(X_t)$ exists almost surely---belong to $\Ccal_1$. Therefore, both $\Ccal_1$ and $\Ccal_2$ are very general classes of processes.

Last, we introduce a significantly stronger assumption asking that the process only visits a finite number of distinct points.

\begin{definition}[Condition 3 \citep*{hanneke:21,blanchard:22b}]
  \label{con:fs}
  \begin{equation*} 
  |\{ x : \ProcX \cap \{x\} \neq \emptyset \}| < \infty \text{ (a.s.)}.
  \end{equation*}
  Denote by $\FS$ the set of all processes 
  $\ProcX$ satisfying this condition.
\end{definition}

\subsection{Main results}

We are now ready to present our main results. We show that the set of processes admitting universal learning $\Ccal$ corresponds to one of the classes of processes $\Ccal_3\subset\Ccal_1\subset\Ccal_2$ and depends \emph{only} on the action set $\Acal$. A summary of the charcaterizations is provided in Table \ref{table:summary_of_results}. In addition, we always provide optimistically universal learning rules for each case, which we construct in the next sections. In the main setting of bounded rewards, the relevant alternatives are whether $\Acal$ is finite, countably infinite, or uncountable.

\begin{table}
\caption{Characterization of learnable instance processes for universal learning in contextual bandits depending on properties of the action space $\Acal$.}
\label{table:summary_of_results}

\begin{tabular}{|c|c |c |c|} 
 \hline
 \multirow{4}{*}{ \textbf{Bounded rewards} } & \textbf{Unrestricted rewards} & \textbf{Continuous rewards} & $\begin{array}{c}
    \textbf{Uniformly-continuous} \\
     \textbf{rewards}
 \end{array}$\\ [0.5ex] 
 \cline{2-4} 
 &
 $\begin{array}{lc}
      \text{Finite:}& \Ccal_2  \\
      \text{Countably infinite:}& \Ccal_1  \\
      \text{Uncountable:}& \emptyset
 \end{array}$
 &$\begin{array}{l p{0.3cm} c}
      \text{Finite:}&& \Ccal_2  \\
      \text{Infinite:}&& \Ccal_1
 \end{array}$
 &$\begin{array}{lc}
      \text{Totally-bounded:}& \Ccal_2  \\
      \text{Non-totally-bounded:}& \Ccal_1
 \end{array}$ \\ 
 \hline \hline
  \textbf{Unbounded rewards} & $\begin{array}{l p{0.6cm} c}
      \text{Countable:}&& \Ccal_3  \\
      \text{Uncountable:}&& \emptyset
 \end{array}$
 &$\Ccal_3$
 &$\Ccal_3$ \\ 
 \hline
 
\end{tabular}

\end{table}

\begin{theorem}[Unrestricted bounded rewards]\label{thm:unrestricted_rewards}
    Let $\Xcal$ be a separable metrizable Borel context space and $\Acal$ an action space.
    \begin{itemize}
        \item If $\Acal$ is finite and $|\Acal|\geq 2$, then $\Ccal = \Ccal_2$.
        \item If $\Acal$ is infinite and countable, then $\Ccal= \Ccal_1$.
        \item If $\Acal$ is an uncountable separable metrizable Borel space, then $\Ccal = \emptyset$.
    \end{itemize}
    In all cases there is an optimistically universal learning rule.
\end{theorem}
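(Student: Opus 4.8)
The plan is to establish the stated characterization of $\Ccal$ in each of the three regimes and then invoke the algorithmic constructions of the later sections to supply the optimistically universal learning rules; for $\Acal$ finite we must prove both $\Ccal\subseteq\Ccal_2$ (a lower bound) and $\Ccal_2\subseteq\Ccal$ witnessed by a \emph{single} rule, and symmetrically $\Ccal\subseteq\Ccal_1$ and $\Ccal_1\subseteq\Ccal$ for $\Acal$ countably infinite, while the uncountable case asserts $\Ccal=\emptyset$, which also makes ``optimistically universal'' vacuous. I would dispatch the uncountable case first, and directly, using a reward mechanism that does not even depend on the context. Fix any learning rule and any process $\Xbb$. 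At each of the countably many rounds the rule plays a single (possibly random) action, so the set $S=\{\hat a_t:t\ge 1\}$ of actions ever played is almost surely countable. Since $\Acal$ is uncountable and standard Borel it carries a non-atomic Borel probability measure $\nu$ (e.g.\ it contains a Cantor subset by the perfect-set property, onto which one pushes the uniform measure); then $\nu(S)=0$ a.s., and by Tonelli $\int_\Acal \P[a\in S]\,\nu(da)=\E[\nu(S)]=0$, so $\P[a^*\notin S]=1$ for $\nu$-almost every $a^*$. Fixing such an $a^*$ and the deterministic reward $r(a,x)=\1[a=a^*]$, the policy $\pi^*\equiv a^*$ is measurable and optimal, yet almost surely $\hat a_t\ne a^*$ for every $t$, so $\frac1T\sum_{t\le T}\big(r_t(\pi^*(X_t))-r_t(\hat a_t)\big)=1$ for all $T$ and the rule is not consistent under $\Xbb$. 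Hence $\Ccal=\emptyset$ and any rule is trivially optimistically universal for this regime.

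For the lower bound $\Ccal\subseteq\Ccal_2$ with $\Acal$ finite, $|\Acal|\ge 2$, I would reduce to the known full-feedback online-learning impossibility. Bandit feedback is strictly less informative than observing the full reward vector $(r_t(a))_{a\in\Acal}$, so a contextual-bandit rule universally consistent under $\Xbb$ yields, by feeding it the chosen-action rewards, an online supervised-learning rule universally consistent under $\Xbb$ with value space $\Acal$ and loss $\ell(\hat a,r_t)=1-r_t(\hat a)$: the supervised excess error equals the bandit regret $\frac1T\sum_{t\le T}(r_t(\pi^*(X_t))-r_t(\hat a_t))$. Since even the noiseless binary ($\{0,1\}\subseteq\Acal$) hard instances force $\Ccal_2$ as a necessary condition for universal online learning with bounded losses \cite{hanneke:21,blanchard:22a,blanchard:22d}, any $\Xbb\notin\Ccal_2$ admits a reward mechanism of the form $\bar r(a,x)=\1[a=g(x)]$ defeating the rule; hence $\Xbb\in\Ccal_2$.

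For $\Ccal\subseteq\Ccal_1$ with $\Acal$ countably infinite I would argue directly. If $\Xbb\notin\Ccal_1$, pick a monotone $A_k\downarrow\emptyset$ with $\inf_k \E[\hat\mu_{\Xbb}(A_k)]\ge 3c>0$, so that, with non-vanishing probability, the process spends a $\ge 2c$-fraction of its time arbitrarily deep in the nested sequence, infinitely often; equivalently it enters the ``shells'' $A_k\setminus A_{k+1}$ for ever-larger $k$ at linear rate. Partition $\Acal=\bigsqcup_k\Acal_k$ into infinite pools and choose a deterministic noiseless reward $r(a,x)=\1[a=g(x)]$ whose measurable target $g$ maps the $k$-th shell into $\Acal_k$. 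On a shell first visited by the learner it has no information about $g$ there, and within a pool of infinitely many candidate actions the finitely many actions it tries before the process moves on must miss $g$, except with a probability that can be driven to zero by averaging over the pool against the learner's play distribution; summing over shells via Borel--Cantelli, the learner earns vanishing reward on the shells it visits, forcing linear regret with positive probability against $\pi^*=g$. Hence $\Xbb\in\Ccal_1$. The delicate point here is making this work for arbitrary, possibly pathological, $\Xbb\notin\Ccal_1$ rather than only the canonical ``escape to infinity'' example, which requires a careful simultaneous choice of $g$ across all shells and quantitative control of the miss probabilities.

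The remaining (and substantial) content is the existence of a single optimistically universal rule on $\Ccal_2$ for finite $\Acal$ and on $\Ccal_1$ for countable $\Acal$, which are exactly the constructions of Sections~\ref{sec:finite-actions} and~\ref{sec:countable_actions}, built on the characterization that $\Ccal_2$-processes without duplicates lie in $\Ccal_1$ (Proposition~\ref{prop:C2_equivalent_forms}) and on the base bandit subroutines of Section~\ref{sec:ingredients}. For finite $\Acal$, Algorithm~\ref{alg:main_learning_rule} separates each round into a \emph{generalization} regime (contexts with few recent duplicates, which collectively behave like a $\Ccal_1$ process and are handled by structural-risk-minimization over a countable empirically-dense family of policies, run through finite-arm bandit algorithms restarted periodically to grow the family) and a \emph{personalization} regime (contexts with many recent duplicates, each handed to its own dedicated bandit learner), with an adaptively increasing cap $M$ on admissible duplicates tuned via occasional exploration rounds, and with all decisions based on geometrically decaying recent windows for robustness to non-stationarity of $\Xbb$; for countable $\Acal$ one uses a simpler SRM over a growing finite subset of $\Acal$ and a growing policy family, with no personalization component. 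I expect the principal obstacle to be the analysis of Algorithm~\ref{alg:main_learning_rule}: proving that the generalization/personalization trade-off, the adaptive schedule for $M$, and the decaying windows together yield almost surely vanishing regret against every measurable policy under every $\Ccal_2$ process and every bounded reward mechanism --- in particular that the exploration used to adjust $M$ costs only $o(T)$ and that the personalization component never spoils the generalization guarantee --- with the general $\Ccal\subseteq\Ccal_1$ lower bound of the previous paragraph as the secondary technical hurdle.
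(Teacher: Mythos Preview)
Your overall decomposition matches the paper's, and the finite-$\Acal$ necessity argument and the deferral of the algorithmic constructions to Sections~\ref{sec:finite-actions}--\ref{sec:countable_actions} are fine. Two of your sketched lower bounds, however, contain genuine gaps.

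\textbf{Uncountable $\Acal$.} Your Tonelli argument is circular: the set $S=\{\hat a_t:t\ge 1\}$ depends on the reward mechanism, which in turn depends on $a^*$, so you cannot write $\E[\nu(S)]=\int \P[a\in S]\,\nu(da)$ and then pick $a^*$ with $\P[a^*\notin S]=1$, since the correct integrand is $\P[a\in S_a]$. The paper's fix is to look at the \emph{zero-reward trajectory}: define $S$ as the (countable, random) set of actions the rule would play if every past reward were $0$, which does not depend on $a^*$; choose $a^*\notin S$ almost surely; then for rewards $r(a,x)=\1[a=a^*]$, by induction the realized rewards are always $0$, so the realized trajectory coincides with the zero-reward one and $a^*$ is never played. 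Also, you assume $\Acal$ carries a non-atomic probability measure; the paper does not take ``separable metrizable Borel'' to imply standard Borel, and handles the complementary case via Lemma~\ref{lemma:supports} (every probability measure is purely atomic, hence the zero-reward trajectory has countable support, and one picks $a^*$ outside it).

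\textbf{Countable $\Acal$, necessity of $\Ccal_1$.} Your ``shells $A_k\setminus A_{k+1}$ plus infinite pools $\Acal_k$'' scheme does not work in general. The condition $\hat\mu_{\Xbb}(A_k)\ge c$ for all $k$ does \emph{not} force the process to enter new shells at linear rate: for each $k$ the mass of $A_k$ may be carried by a single shell $A_j\setminus A_{j+1}$ that is visited infinitely often, and on such a shell the learner can eventually identify $g$ at every frequently visited context (your pools $\Acal_k$ are countable, so they can be exhausted). What is actually needed is the paper's Lemma~\ref{lem:infrequent-cells}: for $\Xbb\notin\Ccal_1$ one can build a partition $\{B_i\}$ together with \emph{finite} visit caps $N_i$ such that, with positive probability, a positive fraction of times $t$ satisfy $|\Xbb_{<t}\cap B_{i_t}|<N_{i_t}$. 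The adversary then uses \emph{finite} pools $A_i$ of size $2N_i$ (not infinite ones): on any such ``fresh'' visit, the learner has probed at most $N_i$ actions in $A_i$, so a uniformly random target in $A_i$ is missed with probability $\ge 1/2$. Your acknowledgment that this step is ``delicate'' is right, but the missing idea is precisely this visit-cap construction; the naive shell decomposition cannot supply it.
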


We recall that $\Xbb\in \Ccal_2$ is necessary to achieve universal learning under $\Xbb$ even in the simplest online learning setting with full-feedback and noiseless values \cite{hanneke:21,blanchard:22a}. Therefore, Therorem \ref{thm:unrestricted_rewards} shows that universal consistence for contextual bandits is achievable for finite action sets at no extra generalizability cost. Unfortunately, in uncountable action spaces, universal consistence is not achievable. A natural question then becomes whether with additional mild assumptions on the rewards one can recover the large classes of processes $\Ccal_1$ or $\Ccal_2$ for universal learning. In particular, we assume that $(\Acal,d)$ is a separable metric space and first consider the case of \emph{continuous} rewards. Under this first assumption, we show that we can achieve universal consistency on all $\Ccal_1$ processes with an optimistically universal learning rule.

\begin{theorem}[Continuous bounded rewards]\label{thm:continuous_rewards}
    Let $\Xcal$ be a separable metrizable Borel context space and $(\Acal,d)$ a separable metric action space.
    \begin{itemize}
        \item If $\Acal$ is finite and $|\Acal|\geq 2$, then $\Ccal^c = \Ccal_2$.
        \item If $\Acal$ is infinite, then $\Ccal^c = \Ccal_1$.
    \end{itemize}
    In all cases there is an optimistically universal learning rule for continuous rewards.
\end{theorem}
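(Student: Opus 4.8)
The plan is to handle the two directions (necessity and sufficiency of the stated classes) separately, and to leverage as much of the unrestricted-rewards machinery (Theorem~\ref{thm:unrestricted_rewards}) as possible, since continuity is only an extra restriction on the reward mechanism and hence can only \emph{enlarge} the set $\Ccal^c$ of learnable processes relative to $\Ccal$. For the finite-action case, $\Ccal^c = \Ccal_2$ is immediate: the inclusion $\Ccal_2 \subseteq \Ccal^c$ follows because the optimistically universal rule for unrestricted bounded rewards already works under every $\Xbb \in \Ccal_2$ (and a fortiori for the subclass of continuous reward mechanisms), while $\Ccal^c \subseteq \Ccal_2$ follows from the known lower bound that $\Xbb \in \Ccal_2$ is necessary even in the full-feedback noiseless setting \cite{hanneke:21,blanchard:22a} — and that hard instance can be realized with a continuous (indeed, on a finite action set every reward mechanism is trivially continuous) reward function. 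So the real content is the infinite-action case, where we must show $\Ccal^c = \Ccal_1$, and in particular that continuity \emph{rescues} the uncountable case, which was $\emptyset$ without it.

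For the necessity direction $\Ccal^c \subseteq \Ccal_1$ when $\Acal$ is infinite: I would show that if $\Xbb \notin \Ccal_1$, then no learning rule is universally consistent under $\Xbb$ even against continuous reward mechanisms. The idea is to pick a countably infinite subset $\{a_0, a_1, a_2, \dots\} \subseteq \Acal$ that is discrete in the metric $d$ (possible since $\Acal$ is infinite and separable — take a sequence with no repeated points; if it has a limit point, pass to a subsequence and relabel so that points are isolated, or use a bump-function construction around each $a_i$ so the reward is continuous but ``spikes'' only near the designated actions). Then reuse the $\Ccal_1$-failure lower bound from the full-feedback \emph{inductive} learning literature (the characterization that $\Ccal_1$ is exactly the class admitting universal inductive learning): encode an adversarial labeling over these countably many actions as an adversarial choice of which $a_i$ is optimal at each context, smoothing each ``indicator of optimal action'' into a continuous $\bar r(\cdot, x)$ supported on a small $d$-ball around the chosen $a_i$. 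Because $\Xbb \notin \Ccal_1$, there is a monotone $A_k \downarrow \emptyset$ with $\E[\hat\mu_\Xbb(A_k)] \not\to 0$, and on the corresponding recurrent region the learner cannot, with bandit feedback, identify the optimal action among infinitely many options fast enough — this is where the ``no optimistically universal inductive rule, and indeed no universally consistent rule off $\Ccal_1$'' phenomenon for countable action sets (the $\Ccal_1$ half of Theorem~\ref{thm:unrestricted_rewards}) gets transferred, since continuity on an $\varepsilon$-separated action set imposes no real constraint.

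For the sufficiency direction $\Ccal_1 \subseteq \Ccal^c$ when $\Acal$ is infinite: here is the main work, and the main obstacle. We need a single learning rule, universally consistent under every $\Ccal_1$ process, that competes with any measurable policy $\pi^*:\Xcal \to \Acal$ into a (possibly uncountable) action space, using only continuity of $\bar r(\cdot, x)$. The plan is: (i) reduce to a countable action set by separability of $(\Acal,d)$ — fix a countable dense $\{a_j\}_{j\geq 1} \subseteq \Acal$; by continuity of $\bar r(\cdot,x)$, for every context $x$ and every $\varepsilon>0$ some $a_j$ is $\varepsilon$-optimal, so competing with the best \emph{measurable $\Xcal \to \{a_j\}_{j\geq 1}$ policy} suffices, up to $\varepsilon$, to compete with $\pi^*$ — note this step crucially needs continuity but \emph{not} uniform continuity, since the approximating index $j$ is allowed to depend on $x$; (ii) now invoke (a suitable adaptation of) the $\Ccal_1$-case algorithm from Theorem~\ref{thm:unrestricted_rewards} for \emph{countably infinite} action sets, which is optimistically universal under $\Ccal_1$ against unrestricted bounded rewards — apply it to the countable action set $\{a_j\}$. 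One must check the measurability/approximation argument in (i) carefully (a measurable selection theorem picking, for each $x$, a near-optimal $a_j(x)$ from the dense set, using continuity of $\bar r(\cdot,x)$ and joint measurability of $\bar r$), and check that the regret of the countable-action rule against the best $\{a_j\}$-policy plus the $\varepsilon$ discretization slack vanishes as we let $\varepsilon \downarrow 0$ along a schedule — a standard diagonalization. The subtle point, and the part I'd expect to need the most care, is making the ``$\varepsilon$ may depend on $x$'' discretization compatible with the fixed countable policy class the algorithm competes against: one resolves this by competing against the growing family of policies valued in finite prefixes $\{a_1,\dots,a_m\}$ and arguing that $\sup_m$ of the best-$m$-action-policy value approaches $\sup$ over measurable $\Xcal\to\Acal$ policies by monotone convergence plus continuity, so sublinear regret against each finite prefix (which the $\Ccal_1$ algorithm delivers, via its restarting/structural-risk-minimization subroutine) yields vanishing regret against $\pi^*$ in the limit.
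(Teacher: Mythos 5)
Your proposal is correct, and for the necessity direction and the finite-action case it matches the paper: the finite case does reduce to Theorem~\ref{thm:unrestricted_rewards} because a finite metric space is discrete so every reward mechanism is continuous, and the $\Ccal^c\subseteq\Ccal_1$ lower bound in the paper is exactly your ``bump function'' construction --- tent-shaped continuous rewards $\max(0,1-2d(a,a_i^\star)/\epsilon_i)$ supported near a designated action in each finite set $A_i=\{a_1,\dots,a_{2N_i}\}$, reduced to the indicator-reward lower bound of Theorem~\ref{thm:KC-infinite-actions} by snapping the learner's action to the nearest $a_j$ (your worry about making the countable action subset uniformly discrete is unnecessary: only the minimum pairwise distance within each \emph{finite} $A_i$ needs to be positive, which is automatic for distinct points). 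Where you genuinely diverge is the sufficiency direction for infinite $\Acal$. The paper does \emph{not} discretize the action space; it runs $\EXPINF$ over a countable family $\Pi$ of $\Acal$-valued measurable policies that is dense under $\Ccal_1$ processes with respect to the empirical average of $d\wedge 1$ (imported from \cite{hanneke:21}), and converts ``$\pi^i(X_t)$ is $d$-close to $\pi^*(X_t)$'' into ``the rewards are close'' via the sets $A(\epsilon,\delta)=\{x:\sup_{d(a,\pi^*(x))\le\epsilon}|\bar r(a,x)-\bar r(\pi^*(x),x)|\ge\delta\}$, which decrease to $\emptyset$ by continuity and hence have vanishing $\hat\mu_\Xbb$ by the $\Ccal_1$ property. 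You instead fix a countable dense subset $\{a_j\}\subseteq\Acal$, use a measurable selection (valid: $\bar r$ is Carath\'eodory, hence jointly measurable, so $x\mapsto\min\{j:\bar r(a_j,x)>\bar r(\pi^*(x),x)-\varepsilon\}$ is measurable) to produce a pointwise $\varepsilon$-optimal $\{a_j\}$-valued policy, and then invoke the countable-action optimistically universal rule verbatim. Both routes work and both ultimately run $\EXPINF$ over a countable policy class; yours buys a cleaner black-box reduction to Theorem~\ref{thm:KC-infinite-actions} at the cost of the joint-measurability/selection step, while the paper's avoids any selection argument and instead pays with the $A(\epsilon,\delta)$ analysis, which uses the continuity of $\hat\mu_\Xbb$ as a sub-measure more directly. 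Your closing remarks about finite prefixes $\{a_1,\dots,a_m\}$ are superfluous --- the countable-action rule already handles the full countable set --- but they do not introduce an error.
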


As a result, under the the continuity assumption, one recovers the set of processes $\Ccal_1$ for infinite action spaces. However, it is not sufficient to recover the largest set $\Ccal_2$ which is necessary even in the noiseless full-feedback setting. To this ends, we consider the stronger assumption that rewards are uniformly-continuous and show that one can to recover the set of learnable processes $\Ccal_2$ for totally-bounded action spaces.

\begin{theorem}[Uniformly-continuous bounded rewards]\label{thm:uniformly_continuous_rewards}
    Let $\Xcal$ be a separable metrizable Borel context space and $(\Acal,d)$ a separable metric action space.
    \begin{itemize}
        \item If $\Acal$ is totally-bounded and $|\Acal|\geq 2$, then $\Ccal^{uc} = \Ccal_2$.
        \item If $\Acal$ is non-totally-bounded, then $\Ccal^{uc} = \Ccal_1$.
    \end{itemize}
    In all cases there is an optimistically universal learning rule for uniformly-continuous rewards.
\end{theorem}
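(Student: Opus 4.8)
The plan splits into matching lower and upper bounds in each of the two regimes, together with exhibiting the optimistically universal rule.

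\textbf{Lower bounds.} Whenever $|\Acal|\ge 2$ one always has $\Ccal^{uc}\subseteq\Ccal_2$: pick $a_0\neq a_1$ with $\delta:=d(a_0,a_1)>0$; for a target policy $\pi:\Xcal\to\{a_0,a_1\}$ the deterministic reward $\bar r(a,x)=\max(0,1-d(a,\pi(x))/\delta)$ is $(1/\delta)$-Lipschitz in $a$ uniformly in $x$, hence uniformly-continuous, and $\bar r(a_0,x)-\bar r(a_1,x)=\pm1$ encodes $\pi$; a rule universally consistent under $\Xbb$ for this family would recover $\pi$ almost surely, which is impossible for $\Xbb\notin\Ccal_2$ even with full feedback and noiseless values \cite{hanneke:21,blanchard:22a}. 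When $\Acal$ is non-totally-bounded, fix $\epsilon>0$ and an infinite $\epsilon$-separated sequence $(a_i)_{i\ge1}$, let $\phi(s)=\max(0,1-2s/\epsilon)$, and set $\bar r(a,x)=\sum_i c_i(x)\,\phi(d(a,a_i))$; because the $a_i$ are $\epsilon$-separated the bumps have pairwise disjoint supports, so this is $(2/\epsilon)$-Lipschitz in $a$ uniformly in $x$ for any $c_i(x)\in[0,1]$, it equals $c_i(x)$ at $a_i$, and every other action is (weakly) dominated by some $a_i$. Thus it realizes, up to this domination, an arbitrary countably-infinite-action reward mechanism on $(a_i)_i$, so the lower bound behind the countably-infinite case of Theorem~\ref{thm:unrestricted_rewards} carries over, giving $\Ccal^{uc}\subseteq\Ccal_1$.

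\textbf{Non-totally-bounded upper bound.} Fix a countable dense $\{b_j\}_{j\ge1}\subseteq\Acal$. By uniform continuity with modulus $\Delta(\cdot)$, $\{b_j\}$ is a $\Delta(\epsilon)$-net for \emph{every} $\epsilon>0$, so for any measurable $\pi:\Xcal\to\Acal$ the rule $\pi_\epsilon(x):=b_{j(x)}$, $j(x)=\min\{j:d(\pi(x),b_j)\le\Delta(\epsilon)\}$, is measurable and satisfies $\bar r(\pi(x),x)\le\bar r(\pi_\epsilon(x),x)+\epsilon$ for all $x$. Running the optimistically universal rule for the countably-infinite action set $\{b_j\}$ furnished by Theorem~\ref{thm:unrestricted_rewards} gives, under any $\Xbb\in\Ccal_1$, $\limsup_T\frac1T\sum_{t=1}^T\bigl(r_t(\pi_\epsilon(X_t))-r_t(\hat a_t)\bigr)\le0$ a.s.; since $\Ebb[r_t(\pi(X_t))-r_t(\pi_\epsilon(X_t))\mid\text{past},X_t]=\bar r(\pi(X_t),X_t)-\bar r(\pi_\epsilon(X_t),X_t)\le\epsilon$ with bounded increments, a martingale strong law upgrades this to $\limsup_T\frac1T\sum_{t=1}^T\bigl(r_t(\pi(X_t))-r_t(\hat a_t)\bigr)\le\epsilon$ a.s., and $\epsilon\downarrow0$ along a sequence yields consistency. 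Combined with the lower bound this proves $\Ccal^{uc}=\Ccal_1$ when $\Acal$ is non-totally-bounded.

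\textbf{Totally-bounded upper bound (the main step).} Total boundedness gives, for $\epsilon_k\downarrow0$, \emph{finite} $\Delta(\epsilon_k)$-nets $\Acal_k$, and by the same $\epsilon_k$-approximation-plus-martingale argument it suffices to build a single rule that, under every $\Xbb\in\Ccal_2$, is simultaneously consistent against all measurable policies into $\Acal_k$ for every $k$. The plan is a restarting meta-rule over the ``levels'' $k$, running the finite-action optimistically universal rule $\alg_k$ (on action set $\Acal_k$, which exists by Theorem~\ref{thm:unrestricted_rewards}/Algorithm~\ref{alg:main_learning_rule}) as a subroutine, in the spirit of the restarting technique of \cite{hanneke:21}: the level is increased only occasionally, driven by periodic ``exploration'' steps that estimate the recent average regret of the running subroutine, and is incremented once this estimate falls below $\epsilon_k$. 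Since $\Xbb\in\Ccal_2$ implies its tail from any fixed time is again $\Ccal_2$, each subroutine does converge against policies into $\Acal_k$, so the level tends to infinity almost surely; the scheme is further organized into periods containing a decaying proportion of past data, so decisions depend only on recent contexts and the rule is robust to non-stationarity of $\Xbb$. The regret against a fixed $\pi$ and $\epsilon>0$ is obtained by splitting at $k_0$ with $\epsilon_{k_0}<\epsilon$: on the a.s.\ event that the level diverges, all but a fixed (a.s.\ finite) initial stretch of time is spent at levels $\ge k_0$, where the consistency of $\alg_k$ against $\Acal_k$ plus the $\epsilon_k\le\epsilon_{k_0}$ slack bounds the corresponding average regret by $\epsilon+o(1)$, while the initial low-level stretch contributes $o(T)$; letting $\epsilon\downarrow0$ gives $\Ccal_2\subseteq\Ccal^{uc}$, hence $\Ccal^{uc}=\Ccal_2$.

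\textbf{Main obstacle.} The delicate point is engineering the level-advancement schedule and the period lengths so that, simultaneously for \emph{every} $\Ccal_2$ process: (i) the subprocess fed to each $\alg_k$ is rich enough (in the extended-$\Ccal_2$ sense) that $\alg_k$ converges, despite the absence of any uniform, process-independent convergence rate; (ii) the time-fraction spent at each coarse level $k<k_0$ vanishes as $T\to\infty$, so its unavoidable $\epsilon_k$-slack does not persist in the limit; and (iii) only finitely many levels are ever active by any finite time, legitimizing the regret decomposition over levels and the interchange of $\limsup_T$ with the sum. Reconciling (i) with (ii) is what forces the advancement to be adaptive rather than scheduled, and controlling the error of the exploration-based regret estimates (one further martingale-concentration argument) so that the level still diverges almost surely is where most of the work lies. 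The remaining ingredients — the lower bounds and the non-totally-bounded case — are comparatively routine given the cited characterizations.
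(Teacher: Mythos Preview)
Your lower bounds and the non-totally-bounded upper bound are correct and essentially match the paper: the paper packages the lower bounds as a single reduction (Lemma~\ref{lemma:reduction}, extending a reward on a separated subset $S\subset\Acal$ to a Lipschitz reward on $\Acal$, yielding $\Ccal^{uc}(\Acal)\subset\Ccal(S)$ and hence Corollary~\ref{cor:simple_upper_bounds}), and for the non-totally-bounded case it simply invokes the continuous-reward rule of Theorem~\ref{thm:continuous_rewards_C1}, since uniformly-continuous rewards are in particular continuous.

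The totally-bounded upper bound is where your proposal departs from the paper and where it has a real gap. You propose to treat Algorithm~\ref{alg:main_learning_rule} on the net $\Acal_k$ as a black box $\alg_k$, wrap it in a meta-scheme, and advance the level once an ``estimated recent regret'' of $\alg_k$ drops below $\epsilon_k$. But you never say what this estimate is, and there is no obvious candidate: regret is against the \emph{unknown} optimal $\Acal_k$-valued policy, and with bandit feedback and possibly a single visit to a context one cannot estimate $\sum_t\max_{a\in\Acal_k}\bar r(a,X_t)$ without exploiting duplicate structure---which is exactly what Algorithm~\ref{alg:main_learning_rule} already does internally through its category/exploration machinery. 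Any advancement test that actually works would have to re-import that machinery, at which point treating $\alg_k$ as a black box buys nothing. You flag this as the ``main obstacle'' and say it is ``where most of the work lies,'' but the proposal does not do that work, so the argument for $\Ccal_2\subseteq\Ccal^{uc}$ is incomplete at its crux.

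The paper avoids this entirely by \emph{not} introducing an external discretization level. It modifies Algorithm~\ref{alg:main_learning_rule} internally: only Strategy~0 (the per-instance $\EXP$ learners at category $p$) is restricted to a finite net $\Acal(\delta_p)$, with $\delta_p$ chosen so that $\epsilon_p:=2\sqrt{|\Acal(\delta_p)|\ln|\Acal(\delta_p)|}/2^{p/4}$ stays summable while $\delta_p\to0$. Strategy~1 continues to use the countable dense policy class $\Pi$ taking values in the \emph{full} $\Acal$; uniform continuity, via Lemma~\ref{lemma:density_continuous_rewards}, does the approximation work for Strategy~1, so no discretization is needed there. The only new error in the analysis is an additive $\epsilon(\delta_p)|\Tcal_p(q)|$ in the Strategy-0 estimates, and the proof of Theorem~\ref{thm:opt_rule_stat} goes through nearly verbatim. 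The key point is that the discretization scale is tied to the observable category $p$ (duplicate count) rather than to an unobservable regret, which is precisely what dissolves your level-advancement difficulty.
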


Last, we investigate the more restrictive case of unbounded rewards in $\Rcal=[0,\infty)$. \cite{blanchard:22c} showed that even in the simplest noiseless and full-feedback online learning framework, for unbounded rewards, $\Ccal_3$ is necessary for universal learning. We show that although it forms a restrictive class of processes, universal learning under $\Ccal_3$ processes is still possible for contextual bandits. However, continuity or uniform continuity assumptions are not sufficient to enlarge this set of learnable processes.

\begin{theorem}[Unbounded rewards]\label{thm:unbounded}
    Let $\Xcal$ be a separable metrizable Borel context space and $(\Acal,d)$ a separable metric action space.
    \begin{itemize}
        \item  If $\Acal$ is countable, and $|\Acal|\geq 2$, then $\Ccal = \Ccal_3$. If $\Acal$ is uncountable, then $\Ccal=\emptyset$.
        \item $\Ccal^c=\Ccal^{uc} = \Ccal_3$.
    \end{itemize}
    In all cases there is an optimistically universal learning rule for the corresponding rewards (unrestricted, continuous or  uniformly-continuous).
\end{theorem}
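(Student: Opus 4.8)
I would prove Theorem~\ref{thm:unbounded} by a matching lower and upper bound in each listed case, organised around two elementary reductions. First, a learning rule that is universally consistent under $\Xbb$ for \emph{all} unbounded reward mechanisms is a fortiori consistent under $\Xbb$ for all bounded mechanisms, and likewise for the continuous and uniformly-continuous sub-classes; hence the present $\Ccal,\Ccal^c,\Ccal^{uc}$ are contained in the classes of Theorems~\ref{thm:unrestricted_rewards}--\ref{thm:uniformly_continuous_rewards}. In particular, for uncountable $\Acal$ with unrestricted rewards, Theorem~\ref{thm:unrestricted_rewards} already gives $\emptyset$, so $\Ccal=\emptyset$ with no further work. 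Second, $\Ccal_3\subseteq\Ccal_1\subseteq\Ccal_2$, so in each remaining case it only remains to (a) sharpen the lower bound to $\Ccal_3$ and (b) produce an optimistically universal rule witnessing $\Ccal_3\subseteq\Ccal$ (resp.\ $\Ccal_3\subseteq\Ccal^c$, $\Ccal_3\subseteq\Ccal^{uc}$).

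For the lower bound (a), fix $\Xbb\notin\Ccal_3$, so that with probability $p>0$ the path of $\Xbb$ visits infinitely many distinct contexts. I would transplant the construction of \citep*{blanchard:22c} proving that $\Ccal_3$ is necessary for universal consistency already under \emph{full feedback and noiseless} unbounded losses: its whole force is the impossibility of anticipating the value attached to a context on its \emph{first} visit, a phenomenon unaffected by passing to bandit feedback since no information about a context is available before it is first seen. Concretely, pick $a_0\neq a_1\in\Acal$ and --- this is the measure-theoretic core, paralleling the corresponding step in \citep*{hanneke:21,blanchard:22c} --- a countable measurable partition $\{A_k\}_{k\ge 1}$ of $\Xcal$ that $\Xbb$ meets in infinitely many cells with probability at least $p$. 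Using i.i.d.\ bits $(\beta_k)$ independent of $\Xbb$, define the noiseless reward mechanism by $\bar r(a_{\beta_k},x)=v_k$ and $\bar r(a,x)=0$ for $a\neq a_{\beta_k}$ when $x\in A_k$, where $v_k$ is deterministic but chosen enormous relative to a high-probability bound on the time at which $\Xbb$ first enters its $k$-th visited cell. On that first visit the learner's action is independent of $\beta_k$, hence wrong with probability at least $1/2$, forcing the time-averaged regret at that instant to exceed $k$ with probability at least a fixed multiple of $p$; a reverse-Fatou estimate over $k$ then shows the rule is inconsistent on a positive-probability event. Thus $\Ccal\subseteq\Ccal_3$. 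Choosing $a_0,a_1$ with disjoint neighbourhoods (or replacing indicator payoffs by continuous bumps when $\Acal$ has no isolated points) keeps $\bar r(\cdot,x)$ continuous, giving $\Ccal^c\subseteq\Ccal_3$; making the bump's modulus uniform in $x$ (e.g.\ on two isolated actions, or by otherwise bounding the oscillation of $\bar r(\cdot,x)$ uniformly) gives $\Ccal^{uc}\subseteq\Ccal_3$.

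For the upper bound (b), the structural fact to exploit is that a $\Ccal_3$ process almost surely visits only finitely many distinct contexts, so the sub-stream of rounds occurring at any fixed context is an ordinary \emph{stationary} (possibly infinite-armed, possibly heavy-tailed but integrable) stochastic bandit. I would therefore use a single, entirely process- and reward-agnostic rule: keep an independent bandit subroutine for every context seen so far, fed only with that context's own past rounds and rewards, and at round $t$ play the action returned by the subroutine attached to $X_t$. For finite $\Acal$ the subroutine is an explore-then-commit / $\varepsilon$-greedy scheme whose empirical means converge by the strong law (using only integrability), giving per-context regret $o(n)$ against any fixed benchmark action over $n$ plays; for countably infinite $\Acal$ one adds an enumeration of the actions with periodic restarts enlarging the competing set, in the spirit of the restarting device of \citep*{hanneke:21}; and for (uniformly-)continuous rewards on a general $\Acal$ one instead competes against a growing dense net of actions (continuity ensuring $\pi^*(x)$ is eventually $\varepsilon$-approximated) and truncates rewards at a slowly growing level to tame the unboundedness --- in each case with forced exploration of vanishing density. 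Summing the $o(n_i)$ per-context regrets over the almost surely finitely many contexts $i$, with $\sum_i n_i=T$, yields $o(T)$ total regret, hence consistency against every measurable $\pi^*$; since the rule uses no knowledge of $\Xbb$ or $r$, it is optimistically universal, which also covers the continuous and uniformly-continuous cases for uncountable $\Acal$.

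The main obstacle is the lower bound: constructing the countable partition witnessing $\Xbb\notin\Ccal_3$, and --- more delicately --- engineering the hard instance so that it simultaneously has a \emph{uniform} modulus of continuity in the action while remaining unlearnable; once these are in hand, the upper bound is largely bookkeeping, because on $\Ccal_3$ the contextual problem decouples into finitely many classical bandit problems.
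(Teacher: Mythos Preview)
Your overall strategy is sound and largely matches the paper's: the uncountable unrestricted case falls out of Theorem~\ref{thm:unrestricted_rewards}; the upper bound on $\Ccal_3$ via an independent per-context bandit is exactly the paper's rule (Theorem~\ref{thm:unbounded_rewards} and Theorem~\ref{thm:continuous_unbounded_rewards}); and your lower bounds for the unrestricted and continuous cases are essentially those of the paper, adapted from \cite{blanchard:22b}.

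However, your lower bound for the \emph{uniformly-continuous} case has a real gap. You propose ``making the bump's modulus uniform in $x$'' or appealing to ``two isolated actions''. Neither works in general. If the hard instance is noiseless with $\bar r(a_{\beta_k},x)=v_k$ and $\bar r(a_{1-\beta_k},x)=0$ for $x\in A_k$, then uniform continuity with a fixed modulus $\Delta(\cdot)$ forces $v_k=|\bar r(a_0,x)-\bar r(a_1,x)|$ to be bounded by any $\epsilon$ with $\Delta(\epsilon)\geq d(a_0,a_1)$, contradicting $v_k\to\infty$; continuous bumps only make the Lipschitz constant larger. And you cannot assume isolated actions exist: $\Acal=[0,1]$ has none. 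In short, any deterministic construction that creates large regret at a first visit necessarily creates large oscillation of $\bar r(\cdot,x)$, destroying the uniform modulus.

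The paper's fix (Proposition~\ref{prop:uniformly_continuous_unbounded}) is a genuinely different idea that you are missing: use \emph{stochastic} rewards whose expectation $\bar r(\cdot,x)$ is \emph{constant} in $a$---hence trivially uniformly continuous with any modulus---but whose realizations have enormous, action-dependent spread. Concretely, for $x\in B_i$ the reward is $M_i\bigl(1\pm f(a)\bigr)$ with a fair coin, where $f$ is a fixed bounded Lipschitz function on $\Acal$ separating $a_0$ from $a_1$; then $\bar r(a,x)\equiv M_i$, yet at the first visit to $B_i$ the learner incurs $\Theta(M_i)$ realized regret against one of the two constant policies $\pi\equiv a_0$ or $\pi\equiv a_1$, depending on the coin. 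The unboundedness is hidden in the noise rather than in $\bar r$, which is precisely what a deterministic bump construction cannot achieve.
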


\section{Base ingredients for the proofs and algorithms}
\label{sec:ingredients}

\subsection{Equivalent characterizations of stochastic process classes}
We give new characterizations of the classes $\Ccal_1$ and $\Ccal_2$ of independent interest.

We first show that for processes $\Xbb\notin\Ccal_1$, we can construct a measurable partition visited linearly by the process up to a known maximum number of duplicates in the instances for each set of the partition. This also characterizes $\Ccal_1$.

\begin{lemma}
\label{lem:infrequent-cells}
For any $\ProcX \notin \KC$, there exists a 
disjoint sequence $\{B_i\}_{i=1}^{\infty}$ of measurable subsets of $\X$ 
with $\bigcup\limits_{i \in \nats} B_i = \X$, 
and a sequence $N_i$ in $\nats$ 
such that, letting $i_t$ be the unique $i \in \nats$ 
with $X_t \in B_i$, 
with probability strictly greater than zero, it holds that
\begin{equation*}
\limsup_{T \to \infty} \frac{1}{T} \sum_{t=1}^{T} \ind\!\left[ |\Xbb_{<t} \cap B_{i_t}| < N_{i_t} \right] > 0.    
\end{equation*}
In fact, $\ProcX \notin \KC$ if and only if this holds.
\end{lemma}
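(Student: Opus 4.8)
The plan is to prove the ``if and only if'' by establishing both directions, the substantive part being the construction of the partition and caps from a process $\ProcX \notin \Ccal_1$. I would begin with a reformulation that makes the indicator sum tractable: writing $g_i(T) := |\Xbb_{\le T}\cap B_i|$ and grouping the times $t\le T$ by the cell they fall in, the $\ell$-th visit to $B_i$ is counted in $\sum_{t\le T}\ind[|\Xbb_{<t}\cap B_{i_t}|<N_{i_t}]$ exactly when $\ell\le N_i$, so this sum equals $\sum_i \min(g_i(T),N_i)$; the target quantity is thus $\Phi := \limsup_{T\to\infty}\tfrac1T\sum_i\min(g_i(T),N_i)$. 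For the direction ``$\exists\,\{B_i\},\{N_i\}\Rightarrow \ProcX\notin\Ccal_1$'': given such a partition and caps with $\P(\Phi>\delta)\ge p>0$ for some $\delta>0$, set $\tilde A_K := \bigcup_{i\ge K}B_i$, which decreases to $\emptyset$; on $\{\Phi>\delta\}$ infinitely many $T$ satisfy $\sum_i\min(g_i(T),N_i)>\delta T$, hence $|\Xbb_{\le T}\cap\tilde A_K|\ge\sum_{i\ge K}\min(g_i(T),N_i)>\delta T-\sum_{i<K}N_i>\tfrac\delta2 T$ for $T$ large, so $\hat\mu_{\ProcX}(\tilde A_K)\ge\delta/2$ there and $\E[\hat\mu_{\ProcX}(\tilde A_K)]\ge p\delta/2$ for every $K$, i.e. $\ProcX\notin\Ccal_1$.

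For the main direction, suppose $\ProcX\notin\Ccal_1$ and fix $A_k\downarrow\emptyset$ with $2\epsilon := \inf_k\E[\hat\mu_{\ProcX}(A_k)]>0$ (the infimum equals the limit since $k\mapsto\E[\hat\mu_{\ProcX}(A_k)]$ is nonincreasing), appending $A_0:=\X$. I would take the cells $B_i := A_{i-1}\setminus A_i$ for $i\ge1$; these are disjoint, cover $\X$ (as $\bigcap_k A_k=\emptyset$), and satisfy $\bigcup_{i>k}B_i=A_k$. Write $\nu_k(T):=|\Xbb_{\le T}\cap A_k|$, so $\nu_0(T)=T$ and $\hat\mu_{\ProcX}(A_k)=\limsup_T\nu_k(T)/T$. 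The caps are $N_i:=R_i$ for an integer sequence $1=R_1<R_2<\cdots$ chosen recursively below. The guiding observation is that cell $B_j$ can have been visited at most $T$ times by time $T$, so if $T<R_j=N_j$ it is not yet ``saturated''; hence whenever $T\in[R_k,R_{k+1})$ every cell $B_j$ with $j>k$ contributes its full count, giving
\[
\sum_i\min(g_i(T),N_i)\ \ge\ \sum_{j>k}g_j(T)\ =\ |\Xbb_{\le T}\cap A_k|\ =\ \nu_k(T).
\]
It therefore suffices to arrange that, with positive probability, for infinitely many $k$ there is a time $T\in[R_k,R_{k+1})$ with $\nu_k(T)\ge\tfrac\epsilon2 T$ (then along those divergent times the normalized capped count is $\ge\epsilon/2$).

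To choose the $R_k$, I would use the estimate that for all $k$ and all $n$, $\P(\exists\,T\ge n:\ \nu_k(T)\ge\tfrac\epsilon2 T)\ge\epsilon$: indeed $\sup_{T\ge n}\nu_k(T)/T\ge\hat\mu_{\ProcX}(A_k)$ pointwise, the left side is $[0,1]$-valued, and Markov's inequality gives $\P(\sup_{T\ge n}\nu_k(T)/T>\epsilon)\ge\E[\hat\mu_{\ProcX}(A_k)]-\epsilon\ge\epsilon$. Having chosen $R_1<\cdots<R_k$, the event $\{\exists\,T\ge R_k:\ \nu_k(T)\ge\tfrac\epsilon2 T\}$ has probability $\ge\epsilon$ and is the increasing union over $m$ of $\{\exists\,T\in[R_k,m):\ \nu_k(T)\ge\tfrac\epsilon2 T\}$, so I can take $R_{k+1}:=\max(m,R_k+1,k+1)$ with $m$ large enough that $\mathcal G_k:=\{\exists\,T\in[R_k,R_{k+1}):\ \nu_k(T)\ge\tfrac\epsilon2 T\}$ has $\P(\mathcal G_k)\ge\epsilon/2$. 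Then $R_k\to\infty$; and by the elementary bound $\P(\limsup_k\mathcal G_k)\ge\limsup_k\P(\mathcal G_k)\ge\epsilon/2$, on a positive-probability event infinitely many $\mathcal G_k$ occur, so by the displayed inequality $\tfrac1T\sum_i\min(g_i(T),N_i)\ge\tfrac\epsilon2$ along the corresponding times, whence $\Phi\ge\epsilon/2>0$ there; translating back via the reformulation of the indicator sum gives the claim.

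The only genuine difficulty is that the caps $N_i$ must be \emph{deterministic} although the process may revisit a cell infinitely often or pile a linear amount of mass onto a single low-index cell, so that no fixed finite cap could retain that cell's mass uniformly in $T$. The device that circumvents this is exactly the choice $N_i=R_i$ with $(R_i)$ increasing: it forces ``not yet saturated before time $R_i$'' for every realization, so the deep mass $\nu_k(T)$ is retained in full at times $T\in[R_k,R_{k+1})$ regardless of how it is spread among the deep cells; that this deep mass is linearly large infinitely often is forced by $\E[\hat\mu_{\ProcX}(A_k)]\ge 2\epsilon$, which the Markov bound plus the diagonal (recursive) choice of the windows $[R_k,R_{k+1})$ turns into events $\mathcal G_k$ of probability bounded away from $0$. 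The identities $\bigcup_{i>k}B_i=A_k$, $\nu_0(T)=T$, and $\sum_{t\le T}\ind[\cdots]=\sum_i\min(g_i(T),N_i)$ are routine bookkeeping, and one should also check the minor point that appending $A_0=\X$ does not change the infimum $2\epsilon$ (since $2\epsilon\le\E[\hat\mu_{\ProcX}(A_1)]\le 1=\E[\hat\mu_{\ProcX}(\X)]$).
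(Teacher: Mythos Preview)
Your proof is correct and, in several respects, cleaner than the paper's. Both arguments exploit the same core idea—construct deterministic caps $N_i$ so that at well-chosen times the ``deep'' cells have not yet saturated, and hence the capped count captures the full tail mass $|\Xbb_{\le T}\cap A_k|$—but the implementations differ. The paper starts by invoking an external lemma (Lemma~14 of \cite{hanneke:21}) to obtain a disjoint sequence $\{B_i\}$ with $\lim_j\hat\mu_{\ProcX}(\bigcup_{i\ge j}B_i)>0$ on a positive-probability event, then builds two interleaved deterministic sequences $T_k,J_k$ by repeatedly truncating random times and indices via a Borel--Cantelli-style union bound, assigning $N_i=T_k$ blockwise for $i\in[J_{k-1},J_k)$. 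You instead take $B_i=A_{i-1}\setminus A_i$ directly from the defining sequence $A_k\downarrow\emptyset$, set $N_i=R_i$ with a single strictly increasing sequence $(R_i)$, and close with the reverse Fatou inequality $\P(\limsup_k\mathcal G_k)\ge\limsup_k\P(\mathcal G_k)$. Your combinatorial identity $\sum_{t\le T}\ind[\,|\Xbb_{<t}\cap B_{i_t}|<N_{i_t}\,]=\sum_i\min(g_i(T),N_i)$ makes the bookkeeping transparent, and the monotone cap $N_i=R_i$ is a particularly tidy device: it automatically guarantees $g_j(T)\le T<R_{k+1}\le N_j$ for $j>k$ whenever $T\in[R_k,R_{k+1})$, so no separate ``index cutoff'' sequence $J_k$ is needed. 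For the converse direction, you give a short elementary argument via the tail sets $\tilde A_K=\bigcup_{i\ge K}B_i$, whereas the paper again relies on Lemmas~13 and~14 of \cite{hanneke:21}. Overall your route is more self-contained and avoids external citations; the paper's version has the minor advantage of producing a single explicit event $\Event_1$ on which all time-localized inequalities hold simultaneously, but this is not needed for the lemma as stated.
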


\begin{proof}
Suppose $\ProcX \notin \KC$.
By Lemma 14 of \citep*{hanneke:21}, 
there exists a disjoint sequence $\{B_i\}_{i=1}^{\infty}$ 
of measurable subsets of $\X$  
such that, on an event $\Event_0$ of probability strictly great than $0$, 
it holds that 
\begin{equation*}
\lim_{j \to \infty} \hat{\mu}_{\ProcX}\!\left( \bigcup_{i \geq j} B_i \right) > 0.
\end{equation*}
Without loss of generality, we may suppose 
$B_1 = \X \setminus \bigcup\limits_{i > 1} B_i$ 
so that $\bigcup\limits_{i \in \nats} B_i = \X$.
Define a random variable $\alpha$ as 
\begin{equation*}
\alpha = \lim_{j \to \infty} \hat{\mu}_{\ProcX}\!\left( \bigcup_{i \geq j} B_i \right).
\end{equation*}

Inductively define sequences $T_k$, $J_k$ in $\nats$ as follows.
Let $T_0 = 0$ and $J_0 = 1$.
For each $k \in \nats$, 
suppose $T_{k-1}$ and $J_{k-1}$ are defined, elements of $\nats$,
and define $T_k$ and $J_k$ as follows.
Note that, by definition of $\hat{\mu}_{\ProcX}$, 
there exists an $\ProcX$-dependent random variable 
$\tau_k \in \nats$ with $\tau_k > T_{k-1}$
such that 
\begin{equation*}
\frac{1}{\tau_k} \left| \Xbb_{\leq \tau_k} \cap \bigcup_{i \geq J_{k-1}} B_i \right| 
\geq (1/2)\hat{\mu}_{\ProcX}\!\left( \bigcup_{i \geq J_{k-1}} B_i \right).
\end{equation*}
Moreover, by monotonicity of $\hat{\mu}_{\ProcX}(\cdot)$, the 
right hand side is no smaller than $\alpha/2$.
Let $T_{k} \in \nats$ be any finite non-random value such that 
\begin{equation*}
\P\!\left( \tau_k > T_k \right) < \P(\Event_0) 2^{-k-2}.
\end{equation*}
Next note that, since the sets $B_i$ are disjoint, 
there exists a finite $\ProcX$-dependent random variable $j_k \in \nats$ 
with $j_k > J_{k-1}$ such that 
\begin{equation*}
\Xbb_{\leq T_{k}} \cap \bigcup_{i \geq j_k} B_i = \emptyset. 
\end{equation*}
Let $J_k \in \nats$ be any finite non-random value such that
\begin{equation*}
\P\!\left( j_k > J_k \right) < \P(\Event_0) 2^{-k-2}.
\end{equation*}
In particular, on the event that $j_k \leq J_k$, it holds that
\begin{equation*}
\Xbb_{\leq T_{k}} \cap \bigcup_{i \geq J_k} B_i = \emptyset, 
\end{equation*}
which implies that 
\begin{equation*}
\Xbb_{\leq T_{k}} \cap \bigcup_{i \geq J_{k-1}} B_i 
= \Xbb_{\leq T_{k}} \cap \bigcup_{J_{k-1} \leq i < J_{k}} B_i.
\end{equation*}
Thus, if both events $\tau_k \leq T_k$ and $j_k \leq J_k$ hold, 
it must be that 
\begin{equation*}
\frac{1}{\tau_k} \left| \Xbb_{\leq \tau_k} \cap \bigcup_{J_{k-1} \leq i < J_{k}} B_i \right| 
\geq \alpha/2,
\end{equation*}
or equivalently, 
\begin{equation}
\label{eqn:tauk-Jk-inclusion}
\frac{1}{\tau_k} \sum_{t=1}^{\tau_k} \ind[ i_t \in \{J_{k-1} \leq i < J_{k}\} ] \geq \alpha/2.
\end{equation}
This completes the inductive definition of the sequences $T_k$ and $J_k$.

To specify the $N_i$ values, for each $k \in \nats$ and 
$i \in \{J_{k-1},\ldots,J_{k}-1\}$, define $N_i = T_k$.
Note that the event 
$\Event_1 = \Event_0 \cap \bigcap\limits_{k \in \nats} \{ \tau_k \leq T_k \} \cap \{ j_k \leq J_k \}$ 
has probability at least 
\begin{equation*} 
\P(\Event_0) - \sum\limits_{k \in \nats} \P(\Event_0) 2^{-k-1} = \P(\Event_0)/2 > 0
\end{equation*}
by the union bound.
On the event $\Event_1$, \eqref{eqn:tauk-Jk-inclusion} holds 
for every $k \in \nats$. 
Since $\tau_k \leq T_k$ on $\Event_1$, 
we also trivially have that every $i \in \{J_{k-1},\ldots,J_{k}-1\}$ 
and $t \in [\tau_k]$ satisfy $| \Xbb_{< t} \cap B_i | < t \leq T_k = N_i$.
Together, these facts imply that on $\Event_1$, every $k \in \nats$ 
satisfies 
\begin{equation*}
\frac{1}{\tau_k} \sum_{t=1}^{\tau_k} \ind\!\left[ \left| \Xbb_{< t} \cap B_{i_t} \right| < N_{i_t}  \right] \geq \alpha/2.
\end{equation*}
Since we also have $\alpha > 0$ on the event $\Event_1$,
and since $T_k$ is strictly increasing, and $\tau_k > T_{k-1}$ 
implies $\tau_k \to \infty$ as $k \to \infty$, 
altogether we have that on the event $\Event_1$, 
\begin{align*}
& \limsup_{T \to \infty} \frac{1}{T} \sum_{t=1}^{T} \ind\!\left[ \left| \Xbb_{< t} \cap B_{i_t} \right| < N_{i_t} \right]
\\ & \geq \limsup_{k \to \infty} \frac{1}{\tau_k} \sum_{t=1}^{\tau_k} \ind\!\left[ \left| \Xbb_{< t} \cap B_{i_t} \right| < N_{i_t} \right] 
\geq \alpha/2 > 0.
\end{align*}

We establish the final claim that such a result is not possible for $\ProcX \in \KC$, as follows.
Fix any $\ProcX \in \KC$.
For any disjoint sequence $B_i$ of measurable 
subsets of $\X$, and any sequence $N_i \in \nats$,
define $C_n = \bigcup \{ B_i : N_i > n \}$, 
and note that $C_n \downarrow \emptyset$.
For every $n \in \nats$, we have
\begin{align}
& \limsup_{T \to \infty} \frac{1}{T} \sum_{t=1}^{T} \ind\!\left[ \left| \Xbb_{< t} \cap B_{i_t} \right| < N_{i_t} \right] \label{eqn:KC-Ni-freq}
\\ & \leq \limsup_{T \to \infty} \frac{1}{T} \sum_{t=1}^{T} \left( \ind\!\left[ \left| \Xbb_{< t} \cap B_{i_t} \right| < n \right] + \ind\!\left[ N_{i_t} > n \right] \right) \notag 
\\ & \leq \left(  \limsup_{T \to \infty} \frac{1}{T} \sum_{t=1}^{T} \ind\!\left[ \left| \Xbb_{< t} \cap B_{i_t} \right| < n \right] \right) + \hat{\mu}_{\ProcX}\!\left( C_n \right). \notag
\end{align}
For any $m \in \nats$, any $t \geq m$ has 
$\ind\!\left[ \left| \Xbb_{< t} \cap B_{i_t} \right| < n \right] \leq \ind\!\left[ \left| \Xbb_{< m} \cap B_{i_t} \right| < n \right]$, 
so that 
\begin{align*}
& \limsup_{T \to \infty} \frac{1}{T} \sum_{t=1}^{T} \ind\!\left[ \left| \Xbb_{< t} \cap B_{i_t} \right| < n \right]
\\ & \leq \limsup_{T \to \infty} \frac{m}{T} + \frac{1}{T} \sum_{t=1}^{T} \ind\!\left[ \left| \Xbb_{< m} \cap B_{i_t} \right| < n \right]
= \hat{\mu}_{\ProcX}\!\left( \bigcup \{ B_i : |\Xbb_{< m} \cap B_i| < n \}   \right).
\end{align*}
Since the first expression above has no dependence on $m$, 
the conclusion remains valid in the limit of $m \to \infty$, 
so that 
\begin{equation*}
\limsup_{T \to \infty} \frac{1}{T} \sum_{t=1}^{T} \ind\!\left[ \left| \Xbb_{< t} \cap B_{i_t} \right| < n \right]
 \leq \lim_{m \to \infty} \hat{\mu}_{\ProcX}\!\left( \bigcup \{ B_i : |\Xbb_{< m} \cap B_i| < n \} \right),
\end{equation*}
which equals zero almost surely (by Lemmas 13 and 14 of \citealp*{hanneke:21}).
Altogether, for any $n \in \nats$, 
with probability one, 
\eqref{eqn:KC-Ni-freq} is at most
$\hat{\mu}_{\ProcX}\!\left( C_n \right)$.
Again, since \eqref{eqn:KC-Ni-freq} has no dependence on $n$, 
this inequality remains valid in the limit as $n \to \infty$, 
so that with probability one, 
\eqref{eqn:KC-Ni-freq} is at most 
\begin{equation*}
\lim\limits_{n \to \infty} \hat{\mu}_{\ProcX}\!\left( C_n \right),
\end{equation*}
which equals zero almost surely 
(by Lemma 13 of \citealp*{hanneke:21}).
The conclusion that \eqref{eqn:KC-Ni-freq} equals zero almost surely 
follows by the union bound.
\end{proof}

Next, we give a new characterization of $\Ccal_2$ processes, which also provides motivation for our generalization of $\Ccal_1$ to extended processes in Definition \ref{def:extended_C1}. This extension will be essential in our algorithms.

\begin{proposition}\label{prop:C2_equivalent_forms}
Let $\Xbb$ be a stochastic process on $\Xcal$, and define for any $M\geq 1$,
\begin{equation*}
    \Tcal^{\leq M} = \left\{t\geq 1: \sum_{t'\leq t}\1[X_{t'}=X_t]\leq M \right\},
\end{equation*}
the set of times which are duplicates of index at most $M$. In particular, $\Tcal^{\leq 1}$ is the set of times where we delete all duplicates. The following are equivalent.
\begin{enumerate}
    \item $\Xbb\in\Ccal_2$.
    \item $(X_t)_{t\in\Tcal^{\leq 1}} \in\Ccal_1$.
    \item For all $M\geq 1$, $(X_t)_{t\in\Tcal^{\leq M}} \in\Ccal_1$.
\end{enumerate}
\end{proposition}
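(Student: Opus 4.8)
The plan is to prove the cycle of implications $(3)\Rightarrow(2)\Rightarrow(1)\Rightarrow(3)$, where $(3)\Rightarrow(2)$ is immediate by taking $M=1$. The real content is in $(2)\Rightarrow(1)$ and $(1)\Rightarrow(3)$, and both will hinge on relating the empirical frequency along the full process $\Xbb$ to the empirical frequency along the deduplicated subsequences $(X_t)_{t\in\Tcal^{\leq M}}$.

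For $(1)\Rightarrow(3)$, suppose $\Xbb\in\Ccal_2$. Fix $M\geq 1$ and a monotone sequence $A_k\downarrow\emptyset$ of measurable sets; I must show $\Ebb\big[\limsup_T \frac1T\sum_{t\leq T,\,t\in\Tcal^{\leq M}}\1_{A_k}(X_t)\big]\to 0$. The key observation is that for each fixed instance $x$, the times $t$ with $X_t=x$ and $t\in\Tcal^{\leq M}$ number at most $M$. So for any $T$, $\sum_{t\leq T,\,t\in\Tcal^{\leq M}}\1_{A_k}(X_t) \leq M\cdot|\{x\in A_k : \Xbb_{\leq T}\cap\{x\}\neq\emptyset\}|$. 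Now I need that a $\Ccal_2$ process visits only $o(T)$ distinct points of $A_k$ when $A_k$ is "small." This is not literally the $\Ccal_2$ definition (which counts cells of a partition), but I can extract it: partition $A_k$ using the separability of $\Xcal$ into countably many pieces of diameter $\leq 1/n$, apply $\Ccal_2$, and take a diagonal/limit; alternatively, combine with Lemma~\ref{lem:infrequent-cells} and the fact (from \citealp*{hanneke:21}) that $\Ccal_2$ is exactly the class for which every disjoint sequence is visited sublinearly. The cleanest route is: if $(X_t)_{t\in\Tcal^{\leq M}}\notin\Ccal_1$ then by Lemma~\ref{lem:infrequent-cells} applied to that extended process there is a partition $\{B_i\}$ and caps $\{N_i\}$ visited with positive frequency among the "few-duplicate" times; but each $B_i$ contributes at most $M\cdot(\text{number of distinct points of }B_i\text{ visited})$, and within the cap-$N_i$ regime the number of distinct points is bounded, forcing $\Xbb$ to visit infinitely many cells $B_i$ linearly often — contradicting $\Xbb\in\Ccal_2$. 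Making this bookkeeping precise, in the spirit of the second half of the proof of Lemma~\ref{lem:infrequent-cells}, is the main obstacle.

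For $(2)\Rightarrow(1)$, suppose $(X_t)_{t\in\Tcal^{\leq 1}}\in\Ccal_1$ and let $\{A_k\}$ be a disjoint sequence; I must show $|\{k:\Xbb_{\leq T}\cap A_k\neq\emptyset\}|=o(T)$ a.s. Set $B_1 = \Xcal\setminus\bigcup_{k\geq 2}A_k$, relabel to get a partition, and note $|\{k:\Xbb_{\leq T}\cap A_k\neq\emptyset\}| \leq |\{k:(X_t)_{t\leq T,\,t\in\Tcal^{\leq 1}}\cap A_k\neq\emptyset\}| + 1$, since the first time a new cell is hit is automatically a non-duplicate time. So it suffices to show the deduplicated process hits only $o(T)$ cells; but $(X_t)_{t\in\Tcal^{\leq 1}}\in\Ccal_1\subset\Ccal_2$ (using that $\Ccal_1$-for-extended-processes still implies the sublinear-partition-visiting property — this needs the analogue of $\Ccal_1\subset\Ccal_2$ for extended processes, which follows by the same argument as in \citealp*{hanneke:21}, or directly: if cells $A_{k_1},\dots,A_{k_m}$ are each hit by time $T$ along $\Tcal^{\leq 1}$, take $C_j = \bigcup\{A_{k_i}: i > j\}$, then $C_j\downarrow\emptyset$ and $\limsup_T\frac1T\sum_{t\leq T,\,t\in\Tcal^{\leq 1}}\1_{C_j}(X_t)$ is bounded below by the rate of first-visits, forcing a contradiction with Extended Condition~1 as in Lemma~\ref{lem:infrequent-cells}). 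Translating the "number of cells visited" into the "frequency of a shrinking union" is the crux here, and it reuses the same mechanism: a fresh cell visit contributes a fresh point to $\Tcal^{\leq 1}$, so $m$ distinct cells hit by time $T$ means $\frac1T\sum_{t\leq T,t\in\Tcal^{\leq 1}}\1_{C_0}(X_t)\geq m/T$ for an appropriate tail union $C_0$.

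I expect the main obstacle to be the careful handling of the extended-process machinery: specifically, proving a clean "$\Ccal_1$ for extended processes $\Rightarrow$ sublinear partition visiting" lemma and its converse-flavored use, since Definition~\ref{def:extended_C1} is stated only in the monotone-shrinking-sets form and I will repeatedly need to convert between that and statements about counting distinct points or cells visited. Once that dictionary is in place, each implication is a short argument of the type already rehearsed in the proof of Lemma~\ref{lem:infrequent-cells} (bounding indicators by "cap on duplicates" plus "tail mass," then pushing $m\to\infty$ and $n\to\infty$). I would state and prove that auxiliary lemma first, then deduce all three implications from it.
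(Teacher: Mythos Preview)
Your $(2)\Rightarrow(1)$ is essentially the paper's argument: taking the tail unions $C_j=\bigcup_{k>j}A_k$ (not the random $\bigcup_{i>j}A_{k_i}$ you wrote, which would make $C_j$ depend on the sample path) and observing that a first visit to a new cell is automatically a time in $\Tcal^{\leq 1}$ recovers exactly the paper's contrapositive computation. Likewise, your reduction $(1)\Rightarrow(3)$ via the bound $\sum_{t\leq T,\,t\in\Tcal^{\leq M}}\1_{A_k}(X_t)\leq M\cdot|A_k\cap\Xbb_{\leq T}|$ is the paper's $(2)\Rightarrow(3)$ step verbatim.

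The genuine gap is that your $(1)\Rightarrow(3)$ silently requires $(1)\Rightarrow(2)$, and neither of your two sketches for that step works. The Lemma~\ref{lem:infrequent-cells} route fails: even granting an extended-process version of that lemma, the partition $\{B_i\}$ and caps $N_i$ you obtain only tell you that many \emph{points} (hence many $\Tcal^{\leq M}$-times) fall in the $B_i$'s under the cap, not that many distinct \emph{cells} $B_i$ are hit. Since $\Ccal_2$ is a statement about cells, you get no contradiction---a single $B_i$ could absorb $N_i$ cap-hits from arbitrarily many distinct points, and $N_i$ is unbounded in $i$. The ``partition $A_k$ at scale $1/n$ and apply $\Ccal_2$'' route fails for the dual reason: $\Ccal_2$ bounds the number of \emph{pieces} visited, which is a \emph{lower} bound on the number of distinct \emph{points} in $A_k$, not the upper bound you need; and the $o(T)$ rate from $\Ccal_2$ depends on the partition (hence on $n$), so you cannot pass to the limit $n\to\infty$ inside the $\limsup_T$.

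The paper's proof of $(1)\Rightarrow(2)$ handles exactly this point-vs-cell mismatch by a constructive argument you are missing. Starting from a sequence $A_i\downarrow\emptyset$ witnessing $\neg(2)$, it builds disjoint shells $B_p=A_{i_p}\setminus A_{i_{p+1}}$ together with time thresholds $t_p$ so that, on a fixed positive-probability event, each $B_p$ contains at least $\frac{\epsilon}{4}t$ distinct visited points for some $t\in(t_{p-1},t_p]$. It then partitions each $B_p$ at a scale $\delta_p$ chosen (via a probability bound on the minimum pairwise distance among $X_1,\dots,X_{t_p}$) so that all those distinct points fall in distinct cells. The resulting countable disjoint family $\{P_i^p\}_{i,p}$ then has linearly many cells visited at the times $t$, directly contradicting $\Ccal_2$. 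This separability-based construction---choosing the partition scale \emph{after} fixing a time horizon so that distinct visited points are guaranteed to be separated---is the missing idea in your proposal.
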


Essentially, the main difference between extended $\Ccal_1$ and $\Ccal_2$ processes lies in the multiple occurrences of instance points. In particular, if $\Xbb$ never visits the same instance point twice almost surely, as is the case of i.i.d. process with densities, then $\Xbb\in\Ccal_1$ if and only if $\Xbb\in\Ccal_2$.

\begin{proof}
We start by showing $(2)\Rightarrow (1)$. Suppose that a process $\Xbb$ is not in $\Ccal_2$. We aim to show that $\Xbb$ disproves the second property. Because $\Xbb\notin\Ccal_2$, there exists a sequence of disjoint measurable sets $(B_i)_{i\geq 1}$, $\epsilon, \delta>0$ such that with probability $\delta>0$
\begin{equation*}
    \limsup_{T\to\infty} \frac{|\{i: \Xbb_{\leq T}\cap B_i\neq\emptyset\}|}{T} \geq \epsilon.
\end{equation*}
Denote by $\Acal$ this event, and consider the sets $A_i = \bigcup_{j\geq i} B_i$ for $i\geq 1$. Now fix $i\geq 1$. For any $T\geq 1$, we have
\begin{equation*}
    \sum_{t\leq T,t\in\Tcal^{\leq 1}}\1_{A_i}(X_t)=|A_i\cap\Xbb_{\leq T}|\geq |\{j\geq i: B_j\cap\Xbb_{\leq T}\neq \emptyset\}|\geq |\{j: \Xbb_{\leq T}\cap B_j\neq\emptyset\}| - (i-1),
\end{equation*}
where in the first inequality we used the fact that the $B_j$ are disjoint for all $j\geq i$, but included within $A_i$. As a result, on the event $\Acal$ we have $\limsup_{T\to\infty} \frac{1}{T}\sum_{t\leq T,t\in\Tcal^{\leq 1}}\1_{A_i}(X_t) \geq \epsilon.$ Hence, 
\begin{equation*}
    \Ebb \left[\limsup_{T\to\infty} \frac{1}{T}\sum_{t\leq T,t\in\Tcal^{\leq 1}}\1_{A_i}(X_t) \right] \geq \epsilon \Pbb[\Acal] = \epsilon\delta.
\end{equation*}
This holds for all $i\geq 1$ but $A_i\downarrow\emptyset$, which shows that $\Xbb$ does not satisfy property (2).

To prove $(1)\Rightarrow (2)$, now suppose that property (2) is not satisfied by $\Xbb$. We aim to show that $\Xbb\notin\Ccal_2$. Then, there exists a sequence of measurable sets $A_i\downarrow\emptyset$, $\epsilon>0$ and an increasing sequence of indices $(i_k)_{k\geq 1}$ such that for all $k\geq 1$
\begin{equation*}
     \Ebb \left[\limsup_{T\to\infty} \frac{|A_{i_k}\cap\Xbb_{\leq T}|}{T} \right] \geq \epsilon.
\end{equation*}
Because the sets $A_i$ are decreasing and the quantity within the expectation is increasing in the set $A$, this shows that for all $i\geq 1$, we have $\Ebb \left[\limsup_{T\to\infty} \frac{|A_i\cap\Xbb_{\leq T}|}{T} \right] \geq \epsilon.$ Therefore, for any $i\geq 1$ because $\Ebb\left[\limsup_{T\to\infty} \frac{|A_i\cap\Xbb_{\leq T}|}{T} \right] \leq \Pbb\left[\limsup_{T\to\infty} \frac{|A_i\cap\Xbb_{\leq T}|}{T} \geq \frac{\epsilon}{2}\right] + \frac{\epsilon}{2}$ we obtain for all $i\geq 1$
\begin{equation*}
    \Pbb\left[\limsup_{T\to\infty} \frac{|A_i\cap\Xbb_{\leq T}|}{T} \geq \frac{\epsilon}{2} \right] \geq \frac{\epsilon}{2}.
\end{equation*}
Again, because the inner quantity is increasing in the set $A$, we obtain
\begin{align*}
    \Pbb\left[\limsup_{T\to\infty} \frac{|A_i\cap\Xbb_{\leq T}|}{T} \geq \frac{\epsilon}{2},\forall i\geq 1 \right] &= \lim_{I\to\infty} \Pbb\left[\limsup_{T\to\infty} \frac{|A_i\cap\Xbb_{\leq T}|}{T} \geq \frac{\epsilon}{2},1\leq i\leq I \right]\\
    &= \lim_{I\to\infty}\Pbb\left[\limsup_{T\to\infty} \frac{|A_I\cap\Xbb_{\leq T}|}{T} \geq \frac{\epsilon}{2}\right]\\
    &\geq \frac{\epsilon}{2}.
\end{align*}
We will denote by $\Hcal$ this event in which for all $i\geq 1$, we have $\limsup_{T\to\infty} \frac{|A_i\cap\Xbb_{\leq T}|}{T}\geq \frac{\epsilon}{2}$. Under the event $\Hcal$, for any $i,t^0\geq 1$, there always exists $t^1>t^0$ such that $\frac{|A_i\cap\Xbb_{\leq t^1}|}{t^1} \geq \frac{\epsilon}{4}.$ We construct a sequence of times $(t_p)_{p\geq 1}$ and indices $(i_p)_{p\geq 1}$, $(u_p)_{p\geq 1}$ by induction as follows. We first pose $i_1=t_0=0$. Now assume that for $p\geq 1$, the time $t_{p-1}$ and index $i_p$ are defined. Let $t_p>t_{p-1}$ such that
\begin{equation*}
    \Pbb\left[\Hcal^c \cup \bigcup_{t_{p-1}<t\leq t_p}\left\{\frac{|A_{i_p}\cap\Xbb_{\leq t}|}{t}\geq \frac{\epsilon}{4}\right\}\right]\geq 1-\frac{\epsilon}{2^{p+3}}.
\end{equation*}
This is also possible because $\Hcal\subset \bigcup_{t>t_{p-1}}\left\{\frac{|A_{i_p}\cap\Xbb_{\leq t}|}{t} \geq \frac{\epsilon}{4}\right\}$. Last, let $i_{p+1}> i_p$ such that $\Pbb[A_{i_{p+1}}\cap \Xbb_{\leq t_p}\neq\emptyset]\leq \frac{\epsilon}{2^{p+3}}$ which is possible since $A_u\downarrow \emptyset$ as $u\to\infty$. We denote $\Ecal_p$ this event. Then,
\begin{align*}
    &\Pbb\left[\Hcal^c \cup \bigcup_{t_{p-1}<t\leq t_p}\left\{\frac{|(A_{i_p}\setminus A_{i_{p+1}})\cap\Xbb_{\leq t}|}{t} 
    \geq \frac{\epsilon}{4}\right\}\right]\\
    &\geq \Pbb\left[\Ecal_p\cap \Hcal^c \cup \bigcup_{t_{p-1}<t\leq t_p}\left\{\frac{|A_{i_p}\cap\Xbb_{\leq t}|}{t}\geq \frac{\epsilon}{4}\right\}\right] \geq 1-\frac{\epsilon}{2^{p+2}}.
\end{align*}
We denote $\Fcal_p$ this event. This ends the recursive construction of times $t_p$ and indices $i_p$ for all $p\geq 1$. Note that by construction, $\Pbb[\Fcal_p^c]\leq \frac{\epsilon}{2^{p+2}}$. Hence, by union bound, the event $\Hcal\cap\bigcap_{p\geq 1}\Fcal_p$ has probability $\Pbb[\Hcal\cap\bigcap_{p\geq 1}\Fcal_p]\geq \Pbb[\Hcal]-\frac{\epsilon}{4}\geq \frac{\epsilon}{4}$. For conciseness, denote $B_p = A_{i_p}\setminus A_{i_{p+1}}$. On the event $\Hcal\cap\bigcap_{p\geq 1}\Fcal_p$ we showed that for all $p\geq 1$, there exists $t_{p-1}<t\leq t_p$ such that $|B_p\cap\Xbb_{\leq t}| \geq \frac{\epsilon}{4} t$, and $(B_p)_{p\geq 1}$ is a sequence of disjoint measurable sets.

Now for any $p\geq 1$, we will construct a countable partition of $B_p$ that separates all points falling in $B_p$ within time horizon $t_p$. Let $\delta_p>0$ such that
\begin{equation*}
    \Pbb\left[\min_{u,v\leq t_p: X_u\neq X_v} \rho(X_u,X_v) \leq  \delta_p \right] \leq \frac{\epsilon}{2^{p+3}}.
\end{equation*}
We denote by $\Gcal_p$ the complementary of this event. Note that $\Pbb[\bigcup_{p\geq 1} \Gcal_p^c] \leq \frac{\epsilon}{8}$. As a result, the event $\Ical:=\Hcal\cap\bigcap_{p\geq 1}(\Fcal_p\cap\Gcal_p)$ has probability at least $\frac{\epsilon}{8}$. We will show that on this event, $\Xbb$ disproves the $\Ccal_2$ condition. Precisely, let $(x^i)_{i\geq 1}$ a dense sequence of $\Xcal$. We will denote the balls of $\Xcal$ by $B(x,r) = \{x':\rho(x,x')<r\}$. Define the following partition of $\Xcal$,
\begin{equation*}
    \Pcal(\delta):\quad P_i(\delta) = B(x^i,\delta) \setminus \bigcup_{j<i} B(x^j,\delta).
\end{equation*}
Finally, for any $p, i\geq 1$, define $P^p_i:= P_i(\delta_p)\cap B_p.$ We can note that $\bigcup_{i\geq 1} P^p_i = B_p$. Further, the sets $(B^p_i)_{i,p\geq 1}$ are all disjoint, and form a countable sequence. However, on the event $\Ical$, for every $p\geq 1$, there exists a time $t_{p-1}<t\leq t_p$ such that $|B_p\cap\Xbb_{\leq t}|\geq \frac{\epsilon}{4}t$. But because the event $\Gcal_p$ is satisfied, all the points falling in $B_p$ within horizon $t\leq t_p$ are separated by at least $\delta_p$, hence fall in distinct sets $B^p_i$. As a result,
\begin{equation*}
    |\{i\geq 1: P^p_i\cap \Xbb_{\leq t}\neq\emptyset\}| \geq |B_p\cap\Xbb_{\leq t}|\geq \frac{\epsilon}{4}t.
\end{equation*}
This shows that on the event $\Ical$, for every $p\geq 1$, there exists $t>t_{p-1}$ such that $|\{i,p\geq 1: P^p_i\cap \Xbb_{\leq t}\neq\emptyset\}| \geq \frac{\epsilon}{4}t$, and as a result 
\begin{equation*}
\limsup_{T\to\infty} \frac{|\{i,p\geq 1: P^p_i\cap \Xbb_{\leq T}\neq\emptyset\}|}{T} \geq \frac{\epsilon}{4}.
\end{equation*}
The fact that $\Pbb[\Ical]\geq \frac{\epsilon}{8}$ ends the proof that $\Xbb\notin\Ccal_2$, and that the first proposition is equivalent to $\Ccal_2$.

We now show the equivalence $(2)\Leftrightarrow(3)$. We clearly have $(3)\Rightarrow(2)$. Now suppose that $\Xbb$ satisfies $(2)$. Let $M>1$ and $A$ be a measurable set. Then, for any $T\geq 1$, we have
\begin{equation*}
    \frac{1}{T} \sum_{t\leq T,t\in\Tcal^{\leq M}} \1_A(X_t) \leq M\frac{|A\cap\Xbb_{\leq t}|}{T} = \frac{M}{T} \sum_{t\leq T,t\in\Tcal^{\leq 1}} \1_A(X_t).
\end{equation*}
Because $(X_t)_{t\in\Tcal^{\leq 1}}\in\Ccal_1$, we obtain as a result $(X_t)_{t\in\Tcal^{\leq M}}\in\Ccal_1$ using the definition. This ends the proof of the proposition.
\end{proof}

As a consequence of Proposition \ref{prop:C2_equivalent_forms}, we obtain new major insights on the noiseless full-feedback setting. In this setting, an online learning sequentially observes an instance $X_t\in\Xcal$, predicts a value $\hat Y_t\in\Ycal$ then observes the true value $Y_t=f^*(X_t)$ for some unknown measurable function $f:\Xcal\to\Ycal$. Similarly to the notion of universal consistence for contextual bandits, the goal is to find learning rules satisfying $\frac{1}{T}\sum_{t=1}^T \ell(Y_t,\hat Y_t) \to 0\quad(a.s.),$ where $\ell$ is a given near-metric on $\Ycal$. For this setting, \citep{hanneke:21} gave a algorithm combining the Hedge algorithm and a ``dense'' countable family of measurable functions, universally consistent under $\Ccal_1$ processes. \citep{blanchard:22a} then gave a simple 1-nearest-neighbor-based algorithm 2C1NN and showed that in general separable Borel metrizable spaces \cite{blanchard:22c}, it is universally consistent under $\Ccal_2$ processes, which are also necessary for universal learning \citep{hanneke:21}. Proposition \ref{prop:C2_equivalent_forms} directly implies that combining the original algorithm from \citep{hanneke:21} on new instances $X_t$, i.e., on times $\Tcal^{\leq 1}$, with memorization for previously observed instances also yields an optimistically universal learning rule. Unfortunately, such direct argument does not extend to a noisy setting \cite{blanchard:22d} where the values $Y_t$ may not come from a fixed measurable function $f^*(X_t)$.

\subsection{Learning with experts algorithms}

We give the main ingredients that will be used as sub-routine in our algorithms. We start by recalling classical result on the regret of $\EXP$.

\begin{theorem}[Expected regret of $\EXP$ \citep*{bubeck2012regret}] \label{thm:exp3}
    If $\EXP$ is run with parameters $\eta_t = \sqrt{\frac{\ln K}{tK}}$ on a multi-armed bandit with $K$ arms, then the pseudo regret satisfies
    \begin{equation*}
        \max_{i=1,\ldots,k}\Ebb\left[\sum_{t=1}^T r_i(t)\right] - \Ebb\left[\sum_{t=1}^T r_{\hat i_t}(t)\right]\leq 2\sqrt{TK\ln K}.
    \end{equation*}
\end{theorem}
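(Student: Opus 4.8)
Since the final statement is the classical anytime pseudo-regret bound for $\EXP$, the plan is simply to reproduce the standard exponential-weights potential argument (one could alternatively just defer to \cite{bubeck2012regret}). Throughout, let $w_i(t)$ denote the exponential weights maintained by $\EXP$ and $p_i(t)=w_i(t)/\sum_j w_j(t)$ the induced sampling distribution, so that $\hat i_t\sim p(t)$ conditionally on the history $\Hcal_{t-1}$; in the pseudo-regret version the uniform-exploration mixing parameter may be taken to zero, so $p(t)$ is exactly the normalized-weight distribution. First I would introduce the importance-weighted estimators $\tilde r_i(t)=\frac{r_i(t)}{p_i(t)}\1[\hat i_t=i]$, which satisfy the two facts that drive everything: conditional unbiasedness $\Ebb[\tilde r_i(t)\mid\Hcal_{t-1}]=r_i(t)$, and the second-moment bound $\Ebb\bigl[\sum_i p_i(t)\tilde r_i(t)^2\mid\Hcal_{t-1}\bigr]=\sum_i r_i(t)^2\le K$ using $r_i(t)\in[0,1]$. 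It is cleanest to pass to losses $\ell_i(t)=1-r_i(t)\in[0,1]$ with estimators $\tilde\ell_i(t)=\frac{\ell_i(t)}{p_i(t)}\1[\hat i_t=i]$: since $\exp(\eta_t\sum_{s<t}r_i(s))$ and $\exp(-\eta_t\sum_{s<t}\ell_i(s))$ differ by a factor common to all arms, $\EXP$ run on rewards and on losses induces the same $p(t)$, and the regret statements coincide.

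Next I would run the potential (Hedge) analysis on the loss estimators. Set $\Phi_t=\frac{1}{\eta_t}\ln\bigl(\frac1K\sum_i\exp(-\eta_t\sum_{s<t}\tilde\ell_i(s))\bigr)$. Using $e^{-x}\le 1-x+\tfrac12 x^2$ for $x\ge 0$ — valid for all nonnegative $x$, which is exactly why the loss representation avoids the usual range restriction needed in the gain version — one bounds
\[
\ln\Bigl(\sum_i p_i(t)e^{-\eta_t\tilde\ell_i(t)}\Bigr)\le -\eta_t\sum_i p_i(t)\tilde\ell_i(t)+\frac{\eta_t^2}{2}\sum_i p_i(t)\tilde\ell_i(t)^2 .
\]
Telescoping this over $t=1,\dots,T$, and comparing against the single-arm potential $-\sum_{t\le T}\tilde\ell_j(t)$, gives for every fixed arm $j$
\[
\sum_{t=1}^{T}\sum_i p_i(t)\tilde\ell_i(t)-\sum_{t=1}^{T}\tilde\ell_j(t)\le \frac{\ln K}{\eta_T}+\frac12\sum_{t=1}^{T}\eta_t\sum_i p_i(t)\tilde\ell_i(t)^2 .
\]

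Then I would take expectations. By conditional unbiasedness, $\Ebb[\sum_i p_i(t)\tilde\ell_i(t)]=\Ebb[\ell_{\hat i_t}(t)]$ and $\Ebb[\tilde\ell_j(t)]=\ell_j(t)$, so the left-hand side becomes the pseudo-regret in losses against arm $j$, which (via $\ell=1-r$) equals the pseudo-regret in rewards. The second-moment bound gives $\Ebb[\sum_i p_i(t)\tilde\ell_i(t)^2]\le K$. Substituting $\eta_t=\sqrt{\ln K/(tK)}$ yields $\frac{\ln K}{\eta_T}=\sqrt{TK\ln K}$ and $\frac12\sum_{t=1}^{T}\eta_t K=\frac12\sqrt{K\ln K}\sum_{t=1}^{T}t^{-1/2}\le\sqrt{TK\ln K}$ using $\sum_{t\le T}t^{-1/2}\le 2\sqrt T$. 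Adding the two terms and maximizing over $j$ gives $\max_i\Ebb[\sum_t r_i(t)]-\Ebb[\sum_t r_{\hat i_t}(t)]\le 2\sqrt{TK\ln K}$.

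The one genuinely delicate step is the telescoping under a \emph{time-varying} learning rate: with fixed $\eta$ the potentials telescope exactly, but with $\eta_t$ decreasing one must control the mismatch between the normalizations defining $\Phi_{t+1}$ and $\Phi_t$. The standard device is that $\eta_t/\eta_{t+1}\ge 1$ makes $z\mapsto z^{\eta_t/\eta_{t+1}}$ convex, so Jensen's inequality on the weight distribution gives the needed comparison $\bigl(\sum_i p_i(t)\,e^{-\eta_t(\cdot)}\bigr)^{\eta_{t+1}/\eta_t}\ge \sum_i p_i(t)\,e^{-\eta_{t+1}(\cdot)}$, which keeps the telescoping intact while only picking up the benign $\frac{\ln K}{\eta_T}$ term; tracking the remaining constants carefully is what produces exactly the stated $2\sqrt{TK\ln K}$. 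Everything else is bookkeeping.
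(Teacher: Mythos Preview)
Your proof sketch is correct and follows the standard exponential-weights potential argument from \cite{bubeck2012regret}. However, the paper does not actually prove this theorem: it is stated as a recalled classical result with a citation to \cite{bubeck2012regret} and used as a black box throughout (e.g., in the analysis of strategy~0 in Theorem~\ref{thm:opt_rule_stat}). So there is no ``paper's own proof'' to compare against; you have simply supplied the standard argument where the paper defers to the literature, which is fine and arguably more self-contained.
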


We will also need an algorithm for adversarial multi-armed bandits that holds with high probability $1-\delta$, with parameters that do not depend on the confidence $\delta$ nor the horizon $T$.

\begin{theorem}[High-probability regret of $\EXPIX$ \cite{neu2015explore}] \label{thm:multiarmed_bandits}
There exists an algorithm $\EXPIX$ for adversarial multi-armed bandit with $K\geq 2$ arms such that for any $\delta\in(0,1)$ and $T\geq 1$,
\begin{equation*}
    \max_{i\in[K]} \sum_{t=1}^T( r_t(a_i) - r_t(\hat a_t)) \leq 4\sqrt{KT\ln K} + \left(2\sqrt{\frac{KT}{\ln K}}+1\right)\ln \frac{2}{\delta},
\end{equation*}
with probability at least $1-\delta$.
\end{theorem}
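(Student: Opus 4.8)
The plan is to instantiate and analyze the $\EXPIX$ algorithm, following \cite{neu2015explore}, with a schedule of parameters depending on neither $\delta$ nor $T$. Passing to losses $\ell_t(a):=1-r_t(a)\in[0,1]$ (so that $\sum_t(r_t(a_i)-r_t(\hat a_t))=\sum_t(\ell_t(\hat a_t)-\ell_t(a_i))$), the algorithm runs exponential weights over the $K$ arms, drawing $\hat a_t$ from $p_t\propto\exp(-\eta_t\hat L_{t-1}(\cdot))$, but updates with the \emph{implicitly-explored}, downward-biased loss estimates
\begin{equation*}
\hat\ell_t(a)=\frac{\ell_t(a)}{p_t(a)+\gamma_t}\,\one[\hat a_t=a],
\end{equation*}
under the decaying schedule $\eta_t=2\gamma_t=\sqrt{\ln K/(Kt)}$ (up to absolute constants). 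This is what lets the parameters be horizon- and confidence-free.

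The first ingredient is the deterministic exponential-weights inequality: for every fixed arm $i$,
\begin{equation*}
\sum_{t=1}^T\langle p_t,\hat\ell_t\rangle-\sum_{t=1}^T\hat\ell_t(i)\le\frac{\ln K}{\eta_T}+\frac12\sum_{t=1}^T\eta_t\sum_a p_t(a)\hat\ell_t(a)^2,
\end{equation*}
obtained from the usual potential argument with $e^{-x}\le1-x+x^2/2$ for $x\ge0$ (valid since the estimates are nonnegative) plus a summation-by-parts to accommodate decreasing $\eta_t$. Decomposing the true regret against $i$ as $\sum_t(\ell_t(\hat a_t)-\langle p_t,\hat\ell_t\rangle)+\bigl(\sum_t\langle p_t,\hat\ell_t\rangle-\sum_t\hat\ell_t(i)\bigr)+\sum_t(\hat\ell_t(i)-\ell_t(i))$, one checks the identity $\ell_t(\hat a_t)-\langle p_t,\hat\ell_t\rangle=\gamma_t\sum_a\hat\ell_t(a)$ and the bound $\sum_a p_t(a)\hat\ell_t(a)^2\le\sum_a\hat\ell_t(a)$; since $\eta_t=2\gamma_t$, the first two pieces are at most $2\sum_t\gamma_t\sum_a\hat\ell_t(a)+\ln K/\eta_T$.

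The probabilistic core is the $\EXPIX$ concentration lemma. Because $2\gamma_t\hat\ell_t(a)=\tfrac{2\gamma_t\ell_t(a)}{p_t(a)+\gamma_t}\one[\hat a_t=a]\le\ln\bigl(1+\tfrac{2\gamma_t\ell_t(a)}{p_t(a)+\gamma_t}\one[\hat a_t=a]\bigr)$ and the conditional expectation of $\tfrac{\ell_t(a)}{p_t(a)}\one[\hat a_t=a]$ given the past is $\ell_t(a)$, for any predictable coefficients $\alpha_{t,a}\in[0,2\gamma_t]$ the exponential of $\sum_{s\le t}\sum_a\alpha_{s,a}(\hat\ell_s(a)-\ell_s(a))$ is a supermartingale, so Markov's inequality gives, with probability at least $1-\delta/2$,
\begin{equation*}
\sum_{t=1}^T\sum_a\alpha_{t,a}\bigl(\hat\ell_t(a)-\ell_t(a)\bigr)\le\ln\tfrac2\delta.
\end{equation*}
Applying it with $\alpha_{t,a}=2\gamma_t$ turns $\sum_t\gamma_t\sum_a\hat\ell_t(a)$ into $\sum_t\gamma_t\sum_a\ell_t(a)+\tfrac12\ln\tfrac2\delta=O\bigl(\sqrt{KT\ln K}+\ln\tfrac2\delta\bigr)$, using $\sum_a\ell_t(a)\le K$ and $\sum_{t\le T}\gamma_t=O(\sqrt{T\ln K/K})$; applying it with $\alpha_{t,a}=2\gamma_T\one[a=i^\star]$, where $i^\star=\argmax_i\sum_t r_t(a_i)$ is the (non-random, for an oblivious reward sequence) best arm, bounds the comparator error $\sum_t(\hat\ell_t(i^\star)-\ell_t(i^\star))$ by $\tfrac1{2\gamma_T}\ln\tfrac2\delta=O\bigl(\sqrt{KT/\ln K}\,\ln\tfrac2\delta\bigr)$. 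Summing the comparator error, the combined bias/variance term, and the leading Hedge term $\ln K/\eta_T=\sqrt{KT\ln K}$, and taking a union bound over the two failure events (each of probability $\delta/2$, the origin of the $\ln(2/\delta)$), yields the claimed inequality.

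The main obstacle is the bookkeeping forced by the horizon- and confidence-free parameters: the summation-by-parts needed when $\eta_t$ decays, and above all the requirement that the concentration lemma be stated with the step-dependent cap $\alpha_{t,a}\le2\gamma_t$ rather than a global $\gamma$, so that the comparator error $\tfrac1{2\gamma_T}\ln\tfrac2\delta$ is valid for every $T$ simultaneously while the bias/variance term stays summable. This is precisely the strengthening over fixed-horizon non-stochastic bandit bounds, and is why the algorithm need not know $\delta$ or $T$ in advance.
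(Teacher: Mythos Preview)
The paper does not prove this theorem at all; it simply cites \cite{neu2015explore} as a black-box result and immediately passes to the simplified corollary $\max_i\sum_t(r_t(a_i)-r_t(\hat a_t))\le c\sqrt{KT\ln K}\ln(1/\delta)$. So there is no ``paper's own proof'' to compare against, and your write-up is in effect a reconstruction of Neu's original argument rather than an alternative to anything in the present paper.

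Your reconstruction is structurally faithful to \cite{neu2015explore}: IX loss estimates, the decreasing anytime schedule $\eta_t=2\gamma_t=\sqrt{\ln K/(Kt)}$, the three-term regret decomposition, and the supermartingale concentration lemma with predictable $\alpha_{t,a}\le 2\gamma_t$. The computations you check (the identity $\ell_t(\hat a_t)-\langle p_t,\hat\ell_t\rangle=\gamma_t\sum_a\hat\ell_t(a)$, the second-moment bound, the two instantiations of the lemma, and the resulting orders) are correct, and the treatment of $i^\star$ as non-random is justified in the oblivious setting that both Neu and this paper have in mind.

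There is one slip in the supermartingale step. You wrote $2\gamma_t\hat\ell_t(a)\le\ln\bigl(1+2\gamma_t\hat\ell_t(a)\bigr)$, i.e.\ $x\le\ln(1+x)$, which is false. The inequality actually used is $\alpha\hat\ell_t(a)\le\ln\bigl(1+\alpha\tilde\ell_t(a)\bigr)$ with the \emph{unbiased} importance-weighted estimate $\tilde\ell_t(a)=\frac{\ell_t(a)}{p_t(a)}\one[\hat a_t=a]$ inside the logarithm (via $\frac{x}{1+x/2}\le\ln(1+x)$ and $\alpha\ell_t(a)/2\le\gamma_t$). You clearly know this, since you immediately invoke $\E[\tilde\ell_t(a)\mid\Fcal_{t-1}]=\ell_t(a)$ on the next line; the denominator inside the logarithm just needs to be $p_t(a)$ rather than $p_t(a)+\gamma_t$. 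With that correction the supermartingale $\prod_{s\le t}(1+\alpha_{s,a}\tilde\ell_s(a))e^{-\alpha_{s,a}\ell_s(a)}$ goes through and the rest of your argument stands.
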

Specifically we will always use a very simplified version of this result. There exists a universal constant $c>0$ such that 
\begin{equation*}
    \max_{i\in[K]} \sum_{t=1}^T( r_t(a_i) - r_t(\hat a_t)) \leq c\sqrt{KT\ln K}\ln \frac{1}{\delta},
\end{equation*}
with probability $1-\delta$ for $\delta\leq \frac{1}{2}$. This has the following corollary which allows one to consider a countable family of experts asymptotically, based on an argument from \citep*[Corollary 4]{hanneke:22a}. We use the same construction to design an algorithm $\EXPINF$ for learning with a countably infinite number of experts---the original proof extended the Hedge algorithm to infinite number of experts in the full-feedback setting. Precisely, we use an increasing sequence of times $(T_i)_{i\geq 1}$ such that the learning rule performs an independent $\EXPIX$ algorithm during each period $[T_i,T_{i+1})$. During this period, the $\EXPIX$ learner is run with $i$ arms consisting in the experts $E_k$ for $k\leq i$. To ease the computations, we choose $T_i = \sum_{j< i} j^3 = \frac{i^2(i+1)^2}{4}$, which yields the following bounds.

\begin{corollary}\label{cor:infinite-exp4}
There is an online learning rule 
$\EXPINF$ using bandit feedback
such that for any countably infinite set of experts $\{E_1,E_2,\ldots\}$ 
(possibly randomized), for any $T\geq 1$ and $0<\delta\leq \frac{1}{2}$, with probability at least $1-\delta$,
\begin{equation*}
\max_{1\leq i \leq T^{1/8}} \sum_{t=1}^{T} \left( r_t(E_{i,t}) - r_t(\hat{a}_t) \right) 
\leq cT^{3/4}\sqrt{\ln T}\ln\frac{T}{\delta}.
\end{equation*}
where $c>0$ is a universal constant. Further, with probability one on the learning and the experts, there exists $\hat T$ such that for any $T\geq 1$,
\begin{equation*}
    \max_{1\leq i \leq T^{1/8}} \sum_{t=1}^{T} \left( r_t(E_{i,t}) - r_t(\hat{a}_t) \right) 
\leq \hat T +  cT^{3/4}\sqrt{\ln T}\ln T.
\end{equation*}
\end{corollary}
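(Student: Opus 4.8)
The plan is to analyze the restarting construction described above directly, viewing the two claims as two different ways of choosing the per-period failure probabilities that feed into the regret bound of Theorem~\ref{thm:multiarmed_bandits}; the algorithm $\EXPINF$ itself is a single fixed procedure (its $\EXPIX$ sub-instances use parameters depending on neither a confidence level nor a horizon), so every choice below lives purely in the analysis. Fix a horizon $T$ and let $I=I(T)$ be the unique index with $T\in[T_I,T_{I+1})$. Since $T_i=\Theta(i^4)$, we have $I=\Theta(T^{1/4})$, the $j$-th period has length $\ell_j\le j^3$, and on it runs an independent $\EXPIX$ over the $j$ arms $E_1,\dots,E_j$ (at round $t$ it picks an arm $J_t\le j$, we play $\hat a_t=E_{J_t,t}$, and feed back the observed $r_t(\hat a_t)$). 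For a fixed expert index $i$ with $i\le T^{1/8}$, I split $\sum_{t=1}^T(r_t(E_{i,t})-r_t(\hat a_t))$ into the rounds preceding the $i$-th period and the per-period sums over $j=i,\dots,I$. The first part spans fewer than $T_i=\Theta(i^4)=O(T^{1/2})$ rounds, hence is $O(T^{1/2})$ since rewards lie in $[0,1]$, and will be absorbed into the main term. For each $j$ with $i\le j\le I$ the expert $E_i$ is one of the $j$ arms of the $j$-th instance, so the simplified form of Theorem~\ref{thm:multiarmed_bandits} gives, for any $\delta_j\le\frac12$, with probability at least $1-\delta_j$, a contribution at most $c\sqrt{j\,\ell_j\ln j}\,\ln\frac1{\delta_j}\le c\,j^2\sqrt{\ln j}\,\ln\frac1{\delta_j}$; this holds for all arms of the instance simultaneously, so the underlying event does not depend on $i$.

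For the first, high-probability claim I take $\delta_j=\delta/(j(j+1))$, so that $\sum_{j\ge1}\delta_j=\delta$ and a union bound over $j\le I$ leaves failure probability at most $\delta$. On the resulting event, for every $i\le T^{1/8}$ at once, using $\ln\frac1{\delta_j}\le 2\ln(j+1)+\ln\frac1\delta$ and $\sum_{j\le I}j^2\le (I+1)^3/3$,
\[
\sum_{j=i}^{I} c\,j^2\sqrt{\ln j}\,\ln\tfrac1{\delta_j}\;\le\; C\,(I+1)^3\sqrt{\ln I}\,\bigl(\ln(I+1)+\ln\tfrac1\delta\bigr)\;\le\; C'\,T^{3/4}\sqrt{\ln T}\,\ln\tfrac{T}{\delta},
\]
where the last step uses $I=\Theta(T^{1/4})$ together with $\ln T+\ln\frac1\delta=\ln\frac{T}{\delta}$. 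Adding the $O(T^{1/2})$ from the initial rounds and renaming the constant yields the first display.

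For the almost-sure claim the confidence budget cannot be spent geometrically; instead I invoke the per-period bound with $\delta_j=1/(2j^2)$, which is summable. Then $\ln\frac1{\delta_j}\le 3\ln j$ for $j\ge2$, so each period $j$ has regret at most $3c\,j^2(\ln j)^{3/2}$ off an event of probability $1/(2j^2)$, and by Borel--Cantelli almost surely only finitely many periods violate this. Let $J_0$ be a finite random index past which every period obeys the bound, and set $\hat T:=\max\{1,T_{J_0}\}$, which dominates the total number of rounds lying in periods $j<J_0$. Then for any $T$ and any $i\le T^{1/8}$, the rounds in periods $j<\max\{J_0,i\}$ contribute at most $\hat T+T^{1/2}$, while the remaining periods contribute at most $\sum_{j\le I}3c\,j^2(\ln j)^{3/2}\le c'(I+1)^3(\ln I)^{3/2}=O\!\bigl(T^{3/4}(\ln T)^{3/2}\bigr)$. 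Since $(\ln T)^{3/2}=\sqrt{\ln T}\,\ln T$, absorbing the lower-order $T^{1/2}$ term gives the stated $\hat T+c\,T^{3/4}\sqrt{\ln T}\,\ln T$.

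The main obstacle --- really the only non-mechanical point --- is the choice of the $\delta_j$. The naive choice $\delta_j=2^{-j}$ for a Borel--Cantelli argument forces $\ln\frac1{\delta_j}=\Theta(j)$, hence per-period regret $\Theta(j^3)$, whose sum over $j\le I$ is $\Theta(I^4)=\Theta(T)$ --- linear and useless. The point is that polynomial decay $\delta_j=\Theta(j^{-2})$ is still summable (so Borel--Cantelli applies) yet only costs $\ln\frac1{\delta_j}=\Theta(\ln j)$, so per-period regret stays $\Theta(j^2\,\mathrm{polylog}\,j)$ and the geometric-type sum telescopes to $\Theta(I^3\,\mathrm{polylog})=\Theta(T^{3/4}\,\mathrm{polylog})$. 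Everything else is bookkeeping: the last, possibly incomplete, period is handled via $\ell_I\le I^3$, and the hypothesis $i\le T^{1/8}$ is precisely what makes the early periods (before $E_i$ becomes an arm) contribute only $O(T^{1/2})\ll T^{3/4}$.
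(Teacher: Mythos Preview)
Your proof is correct and follows essentially the same approach as the paper: both analyze the restarting construction via the per-period $\EXPIX$ bound, sum $\Theta(j^2\sqrt{\ln j})$ over $j\le I=\Theta(T^{1/4})$, and absorb the $O(T^{1/2})$ rounds before expert $i$ enters. The only cosmetic differences are in how the confidence budget is allocated: for the high-probability claim the paper splits $\delta$ uniformly over the $\Theta(T^{1/4})$ periods (yielding $\ln(i/\delta)$) whereas you use the telescoping split $\delta_j=\delta/(j(j+1))$; for the almost-sure claim the paper plugs $\delta_T=1/T^2$ into the first display and runs Borel--Cantelli over horizons $T$, whereas you run Borel--Cantelli directly over periods with $\delta_j=1/(2j^2)$. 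Your per-period Borel--Cantelli is arguably cleaner (fewer events to control), but both routes land on the same $T^{3/4}\sqrt{\ln T}\ln T$ bound.
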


\begin{proof}
Denote by $(T_i =\sum_{j< i} j^3)_{i\geq 1}$ the restarting times used in the definition of $\EXPINF$, and by $\hat a_t$ its selected action at time $t$. Theorem \ref{thm:multiarmed_bandits} implies that for any $i\geq 1$, with probability at least $0<\delta<\frac{1}{2}$,
\begin{equation*}
    \max_{1\leq j\leq i} \sum_{t=T_i}^{T_{i+1}-1} r_t(E_{j,t})-r_t(\hat a_t) \leq c\sqrt{i(T_{i+1}-T_i)\ln i} \ln\frac{1}{\delta} = c i^2\sqrt{\ln i} \ln\frac{1}{\delta}.
\end{equation*}
Now fix $T\geq 1$ and $\delta>0$. Let $i\geq 0$ such that $T_{i+1}\leq T<T_{i+2}$. Then summing the above equations gives that with probability at least $\delta$,
\begin{align*}
    \max_{1\leq j\leq T^{1/8}}\sum_{t=1}^T r_t(E_{j,t})-r_t(\hat a_t) &\leq T_{\lceil T^{1/8}\rceil} + (T-T_{i+1})+ \sum_{t=T_{\lceil T^{1/8}\rceil}}^{T_{i+1}-1} r_t(E_{i,t})-r_t(\hat a_t) \\
    &\leq T_{\lceil T^{1/8}\rceil} + (i+1) + c\frac{i(i+1)(2i+1)}{6} \sqrt{\ln i}\ln\frac{i}{\delta}.
\end{align*}
Now note that $i\sim \sqrt 2 T^{1/4}$ and $T_{\lceil T^{1/8}\rceil}\sim \frac{\sqrt T}{4}$ as $T\to\infty$. Therefore, there exists a universal constant $\tilde c$ such that for all $T\geq 1$, the right-hand term is upper bounded by $\tilde c T^{3/4}\sqrt{\ln T}\ln\frac{T}{\delta}$. This ends the proof of the first claim.

Now for any $T\geq 1$, using the probabilities of error $\delta_T=\frac{1}{T^2}$ which are summable, the Borel-Cantelli lemma implies that on an event of probability one, there exists $\hat T$ such that for any $T\geq \hat T$,
\begin{equation*}
    \max_{1\leq j\leq T^{1/8}}\sum_{t=1}^T r_t(E_{j,t})-r_t(\hat a_t) \leq \tilde c T^{3/4}\sqrt{\ln T}\ln(T^3) = 3\tilde c T^{3/4}\sqrt{\ln T}\ln T,
\end{equation*}
which ends the proof of the second claim by redefining the constant $c>0$.
\end{proof}

\section{Finite action space}
\label{sec:finite-actions}

In this section, we assume that the action space $\Acal$ is finite and we show that in this case, the set of processes $\Xbb$ admitting universal learning is exactly $\Ccal_2$. In other terms, we can recover the same processes which admit universal learning in the full-feedback setting.

We start by showing that the $\Ccal_2$ condition is necessary for universal consistency, which is a direct consequence from its necessity in the full-feedback case \cite{hanneke:21}.

\begin{theorem}
\label{thm:SMV-finite-nec}
If $2 \leq |\A| < \infty$, 
$\ProcX \in \SMV$ is necessary 
for universal consistency, i.e., $\Ccal \subset \Ccal_2$.
\end{theorem}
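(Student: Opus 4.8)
The plan is to reduce the contextual bandit problem to the full-feedback (in fact, noiseless) online learning problem, for which $\ProcX\in\Ccal_2$ is already known to be necessary \citep*{hanneke:21,blanchard:22a}. Concretely, suppose $\ProcX\notin\Ccal_2$; I want to exhibit a reward mechanism $r$ under which no learning rule is consistent. The idea is to encode an arbitrary measurable target function $f^{*}:\Xcal\to\Acal$ into deterministic rewards: set $\bar r(a,x) = \1[a = f^{*}(x)]$, and take the reward mechanism to be deterministic, $r_t(a) = \1[a = f^{*}(X_t)]$. Then the optimal policy $\pi^{*}=f^{*}$ achieves per-round reward $1$, and the instantaneous regret of the learner at time $t$ is exactly $\1[\hat a_t \neq f^{*}(X_t)]$, i.e.\ the $0/1$ classification error. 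So a universally consistent contextual bandit rule would, for \emph{every} measurable $f^{*}$, drive $\frac1T\sum_{t\le T}\1[\hat a_t\neq f^{*}(X_t)]\to 0$ a.s. This is precisely universal consistency for noiseless online classification with a finite label space $\Acal$ (here $|\Acal|\ge 2$ is used so that the class of target functions is nontrivial).

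The remaining step is to invoke the known converse: for noiseless online learning with full feedback and a finite (indeed any nontrivial bounded) value space, $\ProcX\in\Ccal_2$ is necessary for the existence of a universally consistent learning rule. This is Theorem/Proposition from \citep*{hanneke:21} (with the online strengthening in \citep*{blanchard:22a}). One subtlety: the contextual bandit learner only sees $r_t(\hat a_t)$, not the full reward vector $(r_t(a))_{a\in\Acal}$, so it has \emph{strictly less} information than a full-feedback online learner. Hence any contextual bandit rule induces a valid full-feedback online rule with the same predictions (just ignore the extra feedback), so consistency of the bandit rule for all $f^{*}$ would imply consistency of a full-feedback rule for all $f^{*}$ under $\ProcX$ — contradicting the necessity of $\Ccal_2$ in that setting. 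Therefore $\ProcX\in\Ccal_2$ is necessary here too, i.e.\ $\Ccal\subset\Ccal_2$.

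I expect this to be essentially a short reduction argument, so there is no serious obstacle; the only care needed is (i) making the identification ``instantaneous bandit regret $=$ $0/1$ loss'' precise for the deterministic reward mechanism, noting the optimal policy is well-defined and attains reward $1$ since $\Acal$ is finite, and (ii) citing the correct form of the full-feedback lower bound — specifically that it already holds in the \emph{online} model (so that allowing the bandit learner to update from all past contexts and rewards does not help), which is exactly what \citep*{hanneke:21,blanchard:22a} provide. If one wanted a self-contained argument one could instead directly reproduce the construction from those papers: take the partition $\{A_k\}$ witnessing $\ProcX\notin\Ccal_2$ and build $f^{*}$ adversarially (randomly) to be constant on each cell, forcing a constant fraction of mistakes on the $o(T)$... rather $\Theta(T)$ cells newly visited; but citing the existing result is cleaner.
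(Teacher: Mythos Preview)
Your proposal is correct and follows essentially the same reduction as the paper: encode a measurable target into deterministic $0/1$ rewards so that bandit regret equals classification error, then invoke the full-feedback necessity of $\Ccal_2$ from \citep*{hanneke:21}. The paper's version differs only cosmetically---it restricts to \emph{binary} classification via two distinguished actions $a_0,a_1$ (so the cited Theorem~37 applies verbatim), and it spells out the simulation explicitly: the induced full-feedback rule feeds the internal bandit learner the reconstructed reward $\1[\hat a_s = Y_s]$ rather than ``ignoring'' the extra feedback, which is what your parenthetical should really say.
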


\begin{proof}
In the full-information feedback setting, \citep*[Theorem 37]{hanneke:21} showed that $\Xbb\in\Ccal_2$ is necessary for universal learning even for noiseless responses in binary classification. We will present a simple reduction from the full-feedback to the partial-feedback setting. Precisely, let $a_0,a_1\in\Acal$ be two distinct actions. To any measurable function $f:\Xcal\to \{0,1\}$ we associate a deterministic reward function $r_f:\Xcal\times \Acal \to [0,1]$ as follows
\begin{equation*}
    r_f(x,a) = f(x)\1[a=a_1] + (1-f(x))\1[a=a_0],\quad x\in\Xcal,a\in\Acal.
\end{equation*}
Note that any action $a\in\Acal\setminus\{a_0,a_1\}$ always has reward $0$. Now suppose that for a process $\Xbb$ there exists an universally consistent learning rule $f_\cdot$ for contextual bandits. Then, we can consider the following learning rule for the complete-feedback setting, recursively defined as
\begin{equation*}
    \tilde f_t(\mb x_{\leq t-1},\mb y_{\leq t-1},x_t) = \1[f_t(\mb x_{\leq t-1},(\1[\tilde f_i(\mb x_{\leq i-1},\mb y_{\leq i-1},x_i) =y_i])_{i\leq t-1},x_t) = a_1].
\end{equation*}
for any $t\geq 1$, $\mb x_{\leq t}\in\Xcal^{t-1}$ and $\mb y_{\leq t-1}\in\{0,1\}^{t-1}$. We now shows that $\tilde f_\cdot$ is universally consistent for the noiseless full-feedback setting. For any measurable function $f:\Xcal\to\{0,1\}$, the learning rule $f_\cdot$ is consistent for the rewards $r_f$. In particular, if we denote by $\hat a_t$ the action selected by $f_\cdot$ at time $t$, using the measurable policy $\pi_f:x\in\Xcal\mapsto a_0\1[f(x)=0] + a_1\1[f(x)=1] \in\Acal$ which always selects the best action we obtain
\begin{equation*}
    \limsup_{T\to\infty}\frac{1}{T}\sum_{t=1}^T r_t(\pi_f(X_t))-r_t(\hat a_t) = \limsup_{T\to\infty} \frac{1}{T}\sum_{t=1}^T \1[\hat a_t \neq  \pi_f(X_t)] \leq 0,\quad (a.s.).
\end{equation*}
Now consider the actions $\hat a_t$ selected under $\Xbb$ and rewards $r_f$ and denote by $\tilde Y_t$ the prediction of $\tilde f_\cdot$ at time $t$ under $\Xbb$ and values $Y_t = f(X_t)$ for $t\geq 1$. By construction, for any $t\geq 1$, we have $\1[\hat a_t \neq \pi_f(X_t)] \geq  \1[\tilde Y_t \neq f(X_t)]$. Then, almost surely $\frac{1}{T}\sum_{t=1}^T \1[\tilde Y_t\neq f(X_t)] \underset{n\to\infty}{\longrightarrow} 0$. This shows that $\tilde f_\cdot$ is universally consistent for noiseless responses in binary classification, hence $\Xbb\in\Ccal_2$, which completes the proof. 
\end{proof}

We now present a learning rule for contextual bandits, which we will next show is universally consistent under any $\Ccal_2$ process.

This learning rule at time $t$ has different behaviour depending on the number of occurrences of $X_t$ that were observed in the past. Precisely, for any time $t$, we compute a corresponding category $p$ such that the number of past occurrences of $X_t$ belongs in the interval $[4^p,4^{p+1})$. The learning rule will behave completely separately on times from different categories. The formal definition is given by the function below
\begin{equation*}
    \textsc{Category}(t,\Xbb_{\leq t}) = \left\lfloor \log_4\left(\sum_{t'\leq t} \1[X_{t'}=X_t]\right)\right\rfloor.
\end{equation*}
For convenience we may write $\textsc{Category}(t)$ instead of $\textsc{Category}(t,\Xbb_{\leq t})$. Further, for a given category $p$, the algorithm will proceed by periods $[T_p^q,T_p^{q+1})$ defined as follows. For any $p\geq 0$ and $q\geq p2^p$, we define the times $T_p^q = 2^k + \frac{i}{2^p}2^k$, where $q = k 2^p + i$ with $0\leq i<2^p$. Note that the sequence $(T_p^q)_q$ has an exponential behaviour with rate between $2^{-p-1}$ and $2^{-p}$. We will refer to $[T_p^q,T_p^{q+1})$ as the period $q$ for category $p$. We then define the function $\textsc{Period}(t)$ which returns the index $q$ such that $T_p^q\leq t\leq T_p^{q+1}$ where $p$ is the category of $p$. Now let $(\pi^l)_{l\geq 1}$ be a sequence of measurable functions from $\Xcal$ to $\Acal$ that are dense within measurable functions under $\Ccal_1$ processes. Intuitively, the learning rule combines two strategies: a strategy 0 which applies a separate $\EXP$ algorithm to each distinct instance, and a strategy 1 which performs the best policy within a subset of the policies $(\pi^l)_{l\geq 1}$. In order to know which strategy to apply, the learning rule computes an estimate of the counterfactual loss of strategy $i$, using classical importance sampling on some allocated exploration times for strategy $i$. On the exploitation times, the learning rule uses these estimates to perform the best strategy. 

We first define the procedure \textsc{AssignPurpose} which taking as input a time $t$ determines whether this time will be used for exploration of strategy $0$ (output 0), strategy $1$ (output 1), or exploitation (output 2). Intuitively, \textsc{AssignPurpose} selects exploration times randomly with small probability while ensuring that times $t,t'$ from the same category $p$, period $q$, and that are duplicates $X_t=X_{t'}$ are assigned the same output, hence will serve for the same exploration or exploitation purpose. The algorithm is formally defined in Algorithm \ref{alg:AssignPurpose}. 

\begin{algorithm}
\caption{\textsc{AssignPurpose}}\label{alg:AssignPurpose}
\hrule height\algoheightrule\kern3pt\relax

\textbf{Input:} time $t$, $\Xbb_{\leq t}$, $\textsc{Category}(t')$ for $t'\leq t$, $\textsc{AssignPurpose}(t')$ for $t'<t$.\\
\textbf{Output:} $\textsc{AssignPurpose}(t)\in\{0,1,2\}$.\\

$p= \textsc{Category}(t)$; $q=\textsc{Period}(t)$\\
\uIf(\tcp*[f]{Not the first occurrence of $X_t$ in current period}){exists $t'<t$ with $ \textsc{Category}(t')=p$; $\textsc{Period}(t')=q$ and $X_t=X_{t'}$}{
    Return $\textsc{AssignPurpose}(t')$
}
\Else(\tcp*[f]{First occurrence of $X_t$ in current period}){
    $p_t = 1/(2t^{1/4})$\\
    $U_t \sim\Ucal([0,1])$\\
    \textbf{if} $U_t\leq p_t$ \textbf{then} Return 0 \tcp*[f]{Exploration for strategy 0}\\
    \textbf{else if} $p_t<U_t\leq 2p_t$ \textbf{then} Return 1 \tcp*[f]{Exploration for strategy 1}\\
    \textbf{else} Return 2 \tcp*[f]{Exploitation}
}
\hrule height\algoheightrule\kern3pt\relax
\end{algorithm}

Next, we define the subroutine $\textsc{Explore}(i;t)$ that will be called on exploration times $t$ for strategy $i$. We first define it to estimate the performance of strategy 0. The subroutine updates an estimator $\hat R_p^0(q)$ of the loss that would be incurred by using strategy 0 for all times in category $p$ during period $q$. $\textsc{Explore}(0,\cdot)$ is defined formally in Algorithm \ref{alg:Explore0}.

\begin{algorithm}
\caption{$\textsc{Explore}(0;\cdot)$}\label{alg:Explore0}
\hrule height\algoheightrule\kern3pt\relax
\textbf{Input:} time $t$, $\Xbb_{\leq t}$, $\textsc{Category}(t')$ for $t'\leq t$, rewards $\mb r_{<t}$, $\hat R_p^0(q)$ for $p\geq 0,q\geq p2^p$.\\
\textbf{Output:} Selects action $\hat a_t$ and updates $\hat R_p^0(q)$ for $p= \textsc{Category}(t)$, $q=\textsc{Period}(t)$.\\

$p= \textsc{Category}(t)$, $q=\textsc{Period}(t)$\\
$S_t = \{t'<t:\textsc{Category}(t') = p, \textsc{Period}(t')=q, X_{t'}=X_t\}$\\
$\hat a_t = \EXP_{\Acal}(\mb{\hat a}_{S_t}, \mb r_{S_t})$\\
Receive reward $r_t$\\
 Let  $t'=\min S_t$ \tcp*[f]{First occurrence of $X_t$}\\
$\hat R_p^0(q)\gets \hat R_p^0(q) + \frac{r_t}{p_{t'}}$ \tcp*[f]{Update estimate $\hat R_p^0(q)$}

\hrule height\algoheightrule\kern3pt\relax
\end{algorithm}

Then, we define $\textsc{Explore}(1,\cdot)$. It updates an estimator $\hat R^l_p(q)$ of the loss that would have been incurred using the policy $\pi^l$ for all times in category $p$ during period $q$, for all $l\geq 1$. Because there is an infinite number of such policies, they are introduced sequentially in the estimation process. \textsc{Explore} is defined formally in Algorithm \ref{alg:Explore1}.

\begin{algorithm}
\caption{$\textsc{Explore}(1;\cdot)$}\label{alg:Explore1}
\hrule height\algoheightrule\kern3pt\relax
\textbf{Input:} time $t$, $\Xbb_{\leq t}$, $\textsc{Category}(t')$ for $t'\leq t$, rewards $\mb r_{<t}$, $\hat R_p^l(q)$ for $l\geq 1,p\geq 0,q\geq p2^p$.\\
\textbf{Output:} Selects action $\hat a_t$ and updates $\hat R_p^l(q)$ for $p= \textsc{Category}(t)$, $q=\textsc{Period}(t)$.\\

$p=\textsc{Category}(t)$, $q=\textsc{Period}(t)$, $k = \lfloor \log_2 t \rfloor$\\
$l_t = \Ucal(\{1,\ldots,k\})$ \tcp*[f]{Uniform exploration} \\
$\hat a_t = \pi^{l_t}(X_t)$\\
Receive reward $r_t$\\
Let $t'=\min \{s<t:\textsc{Category}(s) = p, \textsc{Period}(s)=q, X_s=X_t\}$ \tcp*[f]{First occurrence of $X_t$}\\
$\hat R_p^l(q) \gets \hat R_p^l(q) + \frac{k}{p_{t'}}r_t\1[l=l_t],\quad 1\leq l\leq k$ \tcp*[f]{Update estimate $\hat R_p^{l_t}(q)$}

\hrule height\algoheightrule\kern3pt\relax
\end{algorithm}

The estimates $\hat R^l_p(q)$ updated by \textsc{Explore} are then used to select the strategy to perform on exploitation times. The learning rule that we will define acts separately on times from different categories: for any category $p\geq 0$, before starting phase $q$, the learning rule commits to performing strategy $\Pcal_p(q)\in\{0,1\}$, for times of that phase $q$ for category $p$. The choice of strategy $\Pcal_p(q)$ is performed by a subroutine \textsc{SelectStrategy} which applies an $\eta_p=\Ocal(2^{-p/2})$ average reward penalty for strategy $0$ then select the strategy that obtained the highest adjusted estimated reward during the previous period. Last, if during the current period $q$, strategy $0$ obtained the highest adjusted reward, we select this strategy for the future periods $q<q'\leq q+p2^p$. This ensures that if by mistake the rule selected $\Pcal_p(q)=1$, the loss incurred during this period is mitigated for the next strategy selection: the current performance until time $T^{q+1}$ is negligible up to a small average loss starting from time $T_p^{q+2^p+1}$. The construction of \textsc{SelectStrategy} is detailed in Algorithm \ref{alg:SelectStrategy}.

\begin{algorithm}
\caption{\textsc{SelectStrategy}}\label{alg:SelectStrategy}
\hrule height\algoheightrule\kern3pt\relax
\textbf{Input:} Category $p$, phase $q$, variable states $\hat R^l_p(t)$ for $t<T^{q+1}_p$\\
\textbf{Output:} Selects strategy $\Pcal_p(r)$ for some future phases $r>q$.\\

$\eta_p = 10\frac{\sqrt{|\Acal|\ln|\Acal|}}{2^{p/4}}$, $k = \lfloor\log_2 T_p^q \rfloor$\\
\If{$\Pcal_p(q+1)$ has not been defined yet}{
    \eIf{$\displaystyle \hat R_p^0(q) - \eta_p (T_p^{q+1}-T_p^q) \geq \max_{1\leq l\leq k} \hat R_p^l(q)$}{
        $\Pcal_p(q')=0,\quad q<q'\leq q+p2^p$ \tcp*[f]{perform strategy 0 until current performance is negligible up to a $\Ocal(2^{-p})$ average loss}
    }{
        $\Pcal_p(q+1)=1$
    }
    
}

\hrule height\algoheightrule\kern3pt\relax
\end{algorithm}

We are now ready to define the learning rule for stochastic rewards. On exploration times, the learning rule calls the subroutine \textsc{Explore}, and on exploitation times, the learning rule performs the corresponding strategy $\Pcal_p(q)$ for times in category $p$ during phase $q$. The construction of the learning rule is detailed in Algorithm \ref{alg:main_learning_rule}.

\begin{algorithm}
\caption{An optimistically universal learning rule for stochastic rewards}\label{alg:main_learning_rule}
\hrule height\algoheightrule\kern3pt\relax
$\hat R^l_p=0,  l\geq 0,p\geq 0$; $\Pcal_p(p2^{p+5})=0, p\geq 0$ \tcp*[f]{Initialization}\\

\For{$t\geq 1$}{
    Observe context $X_t$\\
    $p = \textsc{Category}(t)$, $q=\textsc{Period}(t)$\\
    \uIf(\tcp*[f]{Initially perform strategy 0 without period restriction}){$t<2^{32p}$}{
        $S_t = \{t'<t:\textsc{Category}(t')=p, X_{t'}=X_t\}$\\
        $\hat a_t = \EXP_{\Acal}(\mb{\hat a}_{S_t}, \mb r_{S_t})$
    }
    \uElseIf{$i:=\textsc{AssignPurpose}(t)\leq 1$}{
        \textsc{$\textsc{Explore}(i;t)$}
    }
    \Else(\tcp*[f]{Perform strategy $\Pcal_p(q)$}){
        \eIf{$\Pcal_p(q) = 0$}{
            $S_t = \{t'<t:\textsc{Category}(t')=p, \textsc{Period}(t')=q, X_{t'}=X_t\}$\\
            $\hat a_t = \EXP_{\Acal}(\mb{\hat a}_{S_t}, \mb r_{S_t})$
        }{
            $k = \lfloor\log_2 T_p^q \rfloor$ \\
            $S_t = \{t'<t:\textsc{Category}(t')=p, \textsc{Period}(t')=q,\textsc{AssignPurpose}(t')=2\}$\\
            $l_t = \EXPIX_{\{1,\ldots,k\}}\left(\mb l_{S_t}, \mb r_{S_t}\right)$ \tcp*[f]{Select policy $\pi^{l_t}$} \\
            $\hat a_t = \pi^{l_t}(X_t)$
        }
        Receive reward $r_t$
    }
    $\Ecal=\{(p',q'): q'\geq p'2^{p'+5}, t=T^{q'+1}_{p'}-1\}$\\
    \For{$(p',q') \in \Ecal$}{
        $\textsc{SelectStrategy}(p',q')$ \tcp*[f]{At the end of a phase $[T^{q'}_{p'},T^{q'-1}_{p'})$, select strategy for future phases}
    }
}

\hrule height\algoheightrule\kern3pt\relax
\end{algorithm}

The main result of this section is that this learning rule is optimistically universal.
\begin{theorem}\label{thm:opt_rule_stat}
    Let $\Xcal$ a metrizable separable Borel space and $\Acal$ a finite action set. Then, there exists an optimistically universal learning rule and the set of learnable processes is $\Ccal =\Ccal_2$.
\end{theorem}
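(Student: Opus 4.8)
The plan is as follows. The inclusion $\Ccal \subseteq \Ccal_2$ is exactly Theorem~\ref{thm:SMV-finite-nec}, so the entire content is to prove that Algorithm~\ref{alg:main_learning_rule} is universally consistent under every $\Xbb \in \Ccal_2$; this simultaneously gives $\Ccal_2 \subseteq \Ccal$ and exhibits an optimistically universal rule. Fix $\Xbb \in \Ccal_2$, a bounded reward mechanism $r$, and a measurable policy $\pi^*:\Xcal\to\Acal$. First I would pass to expected rewards: given $X_t$ and $\hat a_t$ (measurable w.r.t.\ the past, $X_t$, and internal randomness), both $r_t(\hat a_t)$ and the counterfactual $r_t(\pi^*(X_t))$ have conditional means $\bar r(\hat a_t,X_t)$ and $\bar r(\pi^*(X_t),X_t)$, so $\sum_{t\le T}\bigl[(r_t(\pi^*(X_t))-\bar r(\pi^*(X_t),X_t))-(r_t(\hat a_t)-\bar r(\hat a_t,X_t))\bigr]$ is a bounded-increment martingale, which divided by $T$ vanishes a.s.; hence it suffices to show $\limsup_T \frac1T\sum_{t\le T}\bigl(\bar r(\pi^*(X_t),X_t)-\bar r(\hat a_t,X_t)\bigr)\le 0$ a.s. I would then split the horizon by category, writing $N_p(T)=|\{t\le T:\textsc{Category}(t)=p\}|$, so $\sum_p N_p(T)=T$ and only categories $p\lesssim\log_4 T$ are alive by time $T$; the target reduces to bounding the category-$p$ contribution to the regret by $\bigl(\eta_p\wedge\mathrm{app}_p\bigr)N_p(T)+o(N_p(T))+\varepsilon_p T$ with $\sum_p\varepsilon_p<\infty$, $\eta_p=\Theta(2^{-p/4})$, and $\mathrm{app}_p$ an approximation error of the policy family $(\pi^l)$ on the category-$p$ times.

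Second, I would analyze the two base strategies inside a single category $p$ and period $[T_p^q,T_p^{q+1})$. For strategy $0$ (personalization) the algorithm runs an independent $\EXP$ per distinct instance over the category-$p$, period-$q$ times (both the strategy-$0$ exploration times and, when $\Pcal_p(q)=0$, the exploitation times), so Theorem~\ref{thm:exp3} together with a Freedman-type concentration of realized versus pseudo-regret shows that strategy $0$'s realized reward on those times is within $\tilde O\bigl(\sqrt{|\Acal|\,n}\bigr)$ of $\sum\max_a\bar r(a,X_t)$, where $n$ is the number of such times; since $\max_a\bar r(a,x)\ge\bar r(\pi^*(x),x)$ pointwise, strategy $0$ is, up to lower order, at least as good as $\pi^*$ on any block of category-$p$ times. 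For strategy $1$ (generalization) the algorithm runs $\EXPIX$ over $\pi^1,\dots,\pi^k$ with $k=\lfloor\log_2 T_p^q\rfloor$ on the exploitation times only, and Theorem~\ref{thm:multiarmed_bandits}, in the simplified $\delta$- and $T$-free form used for Corollary~\ref{cor:infinite-exp4}, gives sublinear regret against the best of $\pi^1,\dots,\pi^k$; the importance-weighted estimators $\hat R_p^0(q),\hat R_p^l(q)$ built by \textsc{Explore} are unbiased for the counterfactual rewards of the two strategies and, using that exploration occurs with probability of order $t^{-1/4}$ and is shared among duplicates within a period, concentrate around them with error $o(N_p^{(q)})$, where $N_p^{(q)}$ counts category-$p$ times in period $q$. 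The warm-up clause $t<2^{32p}$ and the requirement $q\ge p2^{p+5}$ are precisely what keep the estimator variances tolerable at the scale of one period.

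Third, I would combine these through \textsc{SelectStrategy}. Because it penalizes strategy $0$ by $\eta_p$ per time and then picks the empirically better strategy for period $q+1$ from the period-$q$ estimates, a telescoping argument over consecutive periods shows that on the category-$p$ times the algorithm's average reward converges to $\max\bigl(V_0-\eta_p,\,V_1\bigr)+o(1)$, where $V_0$ is the value of the best per-instance fixed arm and $V_1$ that of the best $\pi^l$, $l\le k$; the ``commit to strategy $0$ for the next $p2^p$ periods after a mistaken strategy-$1$ choice'' mechanism is exactly what dilutes a one-period error into a $\Theta(2^{-p})$ average loss, contributing the summable $\varepsilon_p$. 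It remains to identify the two benchmarks with $\pi^*$: for $V_0$ this is immediate from $\max_a\bar r(a,x)\ge\bar r(\pi^*(x),x)$, and for $V_1$ the key is Proposition~\ref{prop:C2_equivalent_forms}, since the union of categories $0,\dots,P$ is precisely $\Tcal^{\le 4^{P+1}-1}$, an extended $\Ccal_1$ process, on which density of $(\pi^l)_{l\ge 1}$ (and, for finite $\Acal$, $\bar r(\pi^*(x),x)-\bar r(\pi^l(x),x)\le\ind[\pi^l(x)\ne\pi^*(x)]$) forces the best $\pi^l$ to match $\pi^*$ up to arbitrarily small average gap as $P,k\to\infty$. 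Choosing $P$ so $\eta_P$ and the generalization gap are both below a prescribed $\varepsilon$, and summing the per-category bounds using $\sum_p N_p(T)=T$ together with Hölder over the $\lesssim\log T$ alive categories to absorb the $o(N_p(T))$ and $\varepsilon_p T$ terms into $o(T)$, yields $\limsup_T\frac1T\sum_{t\le T}\bigl(r_t(\pi^*(X_t))-r_t(\hat a_t)\bigr)\le 0$ a.s.

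The main obstacle, I expect, is this last bookkeeping: making the per-category lower-order terms (bandit regret $\tilde O(\sqrt{N_p(T)})$, exploration cost $\tilde O(N_p(T)^{3/4})$, and the mistaken-strategy dilution $\varepsilon_p T$) sum to $o(T)$ across all categories alive at time $T$, uniformly over how the process distributes its mass among categories; and, entangled with that, verifying that \textsc{SelectStrategy} cannot get permanently ``stuck'' on the wrong strategy for a category with positive asymptotic density. This is where the exact geometry of the periods $T_p^q$ and the exponents ($t^{1/4}$ exploration rate, $2^{-p/4}$ penalty, $p2^p$ commitment length, $2^{32p}$ warm-up) must all be balanced against one another, rather than being a routine estimate.
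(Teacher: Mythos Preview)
Your proposal is correct and follows essentially the same approach as the paper: reduce to expected rewards, analyze the two strategies per category/period via the $\EXP$ and $\EXPIX$ regret bounds together with concentration of the importance-weighted estimators $\hat R_p^l(q)$, use the $p2^p$-period commitment to dilute mistaken strategy choices to a summable $\Theta(2^{-p})T$ cost, and invoke Proposition~\ref{prop:C2_equivalent_forms} plus density of $(\pi^l)$ to handle the low categories. The only organizational difference is that the paper first proves a uniform per-category bound $\Rcal_p(T)\ge \bar R_p^*(T)-O(\epsilon_p)T-o(T)$ (your $V_0-\eta_p$ branch, Eq.~\eqref{eq:estimate_on_category}) and then separately shows that for $p<p_0$ the threshold test forces $\Pcal_p(q)=1$ eventually, rather than packaging both into a single $\max(V_0-\eta_p,V_1)$ statement; your formulation is a clean abstraction of the same two-case analysis.
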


\begin{proof}
We will denote by $\hat a_t$ the action selected by the learning rule at time $t$. For any $p\geq 0$, we define the set  $\Tcal_p$ of times in category $p$ as follows
\begin{equation*}
    \Tcal_p = \left\{t\geq 1: 4^p\leq \sum_{t'\leq t} \1[X_{t'} = X_t] <4^{p+1}\right\},
\end{equation*}
i.e. the set of times which correspond to duplicates of index in $[4^p,4^{p+1})$. We also define
\begin{align*}
    \Tcal_p^{exp,i} &= \{t\geq 2^{32p}: \textsc{AssignPurpose}(t)=i\},\quad i\in\{0,1\},\\
    \tilde\Tcal_p &=\{t\geq 2^{32p}: \textsc{AssignPurpose}(t)=2\},
\end{align*}
the set of exploration times for strategy $i$ in category $p$, and exploitation times in category $p$, respectively. For convenience, we also define $\Tcal_p(q) = \Tcal_p\cap[T_p^q,T_p^{q+1})$ times in category $p$ and phase $q$. Last, we define $A_p(q)=|\Tcal_p(q)\cap(\Tcal_p^{exp,0}\cup\Tcal_p^{exp,1})|$ the number of exploration times in period $q$ for category $p$.

Now fix a process $\Xbb\in\Ccal_2$ and let $r$ be a reward mechanism on $\Acal\times\Xcal$. We recall the notation $\bar r(\cdot,\cdot) = \Ebb[r(\cdot,\cdot)]$ for the average reward. We aim to show that $f_\cdot$ is consistent under $\Xbb$ for the rewards given by $r$. We first define the policy $\pi^*$ given by
\begin{equation*}
    \pi^*(x) = \argmax_{a\in \Acal}\bar r(a, x),
\end{equation*}
where ties are broken by the lexicographic rule. This function is measurable given that $\Acal$ is finite. Further, it is an optimal policy in the sense that for any measurable function $\pi:\Xcal\to\Acal$ and any $x\in\Xcal$, $\bar r(\pi(x), x)\leq \bar r(\pi^*(x), x)$.

For $p\geq 0$, we first analyze the reward estimates $\hat R_p^l(q)$ for $q\geq p2^{p+5}$ ($T_p^{p2^{p+5}}=2^{32p}$) and $l\geq 0$. First note that the exploration times $\Tcal_p^{exp,0}$ and $\Tcal_p^{exp,1}$ were constructed precisely so that times corresponding to the same instance and within the same period, fall in the same set $\Tcal_p^{exp,0}$, $\Tcal_p^{exp,1}$, or $\tilde \Tcal_p$. For simplicity, we will write $\Xcal_p(q) = \{X_t,t\in\Tcal_p(q)\}$ the set of visited instances during period $q$ of category $p$, and for $x\in\Xcal_p(q)$ we denote $t_p(q;x) = \min\{t\in\Tcal_p(q):X_t=x\}$ the first time of occurrence of $x$ in period $q$. Then, we can write
\begin{equation*}
    \hat R_p^0(q) = \sum_{x\in \Xcal_p(q)} \frac{\1[U_{t_p(q;x)}\leq p_{t_p(q;x)}]}{p_{t_p(q;x)}} \sum_{t\in\Tcal_p(q), X_t=x} \tilde r_t
\end{equation*}
where $\tilde r_t$ is the reward at time $t$ that would have been obtained by performing strategy 0 during period $q$, i.e., assigning an independent $\EXP$ learner for each different instance in this period. We compare $\hat R_p^0(T)$ to the average reward obtained by the optimal policy $\pi^*$,
\begin{equation*}
    \bar R_p^*(q):=\sum_{t\in\Tcal_p(q)} \bar r(\pi^*(X_t), X_t).
\end{equation*}
Observe that conditionally on $\Xbb$, the terms in the sum of $\hat R_p^0(q)$ are independent. For any $x\in\Xcal_p(q)$, let $\bar R_p^0(q;x) = \Ebb[\sum_{t\in\Tcal_p(q), X_t=x} \tilde r_t\mid\Xbb]$, the average reward obtained by strategy $0$ on the instance $x$. We will use the notation $N_p(q;x) =|\{t\in\Tcal_p(q),X_t=x\}|\leq 4^{p+1} $ for the number of occurrences of the instance $x$ within $\Tcal_p$. Note that
\begin{equation*}
    \left|\frac{\1[U_{t_p(q;x)}\leq p_{t_p(q;x)}]}{p_{t_p(q;x)}} \sum_{t\in\Tcal_p(q), X_t=x} \tilde r_t\right|\leq \frac{N_p(q;x)}{p_{t_p(q;x)}}\leq 2^{2p+3} (T_p^{q+1})^{1/4},
\end{equation*}
and that $|\Xcal_p(q)|\leq \frac{T_p^{q+1}}{2^{2p}}$ since by definition of $\Tcal_p$ each instance has already occurred $4^p$ times. As a result, we can apply Hoeffding's inequality to obtain
\begin{equation*}
    \Pbb\left[\left|\hat R_p^0(q)- \sum_{x\in\Xcal_p(q)}  \bar R_p^0(q;x)\right|\leq (T_p^{q+1})^{\frac{7}{8}}  \mid\Xbb\right] \geq 1-2\exp\left(-\frac{(T_p^{q+1})^{1/4}}{2^{2p+5}}\right) := 1-2p_1(p,q)
\end{equation*}
Now applying Theorem \ref{thm:exp3} to each pseudo-regret $\bar R_p^0(q;x)$ yields
\begin{align*}
    \sum_{x\in\Xcal_p(q)}& \bar R_p^0(q;x) \\
    &\geq \sum_{x\in\Xcal_p(q)} N_p(q;x)\left( \max_{a\in\Acal} \bar r(a, x) - 2\sqrt{\frac{ |\Acal|\ln |\Acal|}{N_p(q;x)}} \right)\\
    &\geq \bar R_p^*(q) - 2\sqrt{\frac{ |\Acal|\ln |\Acal|}{2^{p/2}}}(T_p^{q+1}-T_p^q) - 2\sqrt{|\Acal|\ln |\Acal|} \sum_{x\in\Xcal_p(q),N_p(q;x)\leq 2^{p/2}} N_p(q;x)\\
    &\geq \bar R_p^*(q) - 2\frac{\sqrt{ |\Acal|\ln |\Acal|}}{2^{p/4}}(T_p^{q+1}-T_p^q) - 2\sqrt{|\Acal|\ln |\Acal|} \frac{2^{p/2}}{4^p} T_p^{q+1}\\
    &\geq \bar R_p^*(q) - 6\frac{\sqrt{ |\Acal|\ln |\Acal|}}{2^{p/4}}(T_p^{q+1}-T_p^q).
\end{align*}
where in the third inequality, we used the fact that instances appearing in $\Tcal_p$ before $T_p^{q+1}$ are visited at least $4^p$ times before horizon $T_p^{q+1}$, by construction of $\Tcal_p$; and in the last inequality we used $2^{-p-1}T_p^{q+1}\leq T_p^{q+1}-T_p^q\leq 2^{-p}T_p^q$. Also, note that $\bar R_p^*(q)\geq \sum_{x\in\Xcal_p(q)} \bar R_p^0(q;x)$. As a result, taking the expectation over $\Xbb$, we obtain that with probability at least $ 1-2p_1(p,q)$,
\begin{equation}\label{eq:exp0_estimate}
    \left|\hat R_p^0(q) -  \bar R_p^*(q)\right|\leq  (T_p^{q+1})^{\frac{7}{8}} + 6\frac{\sqrt{ |\Acal|\ln |\Acal|}}{2^{p/4}}(T_p^{q+1}-T_p^q).
\end{equation}
Now consider the quantity $\tilde R^0_p(q)$, the reward that would be obtained for exploitation times on period $q$ if strategy 0 was applied. We have
\begin{align*}
    \tilde R^0_p(q) &= \sum_{x\in\Xcal_p(q)} \sum_{t\in\Tcal_p(q), X_t=x, t\in\tilde \Tcal_p}\tilde r_t\\
    &\geq \sum_{x\in\Xcal_p(q)} \sum_{t\in\Tcal_p(q), X_t=x}\tilde r_t - A_p(q).
\end{align*}
Similarly as above, using Hoeffding's inequality, we have
\begin{equation*}
    \Pbb\left[\sum_{x\in\Xcal_p(q)} \sum_{t\in\Tcal_p(q), X_t=x}\tilde r_t \geq  \sum_{x\in\Xcal_p(q)}\bar R^0_p(q;x) - (T_p^{q+1})^{3/4}\right] \geq 1-e^{-\frac{\sqrt {T_p^{q+1}}}{2^{2p+3}}} :=1-p_2(p,q).
\end{equation*}
As a result, with probability $1-p_2(p,q)$, we have
\begin{equation}\label{eq:estimate_strategy0_exploitation}
    \tilde R^0_p(q) \geq \bar R_p^*(q) - 6\frac{\sqrt{ |\Acal|\ln |\Acal|}}{2^{p/4}}(T_p^{q+1}-T_p^q) - (T_p^{q+1})^{3/4} - A_p(q).
\end{equation}

We now turn to the estimates $\hat R_p^l(q)$ for $l\geq 1$. Note that the estimation of $R_p^l(q)$ only starts at time $2^l$. Hence, we can consider $k(q)=\lfloor\log_2 T_p^q\rfloor = \lfloor \frac{q}{2^p}\rfloor$ and observe that during period $q$, the only estimates $\hat R_p^l(q)$ that are considered are for $1\leq l\leq k(q)$. Therefore, similarly as for the estimates $\hat R_p^0(q)$, we can write for $q\geq p2^{p+5}$ and $1\leq l\leq k(q)$,
\begin{equation*}
    \hat R_p^l(q) = \sum_{x\in\Xcal_p(q)} \frac{\1[p_{t_p(q;x)}< U_{t_p(q;x)}\leq 2p_{t_p(q;x)}]}{p_{t_p(q;x)}} \sum_{t\in\Tcal_p(q) X_t=x}  k(t)\1[l=l_t] r(\pi^l(x), x),
\end{equation*}
where $k(t)$ is the number of policies $\pi^l$ tested at time $t$, i.e. $k(t) = \lfloor \log_2 t\rfloor$. Conditionally on $\Xbb$ and $\mb U$ we can apply Hoeffding's inequality to obtain
\begin{align*}
    &\Pbb\left[\left|\hat R_p^l(q) - \sum_{x\in\Xcal_p(q)} \sum_{t\in\Tcal_p(q), X_t=x} \frac{\1[p_{t_p(q;x)}< U_{t_p(q;x)}\leq 2p_{t_p(q;x)}]}{p_{t_p(q;x)}}  \bar r(\pi^l(x), x) \right| \right.\\
    &\quad\quad\quad\quad\quad\quad\quad\quad\quad\quad\quad\quad\quad\quad\quad\quad\quad\quad\quad\quad\quad\quad\quad\quad\quad\quad\quad\quad \left.\leq (T_p^{q+1})^{7/8}\mid \Xbb,\mb U\right]\\
    &\geq 1- 2e^{-\frac{2(T_p^{q+1})^{7/4}}{(T_p^{q+1}-T_p^q)4(\log_2 T_p^{q+1})^2 \sqrt {T_p^{q+1}}}} \geq 1-2e^{-\frac{2^p(T_p^{q+1})^{1/4}}{4(\log_2 T_p^{q+1})^2}}:=1-2p_3(p,q).
\end{align*}
For convenience, let us denote by $\hat R^l_{p,bis}(q)$ the sum in the above inequality. We also define $\bar R^l_p(q) =\sum_{t\in\Tcal_p(q)} \bar r(\pi^l(X_t), X_t)$ the expected reward of policy $l$ on period $q$. Now, similarly as before, we have
\begin{equation*}
    0\leq \sum_{t\in\Tcal_p(q), X_t=x} \frac{\1[p_{t_p(q;x)}< U_{t_p(q;x)}\leq 2p_{t_p(q;x)}]}{p_{t_p(q;x)}}  \bar r(\pi^l(x), x) \leq \frac{N_p(q;x)}{p_{t_p(q;x)}}\leq 2^{2p+3} (T_p^{q+1})^{1/4}.
\end{equation*}
As a result, conditionally on $\Xbb$, Hoeffding's inequality yields
\begin{equation*}
    \Pbb[| \hat R^l_{p,bis}(q) - \bar R^l_p(q) |\leq (T_p^{q+1})^{7/8}\mid \Xbb]    \geq 1- 2p_1(p,q).
\end{equation*}
Thus, with probability at least $1-2p_1(p,q)-2p_3(p,q)$ we have
\begin{equation}\label{eq:exp1_estimates}
    |\hat R_p^l(q)  - \bar R^l_p(q)|\leq (T_p^{q+1})^{7/8}.
\end{equation}
Next, we consider the quantity $\tilde R^1_p(q)$, the reward that would have been obtained for exploitation times on period $q$ if strategy 1 was applied. Then, using Theorem \ref{thm:multiarmed_bandits}, we have with probability at least $1-e^{-(T_p^{q+1})^{1/4}}:=1-p_4(p,q)$,
\begin{align*}
    \max_{1\leq l\leq k(q)} \sum_{t\in\Tcal_p(q)\cap\tilde\Tcal_p}r_t(\pi^l(X_t), X_t) -\tilde R^1_p(q) &\leq c\sqrt{k(q)\ln k(q)(T_p^{q+1}-T_p^q)}(T_p^{q+1})^{1/4}\\
    &\leq c(T_p^{q+1})^{3/4} \ln T_p^{q+1}.
\end{align*}
As a result, we have
\begin{equation*}
    \tilde R^1_p(q) \geq \max_{1\leq l\leq k(q)} \sum_{t\in\Tcal_p(q)}r_t(\pi^l(X_t), X_t) - c(T_p^{q+1})^{3/4} \ln T_p^{q+1} - A_p(q).
\end{equation*}
Now, by Hoeffding's inequality, for every $1\leq l\leq k(q)$, with probability at least $1-e^{-2^p \sqrt{T_p^{q+1}}}:=1-p_5(p,q)$,
\begin{equation*}
    \sum_{t\in\Tcal_p(q)}r_t(\pi^l(X_t), X_t) \geq \bar R^l_p(q) - (T_p^{q+1})^{3/4}.
\end{equation*}
Hence, with probability $1-p_4(p,q)-k(q)p_5(p,q)$ we have
\begin{equation}\label{eq:estimate_strategy1_exploitation}
    \tilde R^1_p(q) \geq \max_{1\leq l\leq k(q)}\bar R^l_p(q) -  (T_p^{q+1})^{3/4} - c(T_p^{q+1})^{3/4} \ln T_p^{q+1} - A_p(q).
\end{equation}
We will also need the quantity $\tilde R^1_p(q;T)$ for $T_p^q\leq T<T_p^{q+1}$ which is the reward that would have been obtained for exploitation times from $T_p^q$ to $T$. The exact same arguments as above show that with probability at least $1-p_4(p,q)-k(q)p_5(p,q)$ we have
\begin{multline}\label{eq:exploitation_1_tail}
    \tilde R^1_p(q;T) \geq \max_{1\leq l\leq k(q)} \sum_{t\in\Tcal_p(q),t\leq T}\bar r(\pi^l(X_t), X_t) -  (T_p^{q+1})^{3/4} - c(T_p^{q+1})^{3/4} \ln T_p^{q+1} \\
    - A_p(q).
\end{multline}
Last, we now bound the exploration terms $A_p(q)$ to show that exploration times are negligible. Writing $A_p(q) = \sum_{x\in\Xcal_p(q)}\1[U_{t_p(q;x)}\leq 2p_{t_p(q;x)}]N_p(q;x)$, and because $\frac{N_p(q;x)}{t_p(q;x)^{1/4}}\leq 2^{2p+2}(T_p^{q+1})^{1/4}$, using Hoeffding's inequality we obtain that with probability at least $1-e^{-\frac{(T_p^{q+1})^{1/4}}{2^{2p+3}}}:=1-p_6(p,q)$,
\begin{equation}\label{eq:exploration_bound}
    A_p(q) \leq \sum_{x\in\Xcal_p(q)} \frac{N_p(q;x)}{t_p(q;x)^{1/4}} + (T_p^{q+1})^{7/8}
    \leq \frac{T_p^{q+1}-T_p^q}{(T_p^q)^{1/4}} + (T_p^{q+1})^{7/8}
    \leq 2(T_p^{q+1})^{7/8}.
\end{equation}
Now recalling that $k(q)\leq \frac{q}{2^p}$, we have that
\begin{multline*}
    \sum_{p\geq 0} \sum_{q\geq p2^{p+5}} 2p_1(p,q)+p_2(p,q) +p_6(p,q)+ k(q) (2p_1(p,q)+2p_3(p,q))  \\
    +(p_4(p,q)+k(q)p_5(p,q))(1+T_p^{q+1}-T_p^q)<\infty.
\end{multline*}
As a result, the Borel-Cantelli lemma implies that on an event $\Ecal$ of probability one, there exists $\hat T_1$ such that for any $p\geq 0$, $q\geq p2^{p+5}$ Eq~\eqref{eq:exp0_estimate}, \eqref{eq:estimate_strategy0_exploitation}, \eqref{eq:estimate_strategy1_exploitation} and \eqref{eq:exploration_bound} are satisfied, and \eqref{eq:exp1_estimates} is satisfied for $q\geq l,p2^{p+5}$, and Eq~\eqref{eq:exploitation_1_tail} is satisfied for $T_p^q\leq T<T_p^{q+1}$.

We are now ready to prove the universal consistence of the learning rule. First, we pose $\epsilon_p = 2\frac{\sqrt{|\Acal|\ln|\Acal|}}{2^{p/4}}$ and aim to show that the average error made by the learning rule on $\Tcal_p$ is $\Ocal(\epsilon_p)$ uniformly over time. Note in particular that $\sum_{p\geq 0}\epsilon_p<\infty$. For any $T\geq 1$, we define $\Rcal_p(T) = \sum_{t\leq T,t\in\Tcal_p}r_t$ the reward obtained by the learning rule, and $\bar R^*_p(T)=\sum_{t\leq T,t\in\Tcal_p}\bar r(\pi^*(X_t), X_t)$ the reward obtained by the optimal policy. To do so, we first start by analyzing the regret on the first period $[1,2^{32p})$ where there is no exploration and the learning rule uses $\EXPIX$ learners on each new instance. For $T<2^{32p}$ let $\Xcal_p(T):=\{X_t,t\in\Tcal_p, t\leq T\}$. Note that $|\Xcal_p(T)|\leq\frac{T}{4^p}$ by definition of $\Tcal_p$. For $x\in\Xcal_p(T)$, let $N_p(T;x)=|\{t\leq T, t\in\Tcal_p, X_t=x\}|\leq 2^{2p+2}$ and $\bar R^0_p(T;x):=\Ebb[\sum_{t\leq T,t\in\Tcal_p, X_t=x} \tilde r_t \mid \Xbb]$ where $\tilde r_t$ is the reward obtained if we used strategy 0. Now by Theorem \ref{thm:multiarmed_bandits}, for every $x\in\Xcal_p(T)$, with probability at least $1-e^{- p^2T^{1/2^7}}$, we have
\begin{equation*}
    \sum_{t\leq T, t\in\Tcal_p(T),X_t=x} r_t(\pi^*(x), x) - \Rcal_p(T)  \leq cp T^{1/2^7}\sqrt{|\Acal|\ln|\Acal|N_p(T;x)}
\end{equation*}
As a result, with probability at least $1-Te^{- p^2T^{1/2^7}}:=1-p_7(p,T)$,
\begin{align*}
    \sum_{t\leq T,t\in\Tcal_p}r_t (\pi^*(x), x)-r_t  
    &\leq \frac{2^p}{4^p}T + \sum_{x\in\Xcal_p(T),N_p(T;x)\geq 2^p} \sum_{t\leq T,t\in\Tcal_p,X_t=x}r_t (\pi^*(x), x)-r_t \\
    &\leq \frac{T}{2^p} +  cp\sqrt{|\Acal|\ln|\Acal|}T^{1/2^7} \sum_{x\in\Xcal_p(T), N_p(T;x)\geq 2^p}\sqrt{N_p(T;x)}\\
    &\leq \frac{T}{2^p} +  cp\sqrt{|\Acal|\ln|\Acal|}T^{1/2^7} \frac{T}{2^{p/2}}\\
    &\leq \frac{T}{2^p} +  \frac{c}{2}\sqrt{|\Acal|\ln|\Acal|}T^{1-1/2^7}\log_2 T,
\end{align*}
where in the last inequality, we used $2^{2p}\leq T<2^{32p}$, thus $2^{p/2}\geq T^{1/64}$. Then, by Hoeffding's inequality, we have with probability $1-e^{-2p^2\sqrt T}:=1-p_8(p,T)$,
\begin{equation*}
    \sum_{t\leq T,t\in\Tcal_p}r_t (\pi^*(x), x)\geq \bar R^*_p(T) - \frac{\log_2 T}{2}T^{3/4}.
\end{equation*}
Finally, with probability at least $1-p_7(p,T)-p_8(p,T)$, we obtain
\begin{equation}\label{eq:performance_initial_phase}
    \Rcal_p(T) \geq \bar R^*_p(T) -  \frac{1+c}{2}\sqrt{|\Acal|\ln|\Acal|}T^{1-1/2^7}\log_2 T -\frac{T}{2^p}.
\end{equation}
Noting that $\sum_{p\geq 0}\sum_{T\geq 1}p_7(p,T)+p_8(p,T)<\infty$, the Borel-Cantelli lemma implies that on an event $\Fcal$ of probability one, there exists $\hat T_2$ such that for all $T\geq \hat T_2$, and $p\geq 0$ such that $T<2^{32p}$, Eq~\eqref{eq:performance_initial_phase} holds. We will now suppose that the event $\Ecal\cap\Fcal$ of probability one is met.

Next we consider the case of $T\geq 2^{32p}$, and let $q_0\geq p2^{p+5}$ such that $T^{q_0}_p\leq T<T^{q_0+1}_p$. Then, consider
\begin{equation*}
    \Scal^0_p := \left\{p2^{p+5}\leq q<q_0: \hat R_p^0(q) - \eta_p (T_p^{q+1} - T_p^q)
    \geq \max_{1\leq l\leq k(q)} \hat R_p^k(q)\right\},
\end{equation*}
the set of phases where the learning rule estimated that strategy $0$ performed better than strategy $1$. Next, let $\Pcal^i_p = \{p2^{p+5}\leq q<q_0: \Pcal_p(q)=i\}$ the set of phases where the learning rule performed strategy $i$ for $i\in\{0,1\}$. An important observation is that for two phases $q_1<q_2\in\Scal^0_p\cap\Pcal^1_p$, if strategy $1$ should not have been performed, then $q_2>q_1+p2^p$. In particular, we have $T^{q_1}_p\leq 2^{-p} T^{q_2}_p$, hence $T^{q_1+1}_p-T^{q_1}_p \leq  2^{-p}(T^{q_2+1}_p-T^{q_2}_p)$. This allows to dissipate the errors made during phases where the algorithm performs strategy $1$ by mistake. Precisely, using a descending induction we obtain
\begin{equation*}
    \sum_{q\in \Scal^0_p\cap\Pcal^1_p} T^{q+1}_p - T^q_p \leq \frac{T^{q_0}_p-T^{q_0-1}_p}{1-2^{-p}} \leq 2\cdot 2^{-p}T^{q_0}_p \leq 2^{-p+1}T \leq 2\epsilon_p T.
\end{equation*}
On all other phases $\Pcal^0_p\cup(\Pcal^1_p\setminus\Scal^0_p)$, the performance of the learning rule is close to having performed strategy $0$ on all phases. Indeed, using Eq~\eqref{eq:estimate_strategy1_exploitation} we obtain 
\begin{align*}
    \sum_{q\in (\Pcal^1_p\setminus\Scal^0_p)}\tilde R^1_p(q)
    &\geq \sum_{q\in (\Pcal^1_p\setminus\Scal^0_p)}\max_{l=1,\ldots,k(q)}\bar R^l_p(q) -  (T_p^{q+1})^{3/4} - c(T_p^{q+1})^{3/4} \ln T_p^{q+1} - A_p(q)\\
    &\geq  \sum_{q\in (\Pcal^1_p\setminus\Scal^0_p)}\max_{l=1,\ldots,k(q)}\hat R^l_p(q) - \sum_{q<q_0}\left(4(T_p^{q+1})^{7/8}  + c(T_p^{q+1})^{3/4} \ln T_p^{q+1}\right) \\
    &\geq  \sum_{q\in (\Pcal^1_p\setminus\Scal^0_p)}\hat R^0_p(q) -\eta_pT_p^{q_0} - 4(4+c\ln T_p^{q_0})T^{15/16}\\
    &\geq \sum_{q\in (\Pcal^1_p\setminus\Scal^0_p)}\bar R^*_p(q) -\eta_p T - 3\epsilon_p T - 4(5+c\ln T)T^{15/16}.
\end{align*}
In the second inequality, we used Eq~\eqref{eq:exp1_estimates} and in the third inequality, we used the definition of $\Scal^0_p$ and the identities $\sum_{q\leq q_0} (T_p^q)^{7/8} \leq (T_p^{q_0})^{7/8}\frac{2^p}{1-2^{-7/8}}\leq 2^{p+2}(T_p^{q_0})^{7/8} \leq 4 T^{15/16}$. In the last inequality, we used Eq~\eqref{eq:exp0_estimate}. Next, using Eq~\eqref{eq:estimate_strategy0_exploitation} we have directly
\begin{equation*}
    \sum_{q\in\Pcal^0_p} \tilde R^0_p(q) \geq \sum_{q\in\Pcal^0_p}\bar R_p^*(q) - 3\epsilon_p T - 3\cdot 4 T^{15/16}  .
\end{equation*}
Combining the two above inequalities and observing that $\eta_p= 5\epsilon_p$ gives
\begin{align*}
    \sum_{2^{32p}\leq t<T^{q_0}_p, t\in\Tcal_p}&r_t \geq \sum_{q\in \Pcal^0_p}\tilde R^0_p(q) + \sum_{q\in \Pcal_1\setminus\Scal^0}\tilde R^1_p(q)\\
    &\geq \sum_{q\in \Pcal^0_p\cup (\Pcal^1_p\setminus\Scal^0_p)} \bar R_p^*(q) - 11\epsilon_p T -(32+4c\ln T)T^{15/16}\\
    &\geq \sum_{p2^{p+5}\leq q<q_0}\bar R_p^*(q) - \sum_{q\in\Scal^0_p\cap\Pcal^1_p} (T^{q+1}_p - T^q_p) - 11\epsilon_p T -(32+4c\ln T)T^{15/16}\\
    &\geq \sum_{p2^{p+5}\leq q<q_0}\bar R_p^*(q) - 13\epsilon_p T -(32+4c\ln T)T^{15/16}.
\end{align*}
Now recalling the former estimate of $\Rcal_p(T)$ for $T<2^{32p}$, we obtain
\begin{align*}
    \Rcal_p(T) &\geq \Rcal_p(2^{32p}-1) + \sum_{2^{32p}\leq t<T^{q_0}_p, t\in\Tcal_p}r_t\\
    &\geq \bar R^*_p(T)- 2\frac{T}{2^p}  -  \frac{1+c}{2}\sqrt{|\Acal|\ln|\Acal|}T^{1-1/2^7}\log_2 T  - 13\epsilon_p T -(32+4c\ln T)T^{15/16}\\
    &\geq  \bar R^*_p(T) -  \frac{1+c}{2}\sqrt{|\Acal|\ln|\Acal|}T^{1-1/2^7}\log_2 T -(32+4c\ln T)T^{15/16}  - 15\epsilon_p T
\end{align*}
where the term $\frac{T}{2^p}$ comes from the fact that $T-(T^{q_0}_p-1)\leq T^{q_0+1}_p-T^{q_0}_p\leq \frac{T}{2^p}$.
Now note that if $t\in\Tcal_p$, there were at least $4^p$ duplicates, hence $t\geq 4^p$. As a result, we can always suppose without loss of generality that $T\geq 4^p$. Combining with the case $T<2^{32p}$, we obtain that for all $T\geq \max(\hat T_1,\hat T_2)$, $p\geq 0$ with $t\geq 4^p$,
\begin{equation}\label{eq:estimate_on_category}
    \Rcal_p(T) \geq \bar R^*_p(T) - (33+5c) \sqrt{|\Acal|\ln|\Acal|}T^{1-1/2^7}\log_2 T - 15\epsilon_pT.
\end{equation}
This ends the proof that on times $\Tcal_p$, the learning rule has an average error at most $\Ocal(\epsilon_p)$ on the event $\Ecal\cap\Fcal$. Because $\sum_{p\geq 0}\epsilon_p<\infty$, we can afford to converge on each set $\Tcal_p$ to the optimal policy independently. 

Precisely, we aim to show that
\begin{equation*}
    \limsup_{T\to\infty}\frac{1}{T}\sum_{t=1}^T \bar r(\pi^*(X_t), X_t)-r_t\leq 0,\quad (a.s.).
\end{equation*}
Fix $0<\epsilon\leq 1,\delta>0$ and let $p_0$ such that $\sum_{p\geq p_0}\epsilon_p<\frac{\epsilon}{15}$. Because $\Xbb\in\Ccal_2$, by Proposition \ref{prop:C2_equivalent_forms}, $\Xbb^{\leq 4^{p_0}}\in\Ccal_1$. As a result, because the sequence of policies $(\pi^l)_l$ is dense under $\Ccal_1$ processes, there exists $l_0\geq 1$ such that
\begin{equation*}
    \Ebb\left[\limsup_{T\to\infty}\frac{1}{T} \sum_{t\leq T, t\in\Tcal^{\leq 4^{p_0}}} \1[\pi^*(X_t)\neq\pi^{l_0}(X_t)]\right]\leq \frac{\epsilon\delta}{2^{2p_0+2}p_0}.
\end{equation*}
Then, by the dominated convergence theorem, there exists $T_0$ such that
\begin{equation*}
    \Ebb\left[\sup_{T\geq T_0}\frac{1}{T} \sum_{t\leq T, t\in\Tcal^{\leq 4^{p_0}}} \1[\pi^*(X_t)\neq\pi^{l_0}(X_t)]\right]\leq \frac{\epsilon\delta}{2^{2p_0+1}p_0}.
\end{equation*}
In particular, on an event $\Bcal_\delta$ of probability at least $1-\delta$, the Markov inequality yields that for all $T\geq T_0$,
\begin{equation*}
    \sum_{t\leq T, t\in\Tcal^{\leq 4^{p_0}}} \1[\pi^*(X_t)\neq\pi^{l_0}(X_t)] \leq \frac{\epsilon}{2^{2p_0+1}p_0}T.
\end{equation*}
In particular, the above equation holds if we replace $\Tcal^{\leq 4^{p_0}}$ by $\Tcal_p$ for any $p<p_0$.
Now suppose that the event $\Ecal\cap\Fcal\cap \Bcal_\delta$ of probability at least $1-\delta$ is met. For any $p<p_0$ and $q\geq p2^{p+5}$ such that $T^q_p\geq \hat T:=\max( \hat T_1,\hat T_2,2^{l_0},2^{32p_0})$, because $T^q_p\geq 2^{l_0} $, we have
\begin{align*}
    \max_{1\leq l\leq k(q)}\hat R_p^k(q) 
    &\geq \hat R_p^{l_0}(q)\\
    &\geq \bar R^{l_0}_p(q)  -(T_p^{q+1})^{7/8}\\
    &\geq \bar R_p^*(q)  -(T_p^{q+1})^{7/8} - \sum_{t\in\Tcal_p(q)}\1[\pi^*(X_t)\neq\pi^{l_0}(X_t)]
    \\
    &\geq \hat R_p^0(q) - 2(T_p^{q+1})^{7/8} - 3\epsilon_p (T_p^{q+1}-T_p^q) - 2^{-2p-1}T_p^{q+1}\\
    &\geq \hat R_p^0(q) - 2(T_p^{q+1})^{7/8} - 4\epsilon_p (T_p^{q+1}-T_p^q).
\end{align*}
where in the second inequality we used Eq~\eqref{eq:exp1_estimates} and in the fourth we used Eq~\eqref{eq:exp0_estimate}. In the last inequality, we used $2^{-p-1}T_p^{q+1}\leq T_p^{q+1}-T_p^q$. Now let $T_1$ such that $2T^{7/8}< \frac{\epsilon_p}{2^{p+1}}T$ for any $T\geq T_1$. Then, for any $p<p_0$ and $q\geq p2^{p+5}$ such that $T^q_p\geq \tilde T:=\max(\hat T,T_1)$, we have 
\begin{equation*}
    \max_{1\leq l\leq k(q)} \hat R_p^k(q) > \hat R_p^0(q) -5\epsilon_p (T_p^{q+1}-T_p^q),
\end{equation*}
which implies $\Pcal_p(q+1)=1$ since $\eta_p=5\epsilon_p$ if $\Pcal_p(q+1)$ was not already defined. In other terms, starting from time $2^{p_0}\tilde T$, the learning rule always chooses strategy $1$ for categories $p<p_0$. We now bound the error of the learning rule on $\Tcal_p$ for $p<p_0$. Let $\tilde q$ such that $T^{\tilde q-1}_p\leq 2^{p_0}\tilde T<T^{\hat q}_p$. For any $T\geq  2^{p_0}\tilde T$ and $q(T)$ such that $T^{q(T)}_p\leq T<T^{q(T)+1}_p$, we can write
\begin{align*}
    \Rcal_p(T) &- \bar R_p^*(T) \geq  \sum_{\tilde q<q<q(T)} (\tilde R^1_p(q) - \bar R^*_p(q)) + \tilde R^1_p(q(T),T) - \sum_{t\in\Tcal_p(q),t\leq T}\bar r(\pi^*(X_t), X_t)\\
    &\quad\quad\quad\quad\quad - 2^{p_0}\tilde T -\sum_{q<q(T)}A_p(q) \\
    &\geq  \sum_{\tilde q<q<q(T)} ( R^{l_0}_p(q) - \bar R^*_p(q))  - \sum_{t\in\Tcal_p(q),t\leq T}\1[\pi^*(X_t)\neq \pi^{l_0}(X_t)] - 2^{p_0}\tilde T  \\
    &\quad\quad\quad\quad\quad -\sum_{q\leq q(T)}(2A_p(q)+(T_p^{q+1})^{3/4}+c(T_p^{q+1})^{3/4} \ln T_p^{q+1} )\\
    &\geq - \sum_{t\leq T,t\in\Tcal_p} \1[\pi^*(X_t)\neq \pi^{l_0}(X_t)] -2^{p_0}\tilde T-4(3+c)(T_p^{q(T)+1})^{15/16}\ln T_p^{q(T)+1} \\
    &\geq -2^{p_0}\tilde T-16(3+c)T^{15/16}\ln T -\frac{\epsilon}{2^{p_0}p_0}T.
\end{align*}
where in the second inequality we applied Eq~\eqref{eq:estimate_strategy1_exploitation} and Eq~\eqref{eq:exploitation_1_tail}, and in the third inequality, we used the identity $\sum_{q\leq q(T)} (T^{q+1}_p-T^q_p)^{3/4}\leq 4(T^{q(T)+1}_p)^{7/8}$ proved earlier. As a result, we can write
\begin{equation*}
    \sum_{p<p_0} \bar R_p^*(T) - \Rcal_p(T)\leq p_02^{p_0}\tilde T + 16p_0(3+c)T^{15/16}\ln T + \epsilon T.
\end{equation*}
Now because the events $\Ecal,\Fcal$ are met, using Eq~\eqref{eq:estimate_on_category}, we also have for $T\geq \tilde T$
\begin{align*}
    \sum_{p\geq p_0} \bar R_p^*(T) - \Rcal_p(T) &=\sum_{p_0\leq p<\log_{4} T} \bar R_p^*(T) - \Rcal_p(T)\\
    &\leq (17+3c) \sqrt{|\Acal|\ln|\Acal|}T^{1-1/2^7}(\log_2 T)^2 + 15 \sum_{p\geq p_0}\epsilon_p \cdot T\\
    &\leq (17+3c) \sqrt{|\Acal|\ln|\Acal|}T^{1-1/2^7}(\log_2 T)^2 + \epsilon T
\end{align*}
Summing the two above inequalities gives
\begin{multline*}
    \sum_{t=1}^T \bar r(\pi^*(X_t), X_t)-r_t \leq p_02^{p_0}\tilde T + 16p_0(3+c)T^{15/16}\ln T\\
    +(17+3c) \sqrt{|\Acal|\ln|\Acal|}T^{1-1/2^7}(\log_2 T)^2 + 2\epsilon T.
\end{multline*}
As a result, on the event $\Ecal\cap\Fcal\cap\Gcal\cap\Bcal_\delta$ of probability at least $1-\delta$, we have
\begin{equation*}
    \limsup_{T\to\infty}\frac{1}{T} \sum_{t=1}^T \bar r(\pi^*(X_t), X_t)-r_t \leq 2\epsilon.
\end{equation*}
Because this holds for any $\delta>0$ and $0<\epsilon<1$, this shows that almost surely, we have $\limsup_{T\to\infty}\frac{1}{T} \sum_{t=1}^T \bar r(\pi^*(X_t), X_t)-r_t \leq 0.$ We denote by $\Ccal$ this event. We now formally show that the learning rule is universally consistent. Let $\pi:\Xcal\to\Acal$ be a measurable function. First, by the Hoeffding inequality, we have for $T\geq 1$,
\begin{equation*}
    \Pbb\left[\left|\sum_{t=1}^T r_t(\pi(X_t), X_t) - \bar r_t(\pi(X_t), X_t)\right|\leq T^{3/4}\right] 1-e^{-2\sqrt T}.
\end{equation*}
As a result, the Borel-Cantelli lemma implies that on an event $\Hcal$ of probability one, there exists $\hat T_4$ such that for all $T\geq \hat T_4$, $|\sum_{t=1}^T r_t(\pi(X_t), X_t) - \bar r_t(\pi(X_t), X_t)|\leq T^{3/4}$. Then, on $\Ccal\cap\Hcal$ of probability one, for any $T\geq \hat T_4$ we have
\begin{align*}
    \sum_{t=1}^T  r(\pi(X_t), X_t)-r_t &\leq  \sum_{t=1}^T \bar r(\pi(X_t), X_t)-r_t + T^{3/4}\\
    &\leq  \sum_{t=1}^T \bar r(\pi^*(X_t), X_t)-r_t + T^{3/4}.
\end{align*}
Thus, $\limsup_{T\to\infty} \sum_{t=1}^T \bar r(\pi(X_t), X_t)-r_t \leq 0.$ This ends the proof that the learning rule is universally consistent under any $\Ccal_2$ process. Now recall that $\Ccal_2$ is a necessary condition for universal learning by Theorem \ref{thm:SMV-finite-nec}. Hence, the set of learnable processes is exactly $\Ccal =\Ccal_2$ and the learning rule is optimistically universal.
\end{proof}

\section{Countably infinite action space}
\label{sec:countable_actions}

We next turn to the case where the action space is infinite $|\Acal|=\infty$ but countable. The goal of this section is to show that the set of processes admitting universal learning now becomes $\Ccal_1$. This contrasts with the full-feedback setting where universal learning is optimistically achievable under $\Ccal_2$ processes when a property $\ftime$ on the value space $(\Ycal,\ell)$ is satisfied \cite{blanchard:22d}. Intuitively, this asks that mean-estimation is possible in finite time for any prescribed error tolerance. Of interest to the discussion of this section with countable number of actions, \cite{blanchard:22d} showed that countably-infinite classification $(\Ycal,\ell) = (\Nbb,\ell_{01})$ satisfies the $\ftime$ property and, their learning rule is universally consistent under $\Ccal_2$ processes even under noisy and adversarial responses.

For countable action sets, there is a simple optimistically universal learning rule defined as follows. From \citep*[Lemma 24]{hanneke:21}, because $\Acal$ is countable, the $0-1$ loss on $\Acal$ is a separable metric, thus, there exists a countable set $\Pi$ of measurable policies $\pi:\Xcal\to\Acal$ such that for every $\ProcX \in \KC$, 
for every measurable $\goodpol : \X \to \A$, 
\begin{equation*}
\Ebb\left[\inf_{\pi \in \Pi} \hat{\mu}_{\ProcX}( \{ x : \pi(x) \neq \goodpol(x) \} ) \right]\leq \inf_{\pi \in \Pi} \Ebb\left[\hat{\mu}_{\ProcX}( \{ x : \pi(x) \neq \goodpol(x) \} ) \right] = 0,
\end{equation*}
which implies in particular that almost surely, $\inf_{\pi \in \Pi} \hat{\mu}_{\ProcX}( \{ x : \pi(x) \neq \goodpol(x) \} )=0$.
Enumerate $\Pi = \{\pi_1,\pi_2,\ldots\}$.
For any $\ProcX$, we consider the countable set of experts $\{E_1,E_2,\ldots\}$
such that $E_{i,t} = \pi_i(X_t)$.
Our learning rule then applies $\EXPINF$ from Corollary~\ref{cor:infinite-exp4} with this family of experts.

\begin{theorem}
\label{thm:KC-infinite-actions}
Let $\Xcal$ be a separable Borel-metrizable space and $\Acal$ a countable infinite action set. Then, there is an optimistically universal learning rule and the set of learnable processes is $\Ccal =\Ccal_1$.
\end{theorem}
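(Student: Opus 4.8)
The plan is to establish the two inclusions $\Ccal_1 \subseteq \Ccal$ and $\Ccal \subseteq \Ccal_1$; the first of these, together with the fact that the displayed learning rule (applying $\EXPINF$ to the dense family $\Pi$) is universally consistent under \emph{every} $\Ccal_1$ process, will simultaneously give the characterization $\Ccal = \Ccal_1$ and optimistic universality of the rule.

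For $\Ccal_1 \subseteq \Ccal$ and universal consistency of the rule, fix $\Xbb \in \Ccal_1$, a reward mechanism $r$, and a measurable policy $\pi^* : \Xcal \to \Acal$; write $E_{i,t} = \pi_i(X_t)$ and let $\hat a_t$ be the rule's actions. First apply the almost-sure part of Corollary~\ref{cor:infinite-exp4}: on an event of probability one there is $\hat T$ with $\max_{1 \le i \le T^{1/8}} \sum_{t \le T}\bigl(r_t(\pi_i(X_t)) - r_t(\hat a_t)\bigr) \le \hat T + cT^{3/4}\sqrt{\ln T}\ln T$ for all $T$. Hence for each fixed $i$, once $T \ge i^8$ the $\EXPINF$ regret against $\pi_i$ is $o(T)$, so $\limsup_T \frac1T\sum_{t\le T}\bigl(r_t(\pi_i(X_t)) - r_t(\hat a_t)\bigr) \le 0$. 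Combining with the crude bound $r_t(\pi^*(X_t)) - r_t(\pi_i(X_t)) \le \1[\pi^*(X_t) \neq \pi_i(X_t)]$ and the definition of $\hat\mu_{\Xbb}$, we get on this event, simultaneously for every $i$, $\limsup_T \frac1T\sum_{t\le T}\bigl(r_t(\pi^*(X_t)) - r_t(\hat a_t)\bigr) \le \hat\mu_{\Xbb}\bigl(\{x : \pi^*(x) \neq \pi_i(x)\}\bigr)$. Taking the infimum over $i$ and using the defining property of $\Pi$, namely $\inf_{\pi\in\Pi}\hat\mu_{\Xbb}(\{x:\pi(x)\neq\pi^*(x)\}) = 0$ almost surely for $\Xbb\in\Ccal_1$, the right side is $0$ almost surely, which is consistency. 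Since $r$ and $\pi^*$ were arbitrary, this gives universal consistency under $\Xbb$, hence $\Xbb\in\Ccal$ and (as $\Xbb\in\Ccal_1$ was arbitrary) optimistic universality once $\Ccal = \Ccal_1$ is established.

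For $\Ccal \subseteq \Ccal_1$, fix $\Xbb \notin \Ccal_1$ and an arbitrary learning rule $f_\cdot$; we build a reward mechanism under which $f_\cdot$ is not consistent. Apply Lemma~\ref{lem:infrequent-cells} to get a measurable partition $\{B_i\}_{i\ge1}$ of $\Xcal$ and integers $N_i$ such that, with $i_t$ the index with $X_t\in B_{i_t}$, the count $F_T := \sum_{t\le T}\1[\,|\Xbb_{<t}\cap B_{i_t}| < N_{i_t}\,]$ satisfies $\limsup_T F_T/T > 0$ on an event $\Event$ with $\P(\Event) > 0$. Partition the countably infinite action set into finite blocks $\Acal = \bigsqcup_i \Acal_i$ with $|\Acal_i| = M_i := N_i + N_i^2 2^i$, draw $a_i^*$ independently and uniformly from $\Acal_i$, and let the (deterministic, given $(a_i^*)_i$) reward mechanism return $\1[a = a^*_{i(x)}]$ for action $a$ at context $x\in B_{i(x)}$. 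The policy $\pi^*(x) = a^*_{i(x)}$ is measurable (constant on each $B_i$) and earns reward $1$ every round. The key estimate: on a ``fresh'' visit to $B_i$ (one of its first $N_i$ visits) the learner has observed at most $N_i-1$ earlier rewards from $B_i$; conditionally on those being zero, $a_i^*$ is uniform on a set of size $\ge M_i - N_i$, so the learner plays it with conditional probability $\le 1/(M_i-N_i)$, and a union bound over the $\le N_i$ fresh visits to $B_i$ gives $\P[\text{learner ever plays } a_i^* \text{ on a fresh visit to } B_i] \le N_i/(M_i-N_i)$. Hence $\E[S_i] \le N_i^2/(M_i-N_i) \le 2^{-i}$ where $S_i$ counts fresh correct plays in $B_i$, so $S := \sum_i S_i = \sum_{t}\1[\,|\Xbb_{<t}\cap B_{i_t}| < N_{i_t}\,]\,r_t(\hat a_t)$ has $\E[S] \le 1$ and thus $S < \infty$ a.s. From $\frac1T\sum_{t\le T} r_t(\hat a_t) \le 1 - F_T/T + S/T$ we get, taking $\liminf$, that $\liminf_T \frac1T\sum_{t\le T} r_t(\hat a_t) \le 1 - \limsup_T F_T/T$; so on the positive-probability event $\Event\cap\{S<\infty\}$, $\limsup_T \frac1T\sum_{t\le T}\bigl(r_t(\pi^*(X_t)) - r_t(\hat a_t)\bigr) \ge \limsup_T F_T/T > 0$. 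This inconsistency holds with positive probability jointly over $\Xbb$, the learner's randomness, and $(a_i^*)_i$, so by Fubini some deterministic choice of $(a_i^*)_i$ — a fixed reward mechanism — makes $f_\cdot$ inconsistent under $\Xbb$. Thus $\Xbb\notin\Ccal$, proving $\Ccal\subseteq\Ccal_1$, and with the first part $\Ccal = \Ccal_1$ and the rule is optimistically universal.

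The main obstacle is the necessity direction: making the conditional-uniformity estimate for $a_i^*$ on fresh visits fully rigorous, since the learner's actions are adaptive and can correlate with $a_i^*$ through previously observed rewards, so one must carefully condition on the learner's information set at each fresh visit; and then converting the expectation bound $\E[S]\le 1$ into an almost-sure failure, combining it with the positive-probability event from Lemma~\ref{lem:infrequent-cells} and extracting a deterministic reward mechanism via Fubini. The sufficiency direction is essentially bookkeeping on top of Corollary~\ref{cor:infinite-exp4} and the density property of $\Pi$.
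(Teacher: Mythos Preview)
Your proposal is correct; the sufficiency direction ($\Ccal_1 \subseteq \Ccal$) is essentially the paper's argument verbatim. The necessity direction takes a genuinely different route. The paper uses \emph{nested} sets $A_i = \{a_1,\dots,a_{2N_i}\}$ and shows that on each fresh visit the conditional probability (given $\Xbb$ and the learner's randomness) of guessing $a_{i_t}^*$ is at most $1/2$; it then invokes Fatou's lemma to push this conditional expectation inside the $\limsup$ and obtain $\E[\text{limsup regret}] > 0$. You instead take \emph{disjoint} blocks $A_i$ of size $M_i = N_i + N_i^2 2^i$, large enough that the expected total number $S$ of correct fresh plays satisfies $\E[S] \le 1$, hence $S < \infty$ almost surely; the regret lower bound $\limsup_T (F_T/T - S/T) = \limsup_T F_T/T$ then follows by elementary arithmetic with no Fatou step. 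The paper's blocks are smaller and its per-visit bound is conceptually direct, but it needs the Fatou interchange; your approach front-loads the work into the block sizes and gets a cleaner almost-sure endgame. Your self-identified obstacle---making the conditional-uniformity estimate rigorous against an adaptive learner---is resolved the same way in both arguments: condition on $\Xbb$, the learner's internal randomness, and $(a_j^*)_{j\ne i}$, so that the learner's play on $B_i$ becomes a deterministic decision tree in the $B_i$-rewards; along the all-zero branch there are at most $N_i$ distinct actions, and $a_i^*$ (still uniform on $A_i$ under this conditioning, by independence) lands in that set with probability at most $N_i/M_i$.
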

\begin{proof}
We start by showing that the learning rule defined above is universally consistent on any $\Xbb\in\Ccal_1$ process. This proof is essentially identical to that of \citep*[][Theorem 1]{hanneke:22a}. Indeed, denoting by  $\hat a_t$ the action selected by the learning rule at time $t$, Corollary~\ref{cor:infinite-exp4} implies that on an event $\Ecal$ of probability one, for any $\pi\in\Pi$, we have
\begin{equation*}
    \limsup_{T\to\infty}\frac{1}{T}\sum_{t=1}^T r_t(\pi(X_t))-r_t(\hat a_t) \leq 0.
\end{equation*}
Now fix a measurable policy $\pi^\star:\Xcal\to\Acal$. For any $\pi\in\Pi$, because the rewards lie in $[0,1]$, on $\Ecal$,
\begin{align*}
    \limsup_{T\to\infty}\frac{1}{T}&\sum_{t=1}^T r_t(\pi^*(X_t))-r_t(\hat a_t)\\ &\leq \hat \mu_\Xbb(\{x:\pi(x)\neq\pi^*(x)\}) +  \limsup_{T\to\infty}\frac{1}{T}\sum_{t=1}^T r_t(\pi(X_t))-r_t(\hat a_t)\\
    &\leq \hat \mu_\Xbb(\{x:\pi(x)\neq\pi^*(x)\}).
\end{align*}
Also, by construction of the countable set $\Pi$, on an event $\Fcal$ of probability one, we have $\inf_{\pi \in \Pi} \hat{\mu}_{\ProcX}( \{ x : \pi(x) \neq \goodpol(x) \} )=0$. Thus, on $\Ecal\cap\Fcal$, the above inequality shows that $\limsup_{T\to\infty}\frac{1}{T}\sum_{t=1}^T r_t(\pi^*(X_t))-r_t(\hat a_t) \leq 0$. Hence, the learning rule is universally consistent under $\Ccal_1$ processes with adversarial responses.

Next, we show that the condition $\Xbb\in\Ccal_1$ is necessary for the 
existence of a universally consistent learning rule, even for function learning.
Let $\ProcX$ be any process with $\ProcX \notin \KC$.
By Lemma~\ref{lem:infrequent-cells}, there exists a sequence $\{B_i\}_{i=1}^{\infty}$ 
of disjoint measurable subsets of $\X$ 
with $\bigcup\limits_{i \in \nats} B_i = \X$, 
and a sequence $\{N_i\}_{i=1}^{\infty}$ in $\nats$
such that, 
on a $\sigma(\X)$-measurable event $\Event_0$ 
of probability strictly great than zero,
\begin{equation*}
\limsup_{T \to \infty} \frac{1}{T} \sum_{t=1}^{T} \ind\!\left[ \left| \Xbb_{< t} \cap B_{i_t} \right| < N_{i_t} \right] > 0,
\end{equation*}
where $i_t$ is the unique $i \in \nats$ 
with $X_t \in B_i$.

Next, we define the function $\target$.
Enumerate $\A = \{a_1,a_2,\ldots\}$, 
and for each $i \in \nats$, let $A_i = \{a_1,\ldots,a_{2N_i}\}$.
For each $i \in \nats$, let 
$a_i^{\star}$ be an element of $A_i$.
Denote by $\bar{a} = \{a_i^{\star}\}_{i \in \nats}$.
Then for each $i \in \nats$ and each $x \in B_i$, 
define $\target_{\bar{a}}(x,a) = \ind[ a = a_i^{\star} ]$.
Also define $\mathbf{a}_i^{\star}$ 
as $\mathrm{Uniform}(A_i)$ 
(independent over $i$ and all independent of $\ProcX$
and the randomness of the learning rule), 
and $\mathbf{\bar{a}} = \{ \mathbf{a}_{i}^{\star} \}_{i \in \nats}$.
Then for any learning rule $\hat{f}_t$, 
denoting by $\hat{a}_t$ its actions when $\target = \target_{\mathbf{\bar{a}}}$ 
is as constructed above, we have  
\begin{align*}
 \sup_{\bar{a}} \E\! &\left[ \limsup_{T \to \infty} \frac{1}{T} \sum_{t=1}^{T} \left( \sup_{a \in \A} r_t(a) - r_t(\hat{a}_t) \right)  \middle| \mathbf{\bar{a}} = \bar{a} \right]\\
&= \sup_{\bar{a}} \E\!\left[ \limsup_{T \to \infty} \frac{1}{T} \sum_{t=1}^{T} \ind\!\left[ \hat{a}_t \neq \mathbf{a}_{i_t} \right]  \middle| \mathbf{\bar{a}} = \bar{a} \right]
\\ & \geq \E\!\left[ \limsup_{T \to \infty} \frac{1}{T} \sum_{t=1}^{T} \ind\!\left[ \hat{a}_t \neq \mathbf{a}_{i_t} \right] \right]\\
&\geq 
\E\!\left[ \ind_{\Event_0} \cdot \limsup_{T \to \infty} \frac{1}{T} \sum_{t=1}^{T} \ind\!\left[ | \Xbb_{< t} \cap B_{i_t}| < N_{i_t} \right] \ind\!\left[ \hat{a}_t \neq \mathbf{a}_{i_t} \right] \right].
\end{align*}
By the law of total expectation, 
this last expression above equals
\begin{equation*}
\E\!\left[ \ind_{\Event_0} \cdot \E\!\left[ \limsup_{T \to \infty} \frac{1}{T} \sum_{t=1}^{T} \ind\!\left[ | \Xbb_{< t} \cap B_{i_t}| < N_{i_t} \right] \ind\!\left[ \hat{a}_t \neq \mathbf{a}_{i_t} \right] \middle| \ProcX, \hat{f}_{\cdot} \right] \right],
\end{equation*}
where conditioning on $\hat{f}_{\cdot}$ 
indicates we condition on the independent 
randomness of the learning rule.
Since the average is bounded for any fixed $T$, 
Fatou's lemma, together with the 
fact that 
$\ind\!\left[ | \Xbb_{< t} \cap B_{i_t}| < N_{i_t} \right]$ 
is $\sigma(\ProcX)$-measurable, 
imply the expression above is at least as large as 
\begin{equation}
\label{eqn:countable-lb-intermediate-1}
\E\!\left[ \ind_{\Event_0} \cdot \limsup_{T \to \infty} \frac{1}{T} \sum_{t=1}^{T} \ind\!\left[ | \Xbb_{< t} \cap B_{i_t}| < N_{i_t} \right] \P\!\left( \hat{a}_t \neq \mathbf{a}_{i_t} \middle| \ProcX, \hat{f}_{\cdot} \right) \right].
\end{equation}

Let $\hat{N}_{t} = | \Xbb_{\leq t} \cap B_{i_t} |$ and 
$\hat{A}_{t} = \{ \hat{a}_{t'} : t' \leq t, i_{t'} = i_{t} \}$.
Note that, conditioned on 
$\hat{f}_{\cdot}$ and $\ProcX$, 
the probability that 
$\mathbf{a}_{i_t} \in \hat{A}_{t}$ is 
at most $\hat{N}_{t} \frac{1}{|A_{i_t}|} = \frac{\hat{N}_{t}}{2N_{i_t}}$.
In particular, this implies that 
if $\hat{N}_t \leq N_{i_t}$, 
the conditional probability (given $\hat{f}_{\cdot}$ and $\ProcX$)  that $\hat{a}_{t} \neq \mathbf{a}_{i_t}$ 
is at least $1 - \frac{\hat{N}_{t}}{2N_{i_t}} \geq \frac{1}{2}$.
Thus, \eqref{eqn:countable-lb-intermediate-1} 
is no smaller than 
\begin{equation}
\label{eqn:countable-lb-intermediate-2}
\E\!\left[ \ind_{\Event_0} \cdot \limsup_{T \to \infty} \frac{1}{T} \sum_{t=1}^{T} \ind\!\left[ | \Xbb_{< t} \cap B_{i_t}| < N_{i_t} \right] \cdot \frac{1}{2} \right].
\end{equation}
By definition of the event $\Event_0$, 
there is a nonzero probability that 
\begin{equation*}
\ind_{\Event_0} \cdot \limsup_{T \to \infty} \frac{1}{T} \sum_{t=1}^{T} \ind\!\left[ | \Xbb_{< t} \cap B_{i_t}| < N_{i_t} \right] > 0,
\end{equation*}
and since the quantity on the left hand size is non-negative, 
this further implies the expectation in \eqref{eqn:countable-lb-intermediate-2}
is also strictly greater than zero.

Altogether, this implies there exists a (non-random) choice of $\bar{a}$ 
such that, choosing $\target = \target_{\bar{a}}$, 
the actions $\hat{a}_t$ made by the 
learning rule $\hat{f}_{t}$ satisfy 
\begin{equation*}
\E\!\left[ \limsup_{T \to \infty} \frac{1}{T} \sum_{t=1}^{T} \left( \sup_{a \in \A} r_{t}(a) - r_{t}(\hat{a}) \right) \right] > 0,
\end{equation*}
and since the quantity in the expectation 
is non-negative, this further implies 
that for this choice of $\target$, 
with non-zero probability, 
\begin{equation*}
\limsup_{T \to \infty} \frac{1}{T} \sum_{t=1}^{T} \left( \sup_{a \in \A} r_{t}(a) - r_{t}(\hat{a}) \right) > 0.
\end{equation*}
Thus, $\hat{f}_t$ is not 
universally consistent for 
function learning.  Since this holds 
for any choice of learning rule $\hat{f}_{\cdot}$, this completes the proof.
\end{proof}

\section{Uncountable action space}
\label{sec:uncountable_actions}

We next consider the case of uncountable action spaces. In this section, we assume that $\Acal$ is an uncountable separable Borel metrizable space. In this case, we will show that universal consistency is impossible even in the simplest setting where rewards are a deterministic, i.e., $r_t(a) = f^*(X_t,a)$ for some unknown measurable function $f^*:\Xcal\times\Acal\to [0,1]$. The argument is based on a dichotomy depending whether there exists a non-atomic probability measure $\mu$ on $\Acal$, i.e., such that for all $a\in\Acal$, we have $\mu(\{x\})=0$. If this is not the case, we will need the following simple result which states that any stochastic process $\Xbb$ takes values in a countable set $Supp(\Xbb)$ almost surely.

\begin{lemma}
\label{lemma:supports}
Let $\Xcal$ a metrizable separable Borel space such that there does not exist a non-atomic probability measure on $\Xcal$. Then, for any random variable $X$ on $\Xcal$ there exists a countable set $Supp(X)\subset \Xcal$ such that almost surely, $X\in Supp(X)$. Similarly, for any stochastic process $\Xbb$ on $\Xcal$ there exists a countable set $Supp(\Xbb)\subset \Xcal$ such that almost surely $\forall t\geq 1, X_t\in Supp(\Xbb)$.
\end{lemma}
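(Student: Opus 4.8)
The plan is to prove the statement for a single random variable first, then deduce the process statement by a countable union. Let $X$ be a random variable on $\Xcal$ with law $\mu$. Since $\Xcal$ is metrizable, singletons are closed and hence Borel, so $\mu(\{x\})$ is defined for every $x\in\Xcal$. First I would show that the atom set $A:=\{x\in\Xcal:\mu(\{x\})>0\}$ is countable: for each $n\in\nats$ the set $A_n:=\{x:\mu(\{x\})>1/n\}$ has at most $n$ elements, since otherwise summing the masses of $n+1$ distinct points of $A_n$ would exceed $\mu(\Xcal)=1$; thus $A=\bigcup_{n\in\nats}A_n$ is countable, hence Borel, with $\mu(A)=\sum_{x\in A}\mu(\{x\})$.

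The key step is a contradiction using the hypothesis. Suppose $\mu(A)<1$, so that $\mu(\Xcal\setminus A)>0$, and define the probability measure $\nu(\cdot):=\mu(\,\cdot\,\cap(\Xcal\setminus A))/\mu(\Xcal\setminus A)$ on $\Xcal$. For $x\in A$ one has $\nu(\{x\})=0$ because the restriction removes the mass on $A$, and for $x\notin A$ one has $\nu(\{x\})=\mu(\{x\})/\mu(\Xcal\setminus A)=0$ by definition of $A$. Hence $\nu$ is a non-atomic probability measure on $\Xcal$ (in the sense used in the statement, $\nu(\{x\})=0$ for all $x$), contradicting the hypothesis. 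Therefore $\mu(A)=1$, and we may take $Supp(X):=A$, which is the desired countable set.

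For a stochastic process $\Xbb=(X_t)_{t\geq 1}$, apply the previous argument to each coordinate $X_t$ to obtain a countable set $Supp(X_t)$ with $X_t\in Supp(X_t)$ almost surely, and set $Supp(\Xbb):=\bigcup_{t\geq 1}Supp(X_t)$, a countable union of countable sets and hence countable. The event $\{\forall t\geq 1,\ X_t\in Supp(\Xbb)\}$ contains $\bigcap_{t\geq 1}\{X_t\in Supp(X_t)\}$, a countable intersection of probability-one events, so it has probability one. I do not anticipate a serious obstacle here; the only point needing care is checking that the renormalized restriction $\nu$ really has no point mass \emph{anywhere} — including on $A$, where the mass is killed by the restriction rather than by $\mu$ itself — so that it falls exactly under the class of measures excluded by the hypothesis.
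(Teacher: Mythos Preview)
Your proof is correct and follows essentially the same approach as the paper's: both argue by contradiction that if the atom set $A$ does not carry full mass, then the conditional law of $X$ on $\Xcal\setminus A$ is a non-atomic probability measure on $\Xcal$, violating the hypothesis. Your execution is slightly more direct---you define the restricted measure $\nu$ outright, whereas the paper constructs an explicit random variable with that law via an i.i.d.\ sequence of copies of $X$ and a geometric first-hit time---but the underlying idea and the process extension by countable union are identical.
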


\begin{proof}
Fix $\Xcal$ such a space and let $X$ be a random variable on $\Xcal$. Let $Supp(X) = \{x\in\Xcal: \Pbb[X=x]>0\}$. Suppose by contradiction that $\Pbb[X\notin Supp(X)]>0$ and denote $\Ecal$ the corresponding event. Because $\Pbb[\Ecal]>0$ we can consider a random variable $Y\sim X|\Ecal$. For instance take $(X_i)_{i\geq 1}$ an i.i.d. process following the distribution of $X$, fix $x_0\in\Xcal$ a fixed arbitrary instance, and pose
\begin{equation*}
    Y = \begin{cases}
        X_{\hat k} & \text{if } \{i \geq 1: X_i\notin Supp(X)\}\neq \emptyset, \quad \hat k = \min \{i \geq 1: X_i\notin Supp(X)\},\\
        x_0 &\text{otherwise.}
    \end{cases}
\end{equation*}
Because the first time $k$ such that $X_k\notin Supp(X)$ is a geometric variable $\Gcal(1-\Pbb[\Ecal])$, the event $\Fcal = \{\exists i \geq 1: X_i\notin Supp(X)\}$ has probability one. We now show that $Y$ is non-atomic. First observe that $Y\notin Supp(X)$. Then, if $x\in \Xcal\notin Supp(X)$, we have
\begin{equation*}
    \Pbb[Y = x] = \Pbb[\{Y = x\}\cap\Fcal] = \Pbb[\{X_{\hat k} = x\}\cap\Fcal] \leq \Pbb\left[\bigcup_{i\geq 1}\{X_i=x\}\right] \leq \sum_{i\geq 1}\Pbb[X_i=x] = 0.
\end{equation*}
where in the first equality we used the fact that $\Pbb[\Fcal^c] = 0$. Therefore $Y$ is non-atomic which contradicts the hypothesis on $\Xcal$. As a result, almost surely $X\in Supp(X)$. It now suffices to check that $Supp(X)$ is countable, which is guaranteed by the identity $1 = \Pbb[X\in Supp(X)] =\sum_{x\in Supp(X)} \Pbb[X=x]$, since each term of the sum is positive. This ends the proof of the first claim.

Now let $\Xbb$ be a stochastic process on $\Xcal$ and define $Supp(\Xbb) = \bigcup_{t\geq 1} Supp(X_t)$. Then $Supp(\Xbb)$ is countable as countable union of countable sets and
\begin{equation*}
    \Pbb[\exists t\geq 1: X_t\notin Supp(\Xbb)] \leq \sum_{t\geq 1} \Pbb[X_t\notin Supp(\Xbb)] \leq \sum_{t\geq 1} \Pbb[X_t\notin Supp(X_t)] = 0.
\end{equation*}
This ends the proof of the lemma.
\end{proof}

We are now ready to show that no process admits universal learning when the action set is uncountable.

\begin{theorem}
\label{thm:uncountable_emptyset}
If $\A$ is an uncountable separable Borel metrizable space, then there does not exist any $\ProcX$ admitting universal consistency for measurable function learning.
\end{theorem}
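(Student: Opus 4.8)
The plan is to show that \emph{every} process $\ProcX$ on $\X$ is defeated by \emph{every} learning rule $\hat f_\cdot$, already using deterministic (measurable function) rewards, so that $\Ccal=\emptyset$. The crucial observation is that it suffices to use reward mechanisms that do not depend on the context at all: fix a point $b\in\A$ and set $f^\star_b(x,a)=\ind[a=b]$, so that $r_t(a)=\ind[a=b]$ for every $t$; this is a Borel function $\X\times\A\to\{0,1\}$ since $\{b\}$ is closed in the metric space $\A$. The constant policy $\pi^\star\equiv b$ achieves $r_t(\pi^\star(X_t))=1$ at every step, while each time the learner plays some $\hat a_t\neq b$ it receives reward $0$ and hence \emph{no information}. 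Thus if $\hat a_t\neq b$ for all $t<s$, the learner's input at time $s$ is $((X_t)_{t<s},(0)_{t<s},X_s)$, exactly as in the hypothetical ``null-feedback'' run in which it is always handed reward $0$; writing $(\hat a^{\mathbf 0}_t)_{t\ge1}$ for the (measurable, $\A$-valued) action process of that null-feedback run --- with the learner's internal randomness modeled by one auxiliary variable independent of $\ProcX$ so that the two runs share coins --- a one-line induction gives: on the event $\{\forall t,\ \hat a^{\mathbf 0}_t\neq b\}$ the actual actions against $f^\star_b$ coincide with $(\hat a^{\mathbf 0}_t)_t$, all $r_t$ equal $0$, and therefore
\[
\limsup_{T\to\infty}\frac1T\sum_{t=1}^T\bigl(r_t(\pi^\star(X_t))-r_t(\hat a_t)\bigr)=1>0 .
\]
So the entire task reduces to choosing $b\in\A$ with $\Pbb(\forall t,\ \hat a^{\mathbf 0}_t\neq b)=1$.

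This is exactly where the dichotomy on $\A$ is used. In any realization, $\{\hat a^{\mathbf 0}_t:t\ge1\}$ is a countable subset of $\A$. If $\A$ carries a non-atomic Borel probability measure $\mu$, draw $\mathbf b\sim\mu$ independently of $\ProcX$ and of the learner; since every countable set is $\mu$-null, $\Pbb\bigl(\mathbf b\in\{\hat a^{\mathbf 0}_t:t\ge1\}\bigr)=\Ebb\bigl[\mu(\{\hat a^{\mathbf 0}_t:t\ge1\})\bigr]=0$, so by Fubini $\mu$-almost every $b$ satisfies $\Pbb(\forall t,\ \hat a^{\mathbf 0}_t\neq b)=1$ and we pick any such $b$. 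If instead $\A$ admits no non-atomic Borel probability measure, then Lemma~\ref{lemma:supports} applied to the $\A$-valued process $(\hat a^{\mathbf 0}_t)_t$ yields a \emph{fixed} countable set $\A_0\subseteq\A$ with $\Pbb(\forall t,\ \hat a^{\mathbf 0}_t\in\A_0)=1$; since $\A$ is uncountable, $\A\setminus\A_0\neq\emptyset$, and we take $b$ to be any point of $\A\setminus\A_0$. In both cases $\Pbb(\forall t,\ \hat a^{\mathbf 0}_t\neq b)=1$.

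Combining the two parts: for the deterministic measurable reward mechanism $f^\star_b$, the learning rule $\hat f_\cdot$ almost surely has average regret tending to $1$ against the measurable policy $\pi^\star\equiv b$, hence is not consistent under $\ProcX$; since $\ProcX$ and $\hat f_\cdot$ were arbitrary, no process admits universal consistency even for measurable function learning. The remaining work is purely bookkeeping: modeling the learner's randomness as an external source so the null-feedback coupling holds pathwise, and checking the (trivial) measurability of $f^\star_b$ and of $(\hat a^{\mathbf 0}_t)_t$. The one genuine idea --- and the step that is the main obstacle if one approaches the problem naively --- is realizing that a constant-in-$x$ reward already beats the learner when $\A$ is uncountable, so that no elaborate randomized construction of the optimal policy (of the kind used for countable $\A$ in the proof of Theorem~\ref{thm:KC-infinite-actions}) is needed; the dichotomy only serves to exhibit a target action $b$ outside the learner's possibly-random countable ``reachable'' set of actions.
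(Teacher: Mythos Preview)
Your proof is correct and follows essentially the same approach as the paper: the indicator reward $f^\star_b(x,a)=\ind[a=b]$ constant in $x$, the null-feedback coupling, and the same dichotomy on whether $\A$ carries a non-atomic measure (using Lemma~\ref{lemma:supports} when it does not). Your write-up is in fact slightly more careful about the coupling of the learner's randomness across the null-feedback and true runs, which the paper leaves implicit.
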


\begin{proof}
Fix a learning rule $f_\cdot$ and for any $a^*\in\Acal$, we define the reward function $f_{a^*}^*(x,a) = \1[a=a^*]$ for $x\in\Xcal,a\in\Acal$. We also define the policy $\pi_{a^*}:x\in\Xcal\mapsto a^*\in\Acal$. We first consider the case where there exists a non-atomic probability measure $\mu$ on $\Acal$. Then, for any $t\geq 1$, and consider the case where $a^*$ is sampled from the distribution $\mu$ independently from the process $\Xbb$ and the randomness of the learning rule. Then we have
\begin{equation*}
    \Pbb_{a^*\sim \mu}[f_t(\Xbb_{<t},(0)_{<t},X_t) = a^*] = \Ebb_{\Xbb,f_t}[\Pbb_{a^*\sim\mu}(f_t(\Xbb_{<t},(0)_{<t},X_t) = a^*)]=0.
\end{equation*}
Denote by $\Ecal_t$ this event. Then, by the union bound, $\Pbb[\bigcap_{t\geq 1}\Ecal_t]=1$. The law of total probability implies that there exists a deterministic choice of $a^*$ such that
\begin{equation*}
    \Pbb[\forall t\geq 1,f_t(\Xbb_{<t},(0)_{<t},X_t) \neq a^*]=1,
\end{equation*}
where the probability is taken over $\Xbb$ and the randomness of the learning rule.

Now suppose that there does not exist non-atomic probability measures on $\Acal$. From Lemma \ref{lemma:supports}, for any probability measure $\mu$ on $\Acal$, we can construct a countable set $Supp(\mu)\subset\Xcal$ such that $\mu(Supp(\mu))=1$. Now consider the set
\begin{equation*}
    S = \bigcup_{t\geq 1}  Supp(f_t(\Xbb_{\leq t-1}, (0)_{\leq t-1}, X_t)).
\end{equation*}
Then, $S$ is countable as the union of countable sets. Since $\Acal$ is uncountable, let $a^*\in \Acal\setminus S$. By construction, on an event of probability one, for all $t\geq 1$, we have $f_t(\Xbb_{\leq t-1}, (0)_{\leq t-1}, X_t)\neq a^*$.

In both cases, we found an action $a^*\in\Acal$ such that on an event $\Ecal$ of probability one over $\Xbb$ and the randomness of the learning rule, having received 0 reward in the past history at time step $t$, the learning rule does not select $a^*$, hence receives reward 0 at time $t$ as well. Thus, by induction, denoting by $\hat a_t$ the action selected by the learning rule at time $t$ for reward $f_{a^*}^*$, we have $ \Ecal\subset\{\forall t\geq 1, \hat a_t\neq a^*\}.$ Thus, on $\Ecal$,
\begin{equation*}
    \limsup_{T\to\infty}\frac{1}{T}\sum_{t=1}^T r_t(\pi^*(X_t))-r_t(\hat a_t) = 1.
\end{equation*}
Because $\Ecal$ has probability one, this shows that $f_\cdot$ is not universally consistent.
\end{proof}

\section{Universal learning under continuity assumptions}
\label{sec:continuity-assumptions}

In Section \ref{sec:uncountable_actions} we showed that for general uncountable separable metric actions spaces, without further assumptions on the rewards, one cannot achieve universal consistency. The goal of this section is to show that adding mild continuity assumptions on the rewards enables to significantly enlarge the set of processes admitting universal learning.

\subsection{Continuous rewards}
\label{sec:continuous_rewards}

In this section, we suppose that the rewards are continuous as defined in Definition \ref{def:continuous+unif_cont_rewards}, and show that universal consistency on $\Ccal_1$ processes is still achievable. For bounded separable metric action spaces $(\tilde \Acal,\tilde d)$, \cite{hanneke:21} showed that there is countable set of measurable policies $\Pi$ such that for any measurable $\pi^*:\Xcal\to\tilde \Acal$ and $\Xbb\in\Ccal_1$,
\begin{equation*}
    \inf_{\pi\in\Pi} \Ebb\left[\limsup_{T\to\infty}\frac{1}{T}\sum_{t=1}^T \tilde d(\pi^*(X_t),\pi(X_t))\right] =0.
\end{equation*}
In general, the action space $(\Acal,d)$ is unbounded, however, $(\Acal,d\wedge 1)$ is a separable bounded metric space on which we can apply the above result. This provides a countable set of measurable policies $\Pi$ such that for any measurable $\pi^*:\Xcal\to\Acal$ and $\Xbb\in\Ccal_1$,
\begin{equation*}
    \inf_{\pi\in\Pi} \Ebb\left[\limsup_{T\to\infty}\frac{1}{T}\sum_{t=1}^T \tilde d(\pi^*(X_t),\pi(X_t)) \wedge 1\right] =0.
\end{equation*}
From this observation, we can get the following lemma.

\begin{lemma}\label{lemma:density_continuous_rewards}
    Let $\Xcal$ be a separable metrizable Borel space and $(\Acal,d)$ be a separable metric space. For any measurable function $\pi^*:\Xcal\to\Acal$, on an event of probability one, for all $i\geq 1$, there exists $\pi^i\in \Pi$ such that
    \begin{equation*}
        \limsup_{T\to\infty}\frac{1}{T} \sum_{t=1}^T \1[d(\pi^*(X_t),\pi^i(X_t)) \geq 2^{-i}] \leq 2^{-i},
    \end{equation*}
    for all $i\geq 1$, $\frac{1}{T}\sum_{t\leq T} r_t(\pi^i(X_t))-\bar r_t(\pi^i(X_t)) \to 0$ and similarly for $\pi^*$.
\end{lemma}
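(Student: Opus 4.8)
The plan is to assemble the statement from two ingredients: the $\Pi$-approximation property recalled just above (valid for $\Xbb\in\Ccal_1$, the relevant regime here) and a law of large numbers for the counterfactual rewards along any fixed measurable policy. First I would promote the bound $\inf_{\pi\in\Pi}\Ebb[\limsup_{T}\frac1T\sum_{t\le T}(d(\pi^*(X_t),\pi(X_t))\wedge 1)]=0$ to an almost-sure statement: for each $j\ge 1$ pick $\rho_j\in\Pi$ with $\Ebb[W_j]\le 4^{-j}$, where $W_j:=\limsup_{T}\frac1T\sum_{t\le T}(d(\pi^*(X_t),\rho_j(X_t))\wedge 1)\in[0,1]$. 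Markov's inequality gives $\Pbb[W_j>2^{-j}]\le 2^{-j}$, which is summable, so by Borel--Cantelli there is a probability-one event $\Ecal_1$ on which a finite (realization-dependent) index $J$ exists with $W_j\le 2^{-j}$ for all $j\ge J$.

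Next I would read off the first displayed bound. Fix $i\ge 1$; on $\Ecal_1$, set $j:=\max\{2i,J\}$ and $\pi^i:=\rho_j\in\Pi$. From the elementary inequality $\1[y\ge 2^{-i}]\le 2^i\,(y\wedge 1)$, valid for all $y\ge 0$ and $i\ge 1$, one gets
\[
\limsup_{T\to\infty}\frac1T\sum_{t=1}^T\1\big[d(\pi^*(X_t),\pi^i(X_t))\ge 2^{-i}\big]\ \le\ 2^i\,W_j\ \le\ 2^i\,2^{-2i}\ =\ 2^{-i}.
\]
Here $\pi^i$ is permitted to depend on the sample point (through $J$), which is consistent with the statement, since the quantifier ``there exists $\pi^i$'' sits inside the probability-one event.

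It remains to add the reward law of large numbers and conclude. For any fixed measurable $\pi:\Xcal\to\Acal$, conditionally on $\Xbb$ the variables $r_t(\pi(X_t))$ are independent, lie in $[0,1]$, and have conditional mean $\bar r_t(\pi(X_t))$; Hoeffding's inequality with deviation $T^{-1/4}$ together with Borel--Cantelli (the routine argument already used in the proof of Theorem~\ref{thm:opt_rule_stat}) yields $\frac1T\sum_{t\le T}(r_t(\pi(X_t))-\bar r_t(\pi(X_t)))\to 0$ almost surely. Intersecting over the countable family $\Pi$ and over the single policy $\pi^*$ produces a probability-one event $\Ecal_2$ on which this convergence holds simultaneously for every $\pi\in\Pi$ and for $\pi^*$. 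On $\Ecal_1\cap\Ecal_2$ all the claimed conclusions hold: for each $i$, the policy $\pi^i$ constructed above lies in $\Pi$, satisfies the $2^{-i}$ bound, and (being an element of $\Pi$) satisfies the reward LLN, while $\pi^*$ satisfies the reward LLN as well.

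I do not anticipate a real obstacle: the lemma is essentially bookkeeping on top of the two cited facts. The only points needing care are choosing the geometric rates so that Borel--Cantelli applies and so that the factor $2^i$ incurred in passing from the truncated-distance bound to the indicator bound is absorbed (hence the choice $j\ge 2i$), and keeping the quantifier order straight so that the approximating policy $\pi^i$ may legitimately depend on the realization.
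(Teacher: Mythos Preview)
Your argument is correct and follows essentially the same route as the paper's proof: pick approximating policies in $\Pi$ with geometrically decaying expectation bounds, upgrade to an almost-sure bound via Markov plus Borel--Cantelli, convert the truncated-distance bound to an indicator bound by a second Markov-type step, and handle the reward convergence by Hoeffding/Azuma plus Borel--Cantelli. The only cosmetic differences are that the paper fixes a single deterministic sequence $(\pi^i)_{i\ge 1}$ with $\Ebb[\cdot]\le 2^{-3i}$ (absorbing both Markov steps in one shot) and applies the reward concentration only to that sequence, whereas you introduce an auxiliary index $j=\max\{2i,J\}$ and run the reward LLN over all of $\Pi$; both are valid since $\Pi$ is countable.
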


\begin{proof}
By construction of the countable set of policies $\Pi$, for any $i\geq 1$, there exists $\pi^i\in \Pi$ such that
\begin{equation*}
    \Ebb\left[\limsup_{T\to\infty}\frac{1}{T}\sum_{t=1}^T d(\pi^*(X_t),\pi^i(X_t))\wedge 1\right] \leq 2^{-3i}.
\end{equation*}
Then, Markov's inequality implies that with probability at least $1-2^{-i}$.
\begin{equation*}
    \limsup_{T\to\infty}\frac{1}{T}\sum_{t=1}^T d(\pi^*(X_t),\pi^i(X_t))\wedge 1 \leq 2^{-2i}.
\end{equation*}
Applying Markov's inequality a second time, we obtain
\begin{equation*}
    \limsup_{T\to\infty}\frac{1}{T} \sum_{t=1}^T \1[d(\pi^*(X_t),\pi^i(X_t)) \geq 2^{-i}] \leq 2^i \limsup_{T\to\infty}\frac{1}{T} \sum_{t=1}^T d(\pi^*(X_t),\pi^i(X_t))\wedge 1  \leq 2^{-i}.
\end{equation*}
The Borel-Cantelli lemma implies that on an event $\Ecal$ of probability one, for $i$ sufficiently large, there exists $\pi^i\in\Pi$ with $\limsup_{T\to\infty}\frac{1}{T} \sum_{t=1}^T \1[d(\pi^*(X_t),\pi^i(X_t)) \geq 2^{-i}] \leq 2^{-i}$. Clearly, this implies that this is the case for all $i\geq 1$. For any $i\geq 1$, Azuma's inequality implies that with probability at least $1-4e^{-2i\sqrt T}$, we have
\begin{equation*}
    \left|\sum_{t=1}^T r_t(\pi^i(X_t))-\bar r_t(\pi^i(X_t))\right|, \left|\sum_{t=1}^T r_t(\pi^*(X_t))-\bar r_t(\pi^*(X_t))\right| \leq  2i T^{3/4}.
\end{equation*}
Because $\sum_{T\geq 1}\sum_{i\geq 1}e^{-2i\sqrt T}<\infty$, the Borel-Cantelli lemma implies that on an event $\Fcal$ of probability one, for all $i\geq 1$, $\frac{1}{T}\sum_{t\leq T}r_t(\pi^i(X_t))-\bar r_t(\pi^i(X_t))\to 0$ and similarly for $\pi^*$. Therefore, on the event $\Ecal\cap\Fcal$ of probability one, all events are satisfied, which ends the proof of the lemma.
\end{proof}

Using Lemma \ref{lemma:density_continuous_rewards}, we will show that the $\EXPINF$ algorithm over the set of policies $\Pi$ is optimistically universal for continuous rewards.

\begin{theorem}\label{thm:continuous_rewards_C1}
Let $(\Acal,d)$ be an infinite separable metric space. Then, $\EXPINF$ is optimistically univesal for continuous rewards and the set of learnable processes for continuous rewards is $\Ccal ^{c} = \Ccal_1$.
\end{theorem}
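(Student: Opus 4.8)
The claim has two halves: \emph{(sufficiency)} $\EXPINF$ run over the countable policy class $\Pi$ of Lemma~\ref{lemma:density_continuous_rewards} is universally consistent under every $\Xbb\in\Ccal_1$ for continuous rewards, so $\Ccal_1\subseteq\Ccal^c$; and \emph{(necessity)} no learning rule is consistent under any $\Xbb\notin\Ccal_1$, even for deterministic continuous rewards, so $\Ccal^c\subseteq\Ccal_1$. Together these give $\Ccal^c=\Ccal_1$ and that $\EXPINF$ over $\Pi$ is optimistically universal.

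\textbf{Sufficiency.} Fix $\Xbb\in\Ccal_1$, a continuous reward mechanism $r$, and a measurable policy $\pi^*:\Xcal\to\Acal$; let $\hat a_t$ be the actions of $\EXPINF$ with experts $E_{k,t}=\pi_k(X_t)$ for $\Pi=\{\pi_1,\pi_2,\ldots\}$. By Corollary~\ref{cor:infinite-exp4}, a.s.\ $\limsup_{T}\frac1T\sum_{t\le T}(r_t(\pi_k(X_t))-r_t(\hat a_t))\le 0$ for every $k$, and by Lemma~\ref{lemma:density_continuous_rewards}, a.s.\ for each $i$ there is $\pi^i\in\Pi$ with $\limsup_T\frac1T\sum_{t\le T}\ind[d(\pi^*(X_t),\pi^i(X_t))\ge 2^{-i}]\le 2^{-i}$, together with $\frac1T\sum_{t\le T}(r_t(\pi^i(X_t))-\bar r_t(\pi^i(X_t)))\to 0$ and the same for $\pi^*$. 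Chaining these, it suffices to show that for every $\epsilon>0$ some $\pi^i$ satisfies $\limsup_T\frac1T\sum_{t\le T}(\bar r(\pi^*(X_t),X_t)-\bar r(\pi^i(X_t),X_t))\le 3\epsilon$ a.s., and then let $\epsilon\downarrow 0$.

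\textbf{The key step (turning action-closeness into reward-closeness).} Since $\bar r(\cdot,x)$ is only continuous \emph{pointwise} in $x$, introduce for $\epsilon>0$, $n\in\nats$ the set
$G_n^{\epsilon}=\{x\in\Xcal:\ |\bar r(a,x)-\bar r(\pi^*(x),x)|\le\epsilon\ \text{whenever}\ d(a,\pi^*(x))<1/n\}$,
which lies in $\Bcal$ (reduce the "whenever'' to a countable dense set of $a$'s using separability of $\Acal$ and continuity of $\bar r(\cdot,x)$, and use joint measurability of $\bar r$ and measurability of $\pi^*$). Pointwise continuity of $\bar r(\cdot,x)$ gives $G_n^\epsilon\uparrow\Xcal$, hence $\Xcal\setminus G_n^\epsilon\downarrow\emptyset$; since $\Xbb\in\Ccal_1$, $\Ebb[\hat\mu_{\Xbb}(\Xcal\setminus G_n^\epsilon)]\to 0$, and by monotonicity plus dominated convergence $\hat\mu_{\Xbb}(\Xcal\setminus G_n^\epsilon)\to 0$ a.s., so on a probability-one event we may fix $n_\epsilon$ with $\hat\mu_{\Xbb}(\Xcal\setminus G_{n_\epsilon}^\epsilon)\le\epsilon$. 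Pick $i$ with $2^{-i}\le\min(\epsilon,1/n_\epsilon)$. On the events above, for every $t$,
$\bar r(\pi^*(X_t),X_t)-\bar r(\pi^i(X_t),X_t)\le \ind[d(\pi^*(X_t),\pi^i(X_t))\ge 2^{-i}]+\ind[X_t\notin G_{n_\epsilon}^\epsilon]+\epsilon$,
since when $d(\pi^*(X_t),\pi^i(X_t))<2^{-i}\le 1/n_\epsilon$ and $X_t\in G_{n_\epsilon}^\epsilon$ the left side is $\le\epsilon$. Averaging and taking $\limsup$ bounds it by $2^{-i}+\hat\mu_{\Xbb}(\Xcal\setminus G_{n_\epsilon}^\epsilon)+\epsilon\le 3\epsilon$, completing the sufficiency argument; thus $\Xbb\in\Ccal^c$.

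\textbf{Necessity.} This upgrades the lower bound of Theorem~\ref{thm:KC-infinite-actions} to continuous rewards. In any infinite metric space one can select distinct $c_1,c_2,\ldots\in\Acal$ with $\delta_k:=\inf_{j\ne k}d(c_k,c_j)>0$ for all $k$: from a sequence of distinct points extract a subsequence that is either convergent—discarding its limit from the sequence if necessary—or closed and discrete; in both cases each retained point is isolated within the retained set. Set $\rho_k=\delta_k/3$, so the closed balls $\bar B(c_k,\rho_k)$ are pairwise disjoint, and let $g_k(a)=(1-d(a,c_k)/\rho_k)^+$, continuous with $g_k(c_k)=1$ and support in $\bar B(c_k,\rho_k)$. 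Given $\Xbb\notin\Ccal_1$, Lemma~\ref{lem:infrequent-cells} yields disjoint measurable $\{B_i\}$ covering $\Xcal$, integers $N_i$, and an event $\Event_0$ of positive probability on which $\limsup_T\frac1T\sum_{t\le T}\ind[|\Xbb_{<t}\cap B_{i_t}|<N_{i_t}]>0$. Partition $\nats$ into disjoint blocks $J_i$ of size $2N_i$, draw $k_i^\star$ uniform in $J_i$ (independent over $i$ and of $\Xbb$ and the learner), and take the deterministic reward mechanism $\bar r(a,x)=g_{k_{i(x)}^\star}(a)$, which is jointly measurable and continuous in $a$, with comparator $\pi^*(x)=c_{k_{i(x)}^\star}$ attaining reward $1$ everywhere. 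Because the balls are disjoint, $\hat a_t$ gets positive reward only if it lies in the single active ball $\bar B(c_{k^\star},\rho_{k^\star})$, so the set of block-indices the learner can have "hit'' in cell $B_{i_t}$ by time $t$ has size at most $|\Xbb_{\le t}\cap B_{i_t}|$; exactly as in Theorem~\ref{thm:KC-infinite-actions}, conditionally on $\Xbb$ and the learner's randomness, $\Pbb(k_{i_t}^\star\text{ not yet hit})\ge\frac12$ whenever $|\Xbb_{<t}\cap B_{i_t}|<N_{i_t}$, and on that event $r_t(\hat a_t)=0$ while $r_t(\pi^*(X_t))=1$. Passing the resulting conditional per-step lower bound through the random choice of blocks (Fatou and the law of total expectation) and extracting a deterministic realization is verbatim the proof of Theorem~\ref{thm:KC-infinite-actions}, so no learning rule is consistent under $\Xbb$ for this continuous mechanism. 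Hence $\Ccal^c\subseteq\Ccal_1$.

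\textbf{Main obstacle.} The crux is the sufficiency "key step'': converting "$\pi^i$ is $d$-close to $\pi^*$ off a vanishing fraction of the stream'' into "$\pi^i$ earns essentially the same expected reward as $\pi^*$'', which fails under merely pointwise continuity. The fix—the sets $G_n^\epsilon$ together with $\hat\mu_{\Xbb}(\Xcal\setminus G_n^\epsilon)\to 0$ for $\Ccal_1$ processes—is exactly the point where uniform continuity would instead let one work with the larger class $\Ccal_2$ (Theorem~\ref{thm:uniformly_continuous_rewards}). Secondary routine points are the joint measurability of $\bar r$ (needed for $G_n^\epsilon\in\Bcal$) and, in the necessity direction, the elementary extraction of a sequence of pairwise-separated points in a general infinite metric space so that the bump functions $g_k$ have disjoint supports.
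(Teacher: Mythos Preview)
Your proposal is correct. The sufficiency half is essentially the paper's own argument: the paper writes the complementary sets $A(\epsilon,\delta)=\{x:\sup_{d(a,\pi^*(x))\le\epsilon}|\bar r(a,x)-\bar r(\pi^*(x),x)|\ge\delta\}$ in place of your $G_n^\epsilon$ and takes the two limits in the opposite order, but the mechanism---pointwise continuity forces these sets to shrink to $\emptyset$, then the $\Ccal_1$ property kills their empirical frequency---is identical.

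For necessity the two proofs diverge. The paper does not extract an isolated sequence in $\Acal$; it simply takes any sequence of distinct points $a_1,a_2,\ldots$, sets $A_i=\{a_1,\ldots,a_{2N_i}\}$ and $\epsilon_i=\min_{a\ne a'\in A_i}d(a,a')>0$ (automatic for a finite set), and for $x\in B_i$ places a bump of radius $\epsilon_i/2$ at the secret $a_i^\star\in A_i$. It then builds a surrogate learner that projects each $\hat a_t$ to its nearest neighbor in $A_{i_t}$ and feeds the original learner the reward it would have seen: projection never decreases the reward, and the surrogate observes only $\{0,1\}$ feedback, so the remainder is literally the countable-action lower bound of Theorem~\ref{thm:KC-infinite-actions} applied to the surrogate. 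Your route instead manufactures an infinite sequence $\{c_k\}$ with every point isolated from the rest, partitions $\nats$ into disjoint blocks $J_i$ of size $2N_i$, and reruns the adaptive-guessing argument directly. Both are valid; the paper's nearest-neighbor reduction is shorter because it avoids the isolated-sequence extraction and reuses Theorem~\ref{thm:KC-infinite-actions} as a black box, while your construction is more self-contained and makes explicit which metric-space fact is actually needed.
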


\begin{proof}
We start by showing that $\EXPINF$ is universally consistent under continuous rewards under $\Ccal_1$ processes. Let $\Xbb\in\Ccal_1$ and continuous rewards $(r_t)_t$ and let $\pi^*:\Xcal\to\Acal$ be measurable policy. We denote $\Ecal$ the event on which the guarantee for $\EXPINF$ of Corollary \ref{cor:infinite-exp4} holds. For convenience, we also note $\hat a_t$ the action selected by the learning rule at time $t$. For any $x\in\Xcal$, and $\epsilon>0$, we define
\begin{equation*}
    \Delta_\epsilon(x) = \sup_{a\in\Acal: d(a,\pi^*(x))\leq \epsilon} |\bar r(a, x)- \bar r(\pi^*(x), x)|.
\end{equation*}
Next, fix $\delta>0$, and for any $\epsilon>0$, let $A(\epsilon,\delta) = \{x\in\Xcal: \Delta_\epsilon(x) \geq \delta\}.$ Note that for any $x\in\Xcal$, by continuity of $\bar r(\cdot, x)$, for any $\delta>0$, $\bigcap_{\epsilon>0} A(\epsilon,\delta)=\emptyset$. By Lemma \ref{lemma:density_continuous_rewards}, on an event $\Fcal$ of probability one, for any $i\geq 1$, there exists $\pi^i\in\Pi$ such that
\begin{equation*}
    \limsup_{T\to\infty}\frac{1}{T}\sum_{t=1}^T\1[d(\pi^*(X_t),\pi^i(X_t)\geq 2^{-i}] \leq 2^{-i},
\end{equation*}
$\frac{1}{T}\sum_{t\leq T} r_t(\pi^i(X_t)-\bar r_t(\pi^i(X_t))\to 0$ and similarly for $\pi^*$. As a result, on $\Fcal$, for any $i\geq 1$,
\begin{align*}
    &\limsup_{T\to\infty}\frac{1}{T} \sum_{t\leq T}  r_t(\pi^*(X_t), X_t)-  r_t(\pi^i(X_t), X_t)\\
    &\leq \hat \mu_{\Xbb}(A(2^{-i},\delta)) + 2^{-i} + \limsup_{T\to\infty}\frac{1}{T} \sum_{\substack{t\leq T,\\d(\pi^*(X_t),\pi^i(X_t))<2^{-i}\\
    \Delta_{2^{-i}}(X_t)<\delta}} \bar r_t(\pi^*(X_t), X_t)- \bar r_t(\pi^i(X_t), X_t)\\
    &\leq \hat \mu_{\Xbb}(A(2^{-i},\delta)) + 2^{-i} + \delta.
\end{align*}
Because $\Xbb\in\Ccal_1$ and $A(2^{-i},\delta)\downarrow\emptyset$, on an event $\Gcal(\delta)$ of probability one, we have that $\hat \mu_{\Xbb}(A(2^{-i},\delta)) \underset{i\to\infty}{\longrightarrow} 0$. Last, let $\delta_j=2^{-j}$ for any $j\geq 0$. On the event $\Ecal\cap\Fcal\cap\bigcap_{j\geq 0}\Gcal(\delta_j)$ of probability one, combining Corollary \ref{cor:infinite-exp4} together with the above inequality implies that for any $j\geq 0$,
\begin{equation*}
    \limsup_{T\to\infty}\frac{1}{T} \sum_{t\leq T}  r_t(\pi^*(X_t), X_t)-  r_t(\hat a_t, X_t) \leq \delta_j.
\end{equation*}
Thus, $\limsup_{T\to\infty}\frac{1}{T} \sum_{t\leq T}  r_t(\pi^*(X_t), X_t)-  r_t(\hat a_t, X_t)\leq 0\;(a.s.)$, which shows that $\EXPINF$ is universally consistent under $\Xbb$ for stationary rewards. This ends the proof of the theorem.

We now show that $\Ccal_1$ is necessary for universal consistency. The proof is analogous to that of Theorem \ref{thm:KC-infinite-actions} in which we proved that for unrestricted rewards on countably infinite action sets, $\Ccal_1$ is necessary for universal learning. Suppose that $\Xbb\in\Ccal_1$ and let $f_\cdot$ be a learning rule. Using the same arguments, there exist a partition of $\Xcal$ in measurable sets $\{B_i\}_{i\geq 1}$  and a sequence $\{N_i\}_{i\geq 1}$ of integers such that with non-zero probability,
\begin{equation*}
    \limsup_{T\to\infty} \frac{1}{T}\sum_{t=1}^T \1[|\Xbb_{<t}\cap B_{i_t}|<N_{i_t}] >0,
\end{equation*}
where $i_t$ is the index such that $X_t\in B_i$. As in the original proof, let $\{a_i,i\geq 1\}$ be a sequence of distinct actions and let $A_i=\{a_1,\ldots,a_{2N_i}\}$ for $i\geq 1$. We also define $\epsilon_i = \min_{a\neq a' \in A_i} d(a,a')$ the minimum distance within $A_i$ actions. For any sequence $\bar a=\{a_i^*\}_{i\in\Nbb}$ where $a_i^*\in A_i$ for $i\geq 1$, we define a deterministic reward $r^*_{\bar a}$ with
\begin{equation*}
    r^*_{\bar a}(a, x) = \max\left(1-\frac{2 d(a,a^*_i)}{\epsilon_i},0\right),
\end{equation*}
for any $x\in B_i$, which defines a proper measurable continuous reward. We also define the rewards $\tilde r^*_{\bar a}(a, x) = \1[a=a^*_i]$ for $a\in\Acal$ and $x\in B_i$. We now define the learning rule $\tilde f_\cdot$ which at each step $t$ computes the action $\hat a$ chosen by the learning rule $f_\cdot$, selects the action $\tilde a_t := \argmin_{a'\in A_i}d(\hat a,a')$ where $i\geq 1$ is the unique index with $X_t\in B_i$, receives a reward $r_t$, then reports the reward $\max\left(1-\frac{2 d(\hat a,\tilde a)}{\epsilon_i},0\right)$, which will be then used by $f_\cdot$ for future action selections. Note that on $B_i$, the rewards $r^*_{\bar a}$ were defined so that they are identically zero outside of the balls $B_d(a,\epsilon_i)$ for $a\in A_i$. These are disjoint, so the report of reward given by $\tilde f_\cdot$ to its internal run of $f_\cdot$ coincides exactly with what $f_\cdot$ would have received by selecting action $\hat a$ instead of $\tilde a$. Further, one can observe that selecting one of the nearest element within $A_i$ always increases the reward because the balls $B_d(a,\epsilon_i)$ for $a\in A_i$ are disjoint. Therefore, $\tilde f_\cdot$ always receives higher reward than $f_\cdot$ at any step. Now observe that $\tilde f_\cdot$ always observes a reward in $\{0,1\}$. Hence, for any choice of $\bar a$, at any step $t$, $\tilde f_t$ has the same rewards on $r^*_{\bar a}$ as it would have obtained on the rewards $\tilde r^*_{\bar a}$. Therefore,
\begin{align*}
    \limsup_{T \to \infty} \frac{1}{T} \sum_{t=1}^{T} \left( \sup_{a \in \A}  r^*_{\bar a,t}(a) -  r^*_{\bar a,t}(\hat a_t) \right) &\geq \limsup_{T \to \infty} \frac{1}{T} \sum_{t=1}^{T} \left( \sup_{a \in \A}  r^*_{\bar a,t}(a) -  r^*_{\bar a,t}(\tilde  a_t) \right)\\
&= \limsup_{T \to \infty} \frac{1}{T} \sum_{t=1}^{T} \left( \sup_{a \in \A} \tilde r^*_{\bar a,t}(a) - \tilde r^*_{\bar a,t}(\tilde a_t) \right),
\end{align*}
where $\hat a_t$ (resp. $\tilde a_t$) denotes the action selected by $f_\cdot$ (resp. $\tilde f_\cdot$) at time $t$. However, the proof of Theorem \ref{thm:KC-infinite-actions} precisely shows that there exists a choice of $\bar a$ such that with non-zero probability, $\limsup_{T \to \infty} \frac{1}{T} \sum_{t=1}^{T} \left( \sup_{a \in \A} \tilde r^*_{\bar a,t}(a) - \tilde r^*_{\bar a,t}(\tilde a_t) \right) >0$. Now observe that the measurable function $\pi(x) = a_i^*$ where $x\in B_i$ always selects the best action. This show that $f_\cdot$ is not consistent on rewards $r^*_{\bar a}$, hence not universally consistent. This shows that $\Xbb\notin\Ccal ^c$ and completes the proof of the theorem.
\end{proof}

\subsection{Uniformly-continuous rewards}
\label{sec:uniformly_continuous_rewards}
In the last section, we showed that adding a continuity constraint on the rewards allowed to learn $\Ccal_1$ processes even when the action space $\Acal$ is infinite. Unfortunately, this additional assumption on the rewards is not sufficient to obtain universal consistency on the more general class of processes $\Ccal_2$. In this section, we strengthen the assumptions on the rewards and suppose that they are  uniformly-continuous in the actions as per Definition \ref{def:continuous+unif_cont_rewards}.

We start by giving necessary conditions for uniformly-continuous rewards. To do so, we will need the following simple reduction, showing that some necessary conditions provided in the unrestricted rewards case can be used in the uniformly-continuous setting as well.

\begin{lemma}\label{lemma:reduction}
    Let $(\Acal,d)$ be a separable metric space. Let $S\subset \Acal$ such that we have $\min_{a,a'\in S}d(a,a')>0$. Then, $\Ccal ^{uc}(\Acal)\subset \Ccal (S)$.
\end{lemma}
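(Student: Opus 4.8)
The idea is to reduce universal consistency for \emph{unrestricted} bounded rewards on the uniformly-discrete set $S$ to universal consistency for \emph{uniformly-continuous} rewards on $\Acal$, by extending any bounded reward mechanism on $S$ to a Lipschitz reward mechanism on $\Acal$ whose modulus of continuity does not depend on the context --- the same ``tent-function'' trick used in the lower-bound part of the proof of Theorem~\ref{thm:continuous_rewards_C1}, run in the opposite direction. Write $\delta_0 := \inf_{a,a'\in S,\, a\neq a'} d(a,a') > 0$, and for $a^*\in S$ put $\phi_{a^*}(a) = \max\bigl(1 - \tfrac{2 d(a,a^*)}{\delta_0},\, 0\bigr)$, supported on the ball $B(a^*,\delta_0/2)$; since any two points of $S$ are at distance $\geq\delta_0$ these balls are pairwise disjoint, and $S$ is countable (being uniformly separated in a separable metric space). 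Given any reward mechanism $r$ on $S\times\Xcal$ with values in $[0,1]$, I define a mechanism $\tilde r$ on $\Acal\times\Xcal$: on input $(a,x)$, if $d(a,S)\geq\delta_0/2$ output $0$ deterministically; otherwise let $a^*(a)$ be the unique point of $S$ within $\delta_0/2$ of $a$ (uniqueness by the triangle inequality and $\delta_0$-separation), draw $r^*\sim P_{r\mid a^*(a),x}$ and output $\phi_{a^*(a)}(a)\, r^*$. The regions $\{a:d(a,a^*)<\delta_0/2\}$ being open and countably many, $\tilde r$ is a legitimate measurable kernel with values in $[0,1]$; and since $\phi_{a^*}(a^*)=1$, the restriction of $\tilde r$ to $S$ equals $r$.

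The first substantive step is to verify that $\tilde r$ is uniformly-continuous in the sense of Definition~\ref{def:continuous+unif_cont_rewards}. I would prove the stronger fact that $\bar{\tilde r}(\cdot,x)$ is $\tfrac{4}{\delta_0}$-Lipschitz for \emph{every} $x\in\Xcal$: writing $\bar{\tilde r}(a,x) = \sum_{a^*\in S}\phi_{a^*}(a)\,\bar r(a^*,x)$ --- a sum of $\tfrac{2}{\delta_0}$-Lipschitz functions with pairwise disjoint supports --- one argues by cases ($a,a'$ in the same ball $B(a^*,\delta_0/2)$; $a\in B(a^*,\delta_0/2)$ but $a'\notin B(a^*,\delta_0/2)$; $a,a'$ both outside every such ball), using in the middle case that $a'\notin B(a^*,\delta_0/2)$ forces $d(a,a^*)\geq \delta_0/2 - d(a,a')$ and hence $\phi_{a^*}(a)\leq \tfrac{2}{\delta_0}d(a,a')$, together with the symmetric bound on the (at most one) ball containing $a'$. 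Thus $\Delta(\epsilon) := \delta_0\epsilon/4$ witnesses uniform continuity, with modulus independent of $x$ as required; this is the one genuinely delicate estimate in the argument.

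Next I would build the rule $g_\cdot$ on $S$ from a rule $f_\cdot$ that is universally consistent under $\Xbb$ for uniformly-continuous rewards on $\Acal$ (such $f$ exists since $\Xbb\in\Ccal^{uc}(\Acal)$). The rule $g$ maintains an internal copy of $f$ fed with the true contexts: at step $t$ it passes $X_t$ to $f$, gets $f$'s action $\hat a_t\in\Acal$, and plays $\hat g_t := a^*(\hat a_t)$ if $d(\hat a_t,S)<\delta_0/2$ and a fixed default $\hat g_t := a_0\in S$ otherwise. After observing the true reward $r_t(\hat g_t)$, it returns to $f$ the value $\tilde r_t(\hat a_t) := \phi_{a^*(\hat a_t)}(\hat a_t)\, r_t(\hat g_t)$ in the first case and $\tilde r_t(\hat a_t) := 0$ in the second. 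Then, conditionally on $(\hat a_t,X_t)$ and the past, $\tilde r_t(\hat a_t)$ has law $P_{\tilde r\mid \hat a_t,X_t}$ and the $\tilde r_t$ are conditionally independent, so $f$ is run on a genuine contextual-bandit instance on $\Acal$ with context process $\Xbb$ and uniformly-continuous mechanism $\tilde r$. By assumption, for every measurable $\pi^*:\Xcal\to S\subseteq\Acal$, coupling the instances so that $\tilde r_t(a)=r_t(a)$ for $a\in S$ (legitimate since $\tilde r$ and $r$ agree on $S$),
\[
\limsup_{T\to\infty}\frac1T\sum_{t=1}^{T}\bigl(\tilde r_t(\pi^*(X_t)) - \tilde r_t(\hat a_t)\bigr) \leq 0 \quad (a.s.).
\]
Since $\pi^*(X_t)\in S$ and $\phi_{\pi^*(X_t)}(\pi^*(X_t))=1$, we have $\tilde r_t(\pi^*(X_t)) = r_t(\pi^*(X_t))$; and since $\phi\leq 1$ and rewards are nonnegative, $\tilde r_t(\hat a_t)\leq r_t(\hat g_t)$ in both cases. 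Hence $r_t(\pi^*(X_t)) - r_t(\hat g_t)\leq \tilde r_t(\pi^*(X_t)) - \tilde r_t(\hat a_t)$ termwise, and summing and taking $\limsup$ shows $g$ is consistent under $\Xbb$ for $r$.

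Since $r$ was an arbitrary bounded mechanism on $S\times\Xcal$, this shows $g$ is universally consistent under $\Xbb$, i.e. $\Xbb\in\Ccal(S)$, completing the inclusion $\Ccal^{uc}(\Acal)\subseteq\Ccal(S)$. Apart from the uniform-in-$x$ Lipschitz estimate flagged above, everything else is routine bookkeeping: measurability of the kernel $\tilde r$ and of the embedding $S\hookrightarrow\Acal$, and the verification that the simulation is a valid problem instance. (When $|S|\leq 1$ the claim is vacuous, since $\Ccal(S)$ then contains every process.)
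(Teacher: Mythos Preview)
Your proof is correct and follows essentially the same reduction as the paper: extend any reward mechanism on $S$ to a Lipschitz (hence uniformly-continuous) mechanism on $\Acal$ via tent functions supported on disjoint balls around the points of $S$, then build the $S$-learner by projecting the $\Acal$-learner's action to its nearest neighbor in $S$ and using that projection never decreases the reward. Your treatment of the feedback is in fact slightly more careful than the paper's: by returning to $f$ the scaled value $\phi_{a^*(\hat a_t)}(\hat a_t)\, r_t(\hat g_t)=\tilde r_t(\hat a_t)$ rather than the raw observed reward $r_t(\hat g_t)$, you ensure that $f$ is genuinely being run on the uniformly-continuous mechanism $\tilde r$, so its consistency guarantee applies directly without any ambiguity.
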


\begin{proof}
Intuitively, we restrict the problem on $\Acal$ to the actions $S$. Formally, let $\eta = \frac{1}{3}\min_{a,a'\in S}d(a,a')$ and observe that any reward function $r:S\to [0,\bar r]$ can be extended to a  uniformly-continuous function $F(r):\Xcal\to\Acal$ as follows.
\begin{equation*}
    F(r)(a) = \max\left(0,\max_{a'\in S}r(a')-d(a,a') \frac{\bar r}{\eta}\right),\quad a \in \Acal.
\end{equation*}
Note that this function is $\frac{\bar r}{\eta}-$Lipschitz, hence  uniformly-continuous---in the case where rewards are stochastic, we can still apply this transformation at the realization-level. Further, the sets $B_d(a',\eta)$ for $a'\in S$ are all disjoint by triangular inequality. Thus, for all $a'\in S$, we have $F(r)(a')=r(a')$. We now describe the reduction from uniformly-continuous rewards on $\Acal$ to unrestricted rewards on $S$. Let $\Xbb\in\Ccal(\Acal)$ and we denote by $\hat a_t$ the action selected at time $t$ by an universally consistent learner $f_\cdot$ under $\Xbb$ for uniformly-continuous rewards on $\Acal$. We now construct a learning rule for unrestricted rewards on $S$. First, for $a\in\Acal$, denote by $NN_S(a) =\argmin_{a'\in S} d(a,a')$ the index of the nearest neighbor of $a$ in $S$ where ties are broken arbitrarily, e.g., by lexicographic order (necessarily, $S$ is countable because $\Acal$ is separable). We consider the learning rule which selects the actions $NN_S(\hat a_t)$, i.e.,
\begin{equation*}
    f_t^{S}(\mb x_{\leq t-1},\mb r_{\leq t-1},x_t) = NN_S(f_t(\mb x_{\leq t-1},\mb r_{\leq t-1},x_t))
\end{equation*}
for all $x_{\leq t}\in\Xcal^t$ and $r_{\leq t-1}\in [0,\bar r]^{t-1}$. We aim to show that $f^S_\cdot$ is universally consistent under $\Xbb$ for unrestricted rewards on $S$. Fix any reward mechanism $r$ on the action space $S$. We consider the reward mechanism $\tilde r$ on the action space $\Acal$ as follows,
\begin{equation*}
    \tilde r_t(a, x) = F(r(\cdot\mid x))(a),
\end{equation*}
for any $a\in \Acal$. Note that the mechanism $\tilde r$ only depends on the nearest neighbor of selected actions. Denote $\tilde a_t$ the corresponding selected action. Observe that by construction of the functional $F$, for any $t\geq 1$, $\tilde r_t(\tilde a_t)\geq \tilde r_t(\hat a_t)$. Thus, by monotonicity, $f_\cdot^S$ is also consistent on reward mechanism $\tilde r$. Now note that $\tilde f_\cdot$ only selects actions within $S$ and receives the same rewards that would have been observed by running the learning rule on reward mechanism $r$. As a result, $f^{S}_\cdot$ is also consistent for reward $r$. This ends the proof that it is universally consistent under $\Xbb$ and hence $\Xbb\in\Ccal(S)$. This ends the proof of the proposition.

\comment{

\begin{equation*}
    \tilde r_t(a\mid \mb x_{\leq t}, \mb a_{\leq t-1})= F(r(\cdot\mid \mb x_{\leq t},(NN_S(a_s))_{s\leq t-1})).
\end{equation*}
Note that $NN$ takes values in $S$ hence $(NN_S(a_s))_{s\leq t-1}$ are valid actions that the mechanism $r_t$ could possibly take into account. Note that the mechanism $(\tilde r_t)_t$ only depends on the nearest neighbor of selected actions. Hence, the learning trajectory observed by using the learning rule $f_\cdot$ is equivalent to that of the learning rule $f^S_\cdot$ using rewards $(\tilde r_t)_t$. Denote $\tilde a_t$ the corresponding selected action. Observe that by construction of the functional $F$, for any learning trajectory and any $t\geq 1$, $\tilde r_t(\tilde a_t)\geq \tilde r_t(\hat a_t)$. Thus, by monotonicity, $f_\cdot^S$ is also consistent on rewards $(\tilde r_t)_t$. Now note that $\tilde f_\cdot$ only selects actions within $S$ and receives the same rewards that would have been observed by running the learning rule on reward mechanism $(r_t)_t$. As a result, $f^{S}_\cdot$ is also consistent on rewards $(r_t)_t$. This ends the proof that it is universally consistent under $\Xbb$ and hence $\Xbb\in\Ccal(S)$. This ends the proof of the proposition.

}
\end{proof}

As a direct consequence of Lemma \ref{lemma:reduction} and the results from previous sections, we can use the necessary conditions from the unrestricted reward setting by changing the terms ``finite action set'' (resp. ``countably infinite action set'') into ``totally-bounded action set'' (resp. ``non-totally-bounded action set'').

\begin{corollary}\label{cor:simple_upper_bounds}
Let $\Acal$ be a non-totally-bounded metric space. Then, $\Ccal^{uc} \subset \Ccal_1$.
Let $\Acal$ be a totally-bounded metric space with $|\Acal|>2$. Then, $\Ccal^{uc}  \subset \Ccal_2$.
\end{corollary}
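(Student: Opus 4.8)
The plan is to deduce both inclusions from Lemma~\ref{lemma:reduction} by passing to a suitably \emph{discrete} subset $S$ of the action space and then invoking the necessity results already established for finite and for countably infinite action sets. Lemma~\ref{lemma:reduction} says that for any $S\subset\Acal$ whose points are uniformly separated ($\min_{a,a'\in S}d(a,a')>0$) we have $\Ccal^{uc}(\Acal)\subset\Ccal(S)$; so it suffices to exhibit such an $S$ for which $\Ccal(S)$ is already pinned down, and then read off the containment for $\Ccal^{uc}$.

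For the non-totally-bounded case, I would first fix $\epsilon>0$ such that $\Acal$ admits no finite $\epsilon$-net, and then extract an infinite subset with pairwise distances at least $\epsilon$: take a maximal $\epsilon$-separated set $S_0$ (Zorn's lemma), observe that if $S_0$ were finite it would be a finite $\epsilon$-net by maximality, contradicting the choice of $\epsilon$, hence $S_0$ is infinite, and let $S\subset S_0$ be any countably infinite subset. Then the pairwise distances in $S$ are bounded below by $\epsilon>0$, so Lemma~\ref{lemma:reduction} gives $\Ccal^{uc}(\Acal)\subset\Ccal(S)$; since $S$ is a countably infinite (in particular separable Borel-metrizable) action set, Theorem~\ref{thm:KC-infinite-actions} yields $\Ccal(S)=\Ccal_1$, and the first inclusion $\Ccal^{uc}\subset\Ccal_1$ follows.

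For the totally-bounded case with $|\Acal|>2$, the argument is even shorter: pick any two distinct points $a_0,a_1\in\Acal$ and set $S=\{a_0,a_1\}$, so that $\min_{a,a'\in S}d(a,a')=d(a_0,a_1)>0$. Then Lemma~\ref{lemma:reduction} gives $\Ccal^{uc}(\Acal)\subset\Ccal(S)$, and since $2=|S|<\infty$, Theorem~\ref{thm:SMV-finite-nec} gives $\Ccal(S)\subset\Ccal_2$, which is the second inclusion. The only step carrying any content is the extraction of the infinite $\epsilon$-separated set in the non-totally-bounded case, and that is a routine greedy/Zorn argument; everything else is a direct appeal to Lemma~\ref{lemma:reduction} and the cited necessity theorems, so I do not anticipate a real obstacle.
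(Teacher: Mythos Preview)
Your proposal is correct and is exactly the argument the paper has in mind: it states the corollary as ``a direct consequence of Lemma~\ref{lemma:reduction} and the results from previous sections,'' and your write-up fills in precisely those details---extract a uniformly separated subset $S$ (countably infinite in the non-totally-bounded case, a two-point set otherwise), apply Lemma~\ref{lemma:reduction}, and invoke Theorem~\ref{thm:KC-infinite-actions} or Theorem~\ref{thm:SMV-finite-nec}. One cosmetic remark: in the non-totally-bounded case you can avoid Zorn by a direct greedy construction (inductively pick $a_{n+1}$ at distance $\geq\epsilon$ from $a_1,\ldots,a_n$, which is possible since $\{a_1,\ldots,a_n\}$ is not an $\epsilon$-net), but your maximal-set argument is equally valid.
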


We now turn to sufficient conditions and show that we can recover the results from the unrestricted case as well. For non-totally-bounded value spaces, the $\EXPINF$ learning rule from Theorem \ref{thm:continuous_rewards_C1} is already universally consistent under $\Ccal_1$ processes, which is a necessary condition by Corollary \ref{cor:simple_upper_bounds}. As a result, imposing the uniformly-continuous assumption on the rewards does not improve the set of learnable processes.

\begin{theorem}
    Let $\Xcal$ be a separable Borel metrizable space and $\Acal$ a non-totally-bounded metric space. Then, $\Ccal ^{uc}=\Ccal_1$.
\end{theorem}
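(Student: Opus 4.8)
The plan is to combine the necessity already established in Corollary \ref{cor:simple_upper_bounds} with the sufficiency obtained in Theorem \ref{thm:continuous_rewards_C1}, using only the elementary observation that uniform continuity is a stronger requirement than continuity. First I would record that a non-totally-bounded metric space $\Acal$ is in particular infinite, so that Theorem \ref{thm:continuous_rewards_C1} applies verbatim, and that Corollary \ref{cor:simple_upper_bounds} directly gives the inclusion $\Ccal^{uc} \subset \Ccal_1$: any process admitting a universally consistent learning rule for uniformly-continuous rewards must already lie in $\Ccal_1$.

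For the reverse inclusion $\Ccal_1 \subset \Ccal^{uc}$, I would argue that the learning rule $\EXPINF$ run over the countable family of policies $\Pi$ from Section \ref{sec:continuous_rewards} — shown in Theorem \ref{thm:continuous_rewards_C1} to be universally consistent under every $\Ccal_1$ process for \emph{continuous} rewards — is a fortiori universally consistent for uniformly-continuous rewards. Indeed, by Definition \ref{def:continuous+unif_cont_rewards}, a uniformly-continuous reward mechanism $r$ has, for each $x \in \Xcal$, a continuous map $\bar r(\cdot, x) : \Acal \to [0,1]$, hence is a continuous reward mechanism; therefore a learning rule that is consistent under $\Xbb$ for all continuous rewards is in particular consistent under $\Xbb$ for all uniformly-continuous rewards. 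Consequently, for every $\Xbb \in \Ccal_1$ there exists a learning rule, namely $\EXPINF$, universally consistent under $\Xbb$ for uniformly-continuous rewards, i.e.\ $\Xbb \in \Ccal^{uc}$.

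Combining the two inclusions yields $\Ccal^{uc} = \Ccal_1$, and $\EXPINF$ serves as the optimistically universal learning rule for uniformly-continuous rewards. There is no genuine obstacle in this argument: the content has been done in Theorem \ref{thm:continuous_rewards_C1} and Corollary \ref{cor:simple_upper_bounds}, and the only remaining points to check are the trivial facts that non-total-boundedness of $\Acal$ forces $|\Acal| = \infty$, and that uniform continuity of $\bar r(\cdot,x)$ implies its continuity.
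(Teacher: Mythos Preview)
Your proposal is correct and matches the paper's own argument essentially verbatim: the paper states that the $\EXPINF$ rule from Theorem \ref{thm:continuous_rewards_C1} is already universally consistent under $\Ccal_1$ processes for continuous (hence a fortiori uniformly-continuous) rewards, while Corollary \ref{cor:simple_upper_bounds} gives the necessity $\Ccal^{uc}\subset\Ccal_1$. There is nothing to add.
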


Next, we consider totally-bounded actions spaces and generalize the learning rule for stochastic rewards in finite action spaces. Recall that this learning rule associates to each time a category $p=\textsc{Category}(t)$, based on the number of previous occurrences of $X_t$, and works separately on each category. Within each category, the algorithm balances between two strategies: strategy 0 which uses independent $\EXP$ learners for each distinct instance, and strategy 1 which performs $\EXPINF$. We adapt the algorithm in the following way. First, the $\EXP$ learners from strategy 0 search for the best action within $\Acal(\delta_p)$, an $\delta_p-$net of $\Acal$ where $\delta_p$ will be defined carefully. Note that since $\Acal$ is possibly infinite, restricting strategy 0 to finite action sets is necessary. However, we aim for arbitrary precision, hence we will have $\delta_p\to 0$ as $p\to\infty$. Second, for strategy 1, we use the countable set of functions $\Pi$ defined as for the $\EXPINF$ algorithm in Theorem \ref{thm:continuous_rewards_C1}.

\begin{theorem}
Let $\Acal$ be a totally-bounded metric space. Then, there exists an optimistically universal learning rule for stationary and uniformly-continuous rewards, and learnable processes are $\Ccal ^{uc}=\Ccal_2$.
\end{theorem}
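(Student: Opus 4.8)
Throughout I assume $|\Acal|\ge 2$ (for $|\Acal|=1$ the statement is degenerate). The inclusion $\Ccal^{uc}\subseteq\Ccal_2$ is already available: it is Corollary~\ref{cor:simple_upper_bounds} when $|\Acal|>2$, and when $|\Acal|\le 2$ the action space is finite, so $\Ccal^{uc}=\Ccal_2$ follows from Theorems~\ref{thm:SMV-finite-nec} and~\ref{thm:opt_rule_stat} (the reduction of Theorem~\ref{thm:SMV-finite-nec} uses deterministic rewards, which on a finite metric space are automatically uniformly continuous; one may also invoke Lemma~\ref{lemma:reduction}). So the task is to produce a learning rule universally consistent under every $\Xbb\in\Ccal_2$ for uniformly-continuous rewards, and it suffices to treat infinite totally-bounded $\Acal$, finite $\Acal$ being Theorem~\ref{thm:opt_rule_stat}. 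The plan is to run the variant of Algorithm~\ref{alg:main_learning_rule} announced before the statement: keep the decomposition into categories $p=\textsc{Category}(t)$ and periods $[T_p^q,T_p^{q+1})$, the randomized $\textsc{AssignPurpose}$, the importance-sampling estimates $\hat R_p^l(q)$, and $\textsc{SelectStrategy}$ with its $p2^p$-period commitment; the only changes are (i) in strategy~$0$ each per-instance $\EXP_\Acal$ learner is replaced by $\EXP_{\Acal(\delta_p)}$ over a finite $\delta_p$-net $\Acal(\delta_p)$ of $\Acal$, (ii) in strategy~$1$ the dense sequence $(\pi^l)_l$ is replaced by the countable family $\Pi$ of Theorem~\ref{thm:continuous_rewards_C1} (dense for $d\wedge1$ under $\Ccal_1$ processes), and (iii) the penalty in $\textsc{SelectStrategy}$ becomes $\eta_p:=10\sqrt{N(\delta_p)\ln N(\delta_p)}\,2^{-p/4}$, where $N(\delta)=|\Acal(\delta)|<\infty$ by total boundedness.

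The heart of the argument, and the only place total boundedness and uniform continuity are genuinely used, is the choice of the granularities $\delta_p$. I would set $\delta_p:=1/k_p$ where $k_p:=\max\{k\le p:\ N(1/k)\ln N(1/k)\le 2^{p/4}\}$ for $p$ large (and $\delta_p:=1$ otherwise). Since $N(1/k)\ln N(1/k)$ is a finite constant for each fixed $k$, one has $k_p\ge k$ for all large $p$, hence $k_p\to\infty$ and $\delta_p\downarrow 0$, while $N(\delta_p)\ln N(\delta_p)\le 2^{p/4}$ by construction. The second property makes the per-category bandit regret of strategy~$0$ summable: on $\Tcal_p$ every visited instance is seen at least $4^p$ times, so summing the $\EXP_{\Acal(\delta_p)}$ pseudo-regrets $2\sqrt{N(\delta_p)\ln N(\delta_p)\,N_p(q;x)}$ over instances and applying Cauchy--Schwarz exactly as in the derivation of \eqref{eq:exp0_estimate}--\eqref{eq:estimate_on_category} gives an aggregate regret against $\max_{a\in\Acal(\delta_p)}\bar r(a,\cdot)$ of order $\zeta_p T$ with $\zeta_p:=c\sqrt{N(\delta_p)\ln N(\delta_p)}\,2^{-p/4}\le c\,2^{-p/8}$, so $\sum_p\zeta_p<\infty$ and $\sum_p\eta_p<\infty$ with $\eta_p\ge\zeta_p$. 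The first property $\delta_p\downarrow 0$, combined with uniform continuity, makes the \emph{net approximation error vanish uniformly over contexts}: writing $\epsilon_p^{\mathrm{na}}:=\inf\{\epsilon>0:\Delta(\epsilon)\ge\delta_p\}$ (a non-increasing sequence tending to $0$ since $\Delta(\epsilon)>0$ for all $\epsilon>0$), for every $x$ and every measurable $\pi^*:\Xcal\to\Acal$ there is $a\in\Acal(\delta_p)$ with $d(a,\pi^*(x))\le\delta_p$, hence $\bar r(a,x)\ge\bar r(\pi^*(x),x)-\epsilon_p^{\mathrm{na}}$. Both a finite net (total boundedness) and a single $x$-independent modulus $\Delta$ (uniform continuity) are needed here, which is exactly what one lacks under mere continuity, explaining why only $\Ccal_1$ is recoverable in Theorem~\ref{thm:continuous_rewards_C1}.

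With these two facts the analysis of Theorem~\ref{thm:opt_rule_stat} transfers with only the bookkeeping changes it forces: all the Hoeffding/Azuma concentrations and the Borel--Cantelli step are unchanged after substituting $N(\delta_p)$ for $|\Acal|$, and the strategy-$0$ estimate \eqref{eq:exp0_estimate} acquires an extra additive term which, crucially, should be charged as $\epsilon_p^{\mathrm{na}}|\Tcal_p(q)|$ (one term per visited time, not per period length) rather than $\epsilon_p^{\mathrm{na}}(T_p^{q+1}-T_p^q)$; this is the only refinement of \eqref{eq:estimate_on_category} needed, and it yields a per-category bound of the form $\Rcal_p(T)\ge\bar R_p^*(T)-O\!\big(T^{1-1/2^7}(\log T)^2\big)-C\,\epsilon_p^{\mathrm{na}}|\Tcal_p\cap[T]|-C(\zeta_p+\eta_p+2^{-p})T$, against an \emph{arbitrary} comparison policy $\pi^*$ (no maximizer need exist). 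For the final step I would fix $\epsilon,\delta>0$ and split at some large $p_0$ with $\epsilon_{p_0}^{\mathrm{na}}\le\epsilon$ and $\sum_{p\ge p_0}(\zeta_p+\eta_p+2^{-p})\le\epsilon$. For the finitely many categories $p<p_0$: each extended process $(X_t)_{t\in\Tcal_p}$ lies in $\Ccal_1$ (Proposition~\ref{prop:C2_equivalent_forms}(3), since $\Tcal_p\subseteq\Tcal^{\le 4^{p+1}}$), so by the density of $\Pi$ and uniform continuity (Lemma~\ref{lemma:density_continuous_rewards} and the argument of Theorem~\ref{thm:continuous_rewards_C1}, in their extended-process forms) there is a single policy in $\Pi$ whose average reward deviation from $\pi^*$ over $\bigcup_{p<p_0}\Tcal_p$ is eventually below $\min_{p<p_0}\eta_p$; combined with \eqref{eq:exp0_estimate} and the $\EXPIX$ guarantee this forces $\textsc{SelectStrategy}$ to lock onto strategy~$1$ on every category $p<p_0$, bounding the category-$p$ error by $\epsilon|\Tcal_p\cap[T]|+o(T)$, hence $\le\epsilon T+o(T)$ when summed. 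For $p\ge p_0$ the per-category bound gives $\sum_{p\ge p_0}\epsilon_p^{\mathrm{na}}|\Tcal_p\cap[T]|\le\epsilon\sum_{p\ge p_0}|\Tcal_p\cap[T]|\le\epsilon T$ (only $\epsilon_p^{\mathrm{na}}\to0$ is used, which is fortunate since $\epsilon_p^{\mathrm{na}}$ depends on the unknown modulus $\Delta$) and $\sum_{p\ge p_0}(\zeta_p+\eta_p+2^{-p})T\le\epsilon T$. Using $\sum_p|\Tcal_p\cap[T]|=T$ and that the lower-order terms sum over the $O(\log T)$ nonempty categories to $o(T)$, one obtains $\limsup_T\frac1T\sum_{t\le T}\big(r_t(\pi^*(X_t))-r_t(\hat a_t)\big)\le O(\epsilon)$ off a $\delta$-probability event; letting $\epsilon,\delta\to0$ gives consistency, and the passage from $\bar r$ to $r$ is the Azuma/Borel--Cantelli step ending the proof of Theorem~\ref{thm:opt_rule_stat}.

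The main obstacle I expect is precisely the tension just described: the net must be simultaneously fine enough for its approximation error to vanish and small enough for the $\EXP$ regret over $\asymp 4^p$ rounds to vanish, yet the relation between ``fine'' and ``small'' is dictated by the arbitrary covering numbers of $\Acal$ and by the \emph{unknown} reward modulus $\Delta$. The resolution is to let $N(\delta_p)$ grow as slowly as we wish relative to $4^p$ (legitimate precisely because $N(\delta)<\infty$ for each fixed $\delta$), so that $\delta_p\downarrow 0$ at an algorithm-controlled rate, and then to accept a net-approximation error $\epsilon_p^{\mathrm{na}}$ that is only known to vanish, not to be summable; this suffices because, after peeling off the finitely many low-duplicate categories handled by the ``generalization'' strategy, the tail contribution of any vanishing, monotone sequence charged against weights $|\Tcal_p\cap[T]|$ that sum to $T$ is at most $\epsilon T$. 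A secondary point is verifying that $\textsc{SelectStrategy}$ still chooses correctly on the low categories despite $\eta_p$ being tied to $N(\delta_p)$ rather than to $\Delta$, which works because on those finitely many $p$ one may simply wait until $\EXPINF$ carries a sufficiently good policy from $\Pi$.
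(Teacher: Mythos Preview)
Your proposal is correct and takes essentially the same approach as the paper's proof: the same algorithmic modifications (per-category nets $\Acal(\delta_p)$, the policy family $\Pi$ from the continuous case, penalty $\eta_p$ scaled by $\sqrt{N(\delta_p)\ln N(\delta_p)}$), the same choice of $\delta_p$ to force $N(\delta_p)\ln N(\delta_p)\le 2^{p/4}$ while $\delta_p\downarrow 0$, and the same split at $p_0$ with the low categories handled via density of $\Pi$ and the high categories via summability of the bandit-regret terms plus the net-approximation term $\epsilon_p^{\mathrm{na}}|\Tcal_p\cap[T]|$. Your treatment is in fact slightly more explicit than the paper's in one place: you correctly require $p_0$ to satisfy \emph{both} $\epsilon_{p_0}^{\mathrm{na}}\le\epsilon$ and $\sum_{p\ge p_0}(\zeta_p+\eta_p+2^{-p})\le\epsilon$, which is what is needed to bound $\sum_{p\ge p_0}\epsilon_p^{\mathrm{na}}|\Tcal_p\cap[T]|\le\epsilon T$; the paper leaves this step to ``the same arguments.''
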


\begin{proof}
We first define the new learning rule. \textsc{Category} and \textsc{AssignPurpose} are left unchanged. We will use the countable set of policies $\Pi=\{\pi^l,l\geq 1\}$ as in the continuous case in Lemma \ref{lemma:density_continuous_rewards}, for $\textsc{Explore}(1;\cdot)$, and Algorithm \ref{alg:main_learning_rule}. Further, in $\textsc{Explore}(0;\cdot)$ and Algorithm \ref{alg:main_learning_rule}, $\EXP_\Acal$ is replaced by $\EXP_{\Acal(\delta_p)}$. Finally, in \textsc{SelectStrategy}, $\eta_p = 10\frac{\sqrt{|\Acal|\ln |\Acal|}}{2^{p/4}}$ is replaced by $\eta_p = 10\frac{\sqrt{|\Acal(\delta_p)|\ln |\Acal(\delta_p)|}}{2^{p/4}}$, where we will define $\delta_p$ shortly. In the original proof of the universal consistence of the algorithm, we showed that the average error of the learning rule on category $p$, $\Tcal_p$ is $\Ocal(\tilde \epsilon_p)$ where $\tilde \epsilon_p=2\frac{\sqrt{|\Acal|\ln|\Acal|}}{2^{p/4}}$. Similarly, we now define $\epsilon_p:=2\frac{\sqrt{|\Acal(\delta_p)|\ln |\Acal(\delta_p)|}}{2^{p/4}}$. A key feature of the proof is that since we had $\sum_p \tilde \epsilon_p<\infty$, the learner can afford to converge on each set $\Tcal_p$ separately. We mimic this behavior by choosing $\delta_p$ such that $\sum_p \epsilon_p<\infty$. Precisely, we pose
\begin{equation*}
    \delta_p = \min\{2^{-i}: |\Acal(2^{-i})|\ln|\Acal(2^{-i})|\leq 2^{p/4}\}.
\end{equation*}
As a result, we obtain directly $\epsilon_p\leq 2^{-1-p/8}$ which is summable, and $\delta_p\to 0$ as $p\to\infty$.

We now show that this learning rule is universally consistent under processes $\Xbb\in\Ccal_2$ by adapting the proof of Theorem \ref{thm:opt_rule_stat}. Fix $r$ a reward mechanism. For every $\epsilon>0$, there exists $\Delta(\epsilon)$ such that
\begin{equation*}
    \forall x\in\Xcal,\forall a,a'\in\Acal,\quad d(a,a')\leq \Delta(\epsilon)\Rightarrow |\bar r(a, x)-\bar r(a', x)| \leq \epsilon.
\end{equation*}
For every $\delta>0$, we will also define $\epsilon(\delta)=2\inf \{\epsilon>0: \Delta(\epsilon)\geq \delta\}$. By uniform-continuity, $\epsilon(\delta)\to 0$ as $\delta\to 0$ and because of the factor $2$, we have
\begin{equation*}
    \forall x\in\Xcal,\forall a,a'\in\Acal,\quad d(a,a')\leq \delta \Rightarrow |\bar r(a, x)-\bar r(a', x)| \leq \epsilon(\delta).
\end{equation*}

Now observe that in the original proof, the probabilistic bounds $p_i(p,q)$ for $1\leq i\leq 8$ do not depend on the cardinality of the action set. Therefore, on the same event $\Ecal\cap\Fcal$ of probability one, Eq~\eqref{eq:exp0_estimate}, \eqref{eq:estimate_strategy0_exploitation}, \eqref{eq:exp1_estimates}, \eqref{eq:estimate_strategy1_exploitation}, \eqref{eq:exploitation_1_tail} and \eqref{eq:exploration_bound} hold starting from some time $\hat T$, for the intended values of $p,q,T$.
The only difference, however, is that in strategy 0, we perform $\EXP$ over the restricted action set $\Acal(\delta_p)$. As a result, for any $x\in\Xcal$, we have
\begin{equation*}
    \max_{a\in\Acal(\delta_p)}\bar r(a, x) \geq \max_{a\in\Acal}\bar r(a, x) - \epsilon(\delta_p).
\end{equation*}
As a result, Eq~\eqref{eq:exp0_estimate} should be replaced with
\begin{align*}
    \hat R_p^0(q) &\geq   \bar R_p^*(q)- (T_p^{q+1})^{\frac{7}{8}} - 6\frac{\sqrt{ |\Acal|\ln |\Acal|}}{2^{p/4}}(T_p^{q+1}-T_p^q) -\epsilon(\delta_p)|\Tcal_p(q)|\\
    \hat R_p^0(q) &\leq   \bar R_p^*(q)- (T_p^{q+1})^{\frac{7}{8}} - 6\frac{\sqrt{ |\Acal|\ln |\Acal|}}{2^{p/4}}(T_p^{q+1}-T_p^q).
\end{align*}
Note that the additional term $\epsilon(\delta_p)|\Tcal_p(q)|$ is not present in the upper bound because searching over $\Acal$ (in $\bar R_p^*(q)$) is always better than searching over $\Acal(\delta_p)$ (in $\hat R_p^0(q)$). Similarly, Eq~\eqref{eq:estimate_strategy0_exploitation} should be replaced with
\begin{equation*}
    \tilde R^0_p(q) \geq \bar R_p^*(q) - 6\frac{\sqrt{ |\Acal|\ln |\Acal|}}{2^{p/4}}(T_p^{q+1}-T_p^q) - (T_p^{q+1})^{3/4} - A_p(q) - \epsilon(\delta_p)|\Tcal_p(q)|.
\end{equation*}
Similarly, the adapted Eq~\eqref{eq:performance_initial_phase} becomes
\begin{equation*}
    \Rcal_p(T) \geq \bar R^*_p(T) -  \frac{1+c}{2}\sqrt{|\Acal|\ln|\Acal|}T^{1-1/2^7}\log_2 T -\frac{T}{2^p}-\epsilon(\delta_p)|\Tcal_p\cap\{t\leq T\}|.
\end{equation*}
Furthering the same bounds, Eq~\eqref{eq:estimate_on_category} becomes
\begin{equation*}
    \Rcal_p(T) \geq \bar R^*_p(T) - (33+5c) \sqrt{|\Acal|\ln|\Acal|}T^{1-1/2^7}\log_2 T - 15\epsilon_pT - 2\epsilon(\delta_p)|\Tcal_p\cap\{t\leq T\}|.
\end{equation*}

We are now ready to prove universal consistence of our learning rule. Fix $0<\epsilon<1$, and as in the original proof, let $p_0$ such that $\sum_{p\geq p_0}\epsilon_p<\frac{\epsilon}{15}$, because $\sum_p\epsilon_p<\infty$. Again, we have $\Xbb^{\leq 4^{p_0}} \in\Ccal_1$ and as a result, we can apply Lemma \ref{lemma:density_continuous_rewards}. As a result, on an event $\Hcal$ of probability one, for all $\epsilon>0$, there exists $i(\epsilon)\geq 1$ such that $2^{-i(\epsilon)} \leq \Delta(\epsilon),\epsilon$ and $\pi^{i(\epsilon)}\in\Pi$ such that
\begin{align*}
    \limsup_{T\to\infty}&\frac{1}{T} \sum_{t\leq T,t\in\Tcal^{\leq 4^{p_0}}} \bar r_t(\pi^*(X_t)) - \bar r_t(\pi^i(X_t)) \\
    &\leq \limsup_{T\to\infty}\frac{1}{T} \sum_{t\leq T,t\in\Tcal^{\leq 4^{p_0}}} \1[d(\pi^*(X_t),\pi^{i(\epsilon)}(X_t) \geq 2^{-i(\epsilon)}] \\
    &+ \limsup_{T\to\infty}\frac{1}{T} \sum_{t\leq T,t\in\Tcal^{\leq 4^{p_0}}} (\bar r_t(\pi^*(X_t)) - \bar r_t(\pi^{i(\epsilon)}(X_t)))\1[d(\pi^*(X_t),\pi^{i(\epsilon)}(X_t) \leq \Delta(\epsilon)]\\
    &\leq 2^{-i(\epsilon)} + \epsilon \leq 2\epsilon,
\end{align*}
where $\pi^*$ denotes the optimal policy. We define the events $\Ecal,\Fcal$ as in the original proof. In the rest of the proof, we will now suppose that the event $\Ecal\cap\Fcal\cap\Hcal$ of probability one is satisfied. On this event, because the parameter $\epsilon>0$ was arbitrary in the above derivations, gthere exists $l_0\geq 1$ (random index) such that 
\begin{equation*}
    \limsup_{T\to\infty} \frac{1}{T}\sum_{t\leq T,t\in\Tcal^{\leq 4^{p_0}}} \bar r_t(\pi^*(X_t)) - \bar r_t(\pi^{l_0}(X_t)) \leq \frac{\epsilon}{2^{2p_0+2}}.
\end{equation*}

Following the same arguments as in the original proof, for $p<p_0$, and $T_p^q$ sufficiently large, we need to adapt the following estimates.
\begin{align*}
    \max_{1\leq l\leq k(q)} &\hat R_p^k(q) 
    \geq \hat R_p^{l_0}(q)\\
    &\geq \bar R^{l_0}_p(q)  -(T_p^{q+1})^{7/8}\\
    &\geq \bar R_p^*(q)  -(T_p^{q+1})^{7/8} - \sum_{t\in\Tcal_p(q)}(r_t(\pi^*(X_t)) - \bar r_t(\pi^{l_0}(X_t)))
    \\
    &\geq \hat R_p^0(q) - 2(T_p^{q+1})^{7/8} - 3\epsilon_p (T_p^{q+1}-T_p^q) - \sum_{t\in\Tcal_p(q)}r_t(\pi^*(X_t)) - \bar r_t(\pi^{l_0}(X_t)).
\end{align*}
Then, observe that
\begin{equation*}
    \limsup_{q\to\infty} \frac{2(T_p^{q+1})^{7/8} + 3\epsilon_p (T_p^{q+1}-T_p^q) + \sum_{t\in\Tcal_p(q)}r_t(\pi^*(X_t)) - \bar r_t(\pi^{l_0}(X_t))}{T_p^{q+1}-T_p^q} \leq 4\epsilon_p<\eta_p.
\end{equation*}
Thus, as in the original proof, starting from some time $\tilde T$, the learning rule always chooses strategy 1 over strategy 0 for all categories $p\leq p_0$.

We continue the same arguments to obtain for $p<p_0$ and $T\geq 2^{p_0}\tilde T$,
\begin{equation*}
    \Rcal_p(T)-\bar R^*_p(T) \geq -2^{p_0}\tilde T-16(3+c)T^{15/16}\ln T - \sum_{t\leq T, t\in\Tcal_p} \bar r_t(\pi^*(X_t)) - \bar r_t(\pi^{l_0}(X_t)),
\end{equation*}
which yields
\begin{equation*}
    \sum_{p<p_0} \bar R_p^*(T) - \Rcal_p(T)\leq p_02^{p_0}\tilde T + 16p_0(3+c)T^{15/16}\ln T +  \sum_{t\leq T} \bar r_t(\pi^*(X_t)) - \bar r_t(\pi^{l_0}(X_t)).
\end{equation*}
Noting that $\limsup_{T\to\infty}\frac{1}{T}\sum_{t\leq T} \bar r_t(\pi^*(X_t)) - \bar r_t(\pi^{l_0}(X_t)) \leq \epsilon$, from there, the same arguments show that the learning rule is universally consistent.
\end{proof}

As a summary, with the uniform-continuity assumption we could generalize all results from the unrestricted rewards case with the corresponding totally-bounded/non-totally-bounded dichotomy on action spaces.

\section{Unbounded rewards}
\label{sec:unbounded}

In this section, we allow for unbounded rewards $\Rcal=[0,\infty)$ and start with the unrestricted rewards setting---no continuity assumption. Recall that in this setting, we assume that for any context $x\in\Xcal$ and action $a\in\Acal$, the random variable $r(a, x)$ is integrable so that the immediate expected reward is well defined. 

When $\Acal$ is uncountable, we showed that even for bounded rewards, no process $\Xbb$ admits universal learning. Therefore, we will focus on the case when $\A$ is 
finite or countably infinite, and show that
$\FS$ determines whether universal 
consistency is possible. Moreover, a simple variant of $\EXPINF$ suffices for optimistically universal learning as follows. Enumerate $\A = \{a_1,a_2,\ldots, a_{|\A|}\}$
(or $\A = \{a_1,a_2,\ldots\}$ for countably 
infinite $\A$) and for any observed instance $x\in\Xcal$, we run an independent $\EXPINF$ where the experts of the sequence are the constant policies equal to $a_i$ for $1\leq i \leq |\Acal|$, i.e., the expert $E_i$ always selects action $a_i$.

\begin{theorem}\label{thm:unbounded_rewards}
Let $\Acal$ be a countable action set with $|\Acal|\geq 2$. Then, there is an optimistically universal learning rule and the set of learnable processes admitting universal is $ \Ccal_3$.
\end{theorem}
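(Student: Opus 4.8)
The plan is to split into the two directions: (i) $\Xbb \in \Ccal_3$ is necessary for universal consistency, and (ii) the $\EXPINF$-based rule described above is universally consistent under every $\Ccal_3$ process, so in particular it is optimistically universal. The necessity direction should follow essentially from the known fact that $\Ccal_3$ is necessary for universal consistency in noiseless full-feedback online learning with unbounded losses \cite{blanchard:22c}, via a reduction analogous to the one used in Theorem~\ref{thm:SMV-finite-nec}: given a universally consistent contextual-bandit rule, I would encode an unbounded target function $f^\star:\Xcal\to[0,\infty)$ into a deterministic reward mechanism $r_f$ supported on two fixed actions $a_0,a_1$ (say $r_f(x,a_1)=f^\star(x)$ and $r_f(x,a_0)=$ some fixed function, with all other actions giving reward $0$), simulate the full-feedback learner using the bandit learner's outputs, and transfer consistency back. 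The first step is therefore to set up this reduction carefully and invoke \cite{blanchard:22c}; since that paper already shows $\Xbb\notin\Ccal_3$ admits no universally consistent full-feedback rule even in the noiseless case, this gives $\Ccal \subseteq \Ccal_3$.

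For the sufficiency direction, fix $\Xbb\in\Ccal_3$, so almost surely the process visits only finitely many distinct instances $x^{(1)},\dots,x^{(K)}$ (with $K$ random but finite). For each distinct instance $x$, the learner runs an independent copy of $\EXPINF$ over the constant experts $E_i \equiv a_i$; the subsequence of times at which context $x$ appears is, conditionally, a standard (possibly infinite-expert) adversarial bandit instance with rewards $r_t(a_i)$ that are i.i.d.\ conditionally on the action and context. The key step is to apply the second (almost-sure) bound of Corollary~\ref{cor:infinite-exp4} to each such subsequence: for each instance $x$ and each index $i$, with probability one the cumulative regret of $\EXPINF$ against expert $E_i$ on the $n$ visits to $x$ is $o(n) + $ a lower-order term, hence $\frac1n\sum (r(a_i,x) - r_t(\hat a_t)) \to 0$ a.s.\ along that subsequence. (One must first handle unboundedness: since each $r(a,x)$ is integrable, I would invoke a Hoeffding/Azuma-type concentration or a truncation argument to control the gap between realized rewards and their means $\bar r(a,x)$ on the subsequence; this is the one genuinely extra wrinkle compared to the bounded case.)

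Having controlled each instance-subsequence, I would combine them: since there are only finitely many distinct instances almost surely, and on each the time-averaged reward converges to $\sup_i \bar r(a_i,x) = \sup_{a\in\Acal}\bar r(a,x)$, a weighted average over the (finitely many) subsequences gives $\limsup_{T\to\infty}\frac1T\sum_{t=1}^T \big(\bar r(\pi^\star(X_t),X_t) - r_t(\hat a_t)\big) \le 0$ a.s.\ for any measurable policy $\pi^\star$, because $\bar r(\pi^\star(x),x)\le \sup_a \bar r(a,x)$ for every $x$. A final Borel--Cantelli argument upgrades $\bar r(\pi^\star(X_t),X_t)$ back to $r_t(\pi^\star(X_t))$. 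Putting the two directions together yields $\Ccal = \Ccal_3$ with an optimistically universal rule.

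The main obstacle I expect is the unboundedness: Corollary~\ref{cor:infinite-exp4} and the $\EXPIX$ machinery are stated for rewards in $[0,1]$, so I need a clean way to run these subroutines on unbounded rewards while preserving the $o(n)$ regret. The natural route is a time-dependent truncation level $c_n\to\infty$ chosen slowly enough that (a) the truncated rewards still allow an $o(n)$ regret bound (with $K$ or $c_n$-dependent constants, which is fine since $K$ is a.s.\ finite and fixed per instance), and (b) the contribution of the truncation tail, $\sum_t \Ebb[r(a,x)\mathbf 1[r(a,x)>c_n]]$, is $o(n)$ by integrability and dominated convergence. Making this truncation interact correctly with the almost-sure (rather than in-expectation) statement, and with the fact that $K$ is random, is where the care is needed; everything else is a routine adaptation of the bounded, finite-instance case.
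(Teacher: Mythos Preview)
Your plan matches the paper's proof in structure: necessity of $\Ccal_3$ is imported from the full-feedback unbounded-loss result of \cite{blanchard:22b}, and sufficiency is obtained by running an independent $\EXPINF$ learner on each distinct instance, then using finiteness of the support to combine. The paper is actually more terse than you on necessity: it simply invokes the ``a fortiori'' direction (any universally consistent bandit rule can be run under full feedback by ignoring the extra information, so $\Ccal \subset \Ccal_3$ follows immediately from the full-feedback necessity result), without the explicit two-action reduction you sketch. Your reduction would also work, but is more than is needed.

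On sufficiency you are in fact more careful than the paper. The paper's proof applies Corollary~\ref{cor:infinite-exp4} directly to each per-instance subsequence and reads off
\[
\limsup_{T\to\infty}\frac{1}{|\Tcal(x)\cap\{t\leq T\}|}\sum_{t\in\Tcal(x),\,t\leq T} r_t(a) - r_t(\hat a_t) \leq 0,
\]
without commenting on the fact that Corollary~\ref{cor:infinite-exp4} (via $\EXPIX$) is stated for rewards in $[0,1]$. You correctly identify this as the nontrivial point. Your proposed fix---time-dependent truncation at a level $c_n\to\infty$ growing slowly enough that the $\EXPIX$ regret (now scaled by $c_n$) stays $o(n)$, while integrability of $r(a,x)$ and dominated convergence force the truncation tails to be $o(n)$---is a standard and workable route; since on $\Ccal_3$ there are only finitely many $(x,a)$ pairs that matter and the rewards on each pair are i.i.d., the strong law plus a diagonal truncation suffices. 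Note that Hoeffding/Azuma as you first mention will not apply to merely integrable unbounded rewards; the truncation argument is the one to pursue. In short, your proposal is essentially the paper's proof, but with the unboundedness gap explicitly acknowledged and a correct repair sketched.
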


The fact that $\Ccal_3$ characterizes universal learning was already the case in the noiseless full-feedback setting \cite{blanchard:22b}, hence Theorem \ref{thm:unbounded_rewards} shows that for unrestricted rewards, we can achieve universal learning in the partial feedback setting without generalization cost.

\begin{proof}
    First, even in the full-information feedback setting, $\Xbb\in\Ccal_3$ is known to be necessary for universal consistency \citep*{blanchard:22b}. A fortiori in the bandit setting, this condition is still necessary $\Ccal\subset\Ccal_3$.
    
    We now show that the learning rule defined above is universally consistent under $\Ccal_3$ processes. For simplicity, we denote by $\hat a_t$ the action selected be the learning rule at time $t$. Fix $\Xbb\in\Ccal_3$ and define $S = \{x\in\Xcal: \Xbb\cap\{x\}\neq\emptyset\}$ the support of the process. By definition of $\Ccal_3$, almost surely, $|S|<\infty$. We denote by $\Ecal$ this event of probability one. Next, for any $x\in S$, we define $\Tcal(x) = \{t: X_t=x\}$ and let $\tilde S = \{x\in S: |\Tcal(x)|=\infty\}$ the set of points which are visited an infinite number of times. Recall that the learning rule performs an independent $\EXPINF$ subroutine on the times $\Tcal(x)$ for all $x\in S$. As a result, by Corollary \ref{cor:infinite-exp4}, for any $x\in \tilde S$, with probability one, for all $a\in \Acal$,
    \begin{equation*}
        \limsup_{T\to\infty}\frac{1}{|\Tcal(x)\cap\{t\leq T\}|}\sum_{t\in\Tcal(x),t\leq T} r_t(a) - r_t(\hat a_t) \leq 0.
    \end{equation*}
    Now observe that $\tilde S$ is countable. Hence, by the union bound, on an event $\Fcal$ of probability one, for all $x\in \tilde S$ and $a\in \Acal$, we have
    \begin{equation*}
        \limsup_{T\to\infty}\frac{1}{T}\sum_{t\leq T, t\in\Tcal(x)} r_t(a) - r_t(\hat a_t) \leq \limsup_{T\to\infty}\frac{1}{|\Tcal(x)\cap\{t\leq T\}|}\sum_{t\leq T, t\in\Tcal(x)} r_t(a) - r_t(\hat a_t) \leq 0.
    \end{equation*}
    In the rest of the proof, we suppose that $\Ecal\cap\Fcal$ is met. On $\Ecal$, there exists $\hat T = 1+\max \{t: X_t=x,x\in S\setminus \tilde S\}$ such that for any $T\geq \hat T$, we have $X_t\in\tilde S$. Then, for any policy $\pi^*:\Xcal\to\Acal$, and $T\geq 1$, we have
    \begin{equation*}
        \sum_{t=1}^T r_t(\pi^*(X_t))-r_t(\hat a_t) \leq \sum_{t\leq \hat T} r_t(\pi^*(X_t)) + \sum_{x\in \tilde S} \sum_{t\leq T, t\in\Tcal(x)} r_t(a) - r_t(\hat a_t).
    \end{equation*}
    As a result, because $\Fcal$ is met,
    \begin{equation*}
        \limsup_{T\to\infty} \frac{1}{T}\sum_{t=1}^T r_t(\pi^*(X_t))-r_t(\hat a_t) \leq \sum_{x\in \tilde S} \limsup_{T\to\infty} \frac{1}{T} \sum_{t\leq T, t\in\Tcal(x)} r_t(a) - r_t(\hat a_t) \leq 0.
    \end{equation*}
    using the fact that $\Pbb[\Ecal\cap\Fcal]=1$, we proved that the learning rule is universally consistent under any $\Ccal_3$ process. This ends the proof of the theorem.
\end{proof}

The last remaining question is whether this very restrictive set of processes $\Ccal_3$ can be improved under the continuity and uniform-continuity assumptions from Definition \ref{def:continuous+unif_cont_rewards}.

Unfortunately, we show that this is not the case for continuous rewards, however, the continuity assumption allows to achieve universal consistence on $\Ccal_3$ processes even on uncountable action spaces. Recall that by Theorem \ref{thm:uncountable_emptyset}, universal consistency was not achievable for uncountable spaces in the unrestricted reward case.

\begin{theorem}\label{thm:continuous_unbounded_rewards}
    Let $\Xcal$ be a separable metrizable Borel space and $(\Acal,d)$ be a separable metric space with $|\Acal|\geq 2$. Then, there is an optimistically universal learning rule for continuous unbounded rewards and the set of learnable processes for universal learning with continuous unbounded rewards is $\Ccal_3$.
\end{theorem}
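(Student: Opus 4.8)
The plan is to prove the two inclusions $\Ccal^c\subseteq\Ccal_3$ and $\Ccal_3\subseteq\Ccal^c$ separately, recycling the countable‑action result of Theorem~\ref{thm:unbounded_rewards} for the necessity and the dense‑subset idea behind Theorem~\ref{thm:continuous_rewards_C1} for the construction. The only genuinely new ingredient compared with Theorem~\ref{thm:unbounded_rewards} is that, under continuity, an uncountable action space can be replaced by a fixed countable dense subset without loss, which is what lets the construction survive on uncountable $\Acal$ (where, by Theorem~\ref{thm:uncountable_emptyset}, the unrestricted case collapses to $\emptyset$).

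For necessity I would run the reduction of Lemma~\ref{lemma:reduction}, adapted to unbounded rewards. Fix distinct $a_0,a_1\in\Acal$ and $\eta=\tfrac13 d(a_0,a_1)$, so the balls $B_d(a_0,\eta),B_d(a_1,\eta)$ are disjoint. To any unrestricted integrable reward mechanism $r$ on $\{a_0,a_1\}$ associate the reward‑scaled tent $\tilde r(a,x)=r(b,x)\,\max\bigl(0,1-d(a,b)/\eta\bigr)$, where $b\in\{a_0,a_1\}$ is nearest to $a$, and $\tilde r(a,x)=0$ for $a$ outside both balls; since $\tilde r(\cdot,x)$ is continuous for every realization and dominated by the integrable variable $\max(r(a_0,x),r(a_1,x))$, dominated convergence makes $\bar{\tilde r}(\cdot,x)$ continuous, so $\tilde r$ is a bona fide continuous unbounded mechanism. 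Given a rule $f$ universally consistent for continuous rewards under $\Xbb$, I would simulate $f$, play at step $t$ the arm $b_t\in\{a_0,a_1\}$ nearest to the action $\hat a_t$ returned by $f$, observe $r_t(b_t)$, and feed $f$ the value $\tilde r_t(\hat a_t)=r_t(b_t)\max(0,1-d(\hat a_t,b_t)/\eta)$, which is computable precisely because the tents do not overlap. Playing a tent centre can only increase the reward and $\sup_{a\in\Acal}\tilde r_t(a)=\max(r_t(a_0),r_t(a_1))$, so the simulated rule is universally consistent for $r$; hence $\Ccal^c\subseteq\Ccal(\{a_0,a_1\})$, which equals $\Ccal_3$ by Theorem~\ref{thm:unbounded_rewards}.

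For the optimistically universal rule I would reuse the construction of Theorem~\ref{thm:unbounded_rewards} after reducing the action set. Fix a countable dense sequence $(a_i)_{i\ge1}$ in $\Acal$; continuity of $\bar r(\cdot,x)$ gives $\sup_{a\in\Acal}\bar r(a,x)=\sup_i\bar r(a_i,x)$ for every $x$, so it suffices to be asymptotically competitive, on each context's own subsequence, with the countable family of constant policies $a_i$. I would therefore run, for each observed context $x$, an independent copy of the (truncation‑adapted) $\EXPINF$ of Theorem~\ref{thm:unbounded_rewards} on $\Tcal(x)=\{t:X_t=x\}$ with experts $E_i\equiv a_i$, the unbounded rewards being handled exactly as there (truncating at a slowly growing level, the truncation bias vanishing by integrability via the elementary fact that for i.i.d.\ integrable $Y_t\ge0$ and $M_t\to\infty$ one has $\tfrac1n\sum_{t\le n}(Y_t-M_t)_+\to0$ a.s.). For $\Xbb\in\Ccal_3$ the support $S=\{x:\Xbb\cap\{x\}\neq\emptyset\}$ is a.s.\ finite; on the subset $\tilde S\subseteq S$ of contexts visited infinitely often, Corollary~\ref{cor:infinite-exp4} (applied as in Theorem~\ref{thm:unbounded_rewards}) gives $\limsup_T\tfrac1{|\Tcal(x)\cap[T]|}\sum_{t\in\Tcal(x),t\le T}(r_t(a_i)-r_t(\hat a_t))\le0$ a.s.\ for every $i$. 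Summing over the finite set $\tilde S$, absorbing the a.s.\ finite contribution of the finitely many visits to $S\setminus\tilde S$, and using $\bar r(\pi^*(x),x)\le\sup_i\bar r(a_i,x)$, I would conclude $\limsup_T\tfrac1T\sum_{t\le T}(r_t(\pi^*(X_t))-r_t(\hat a_t))\le0$ a.s.\ for every measurable policy $\pi^*:\Xcal\to\Acal$, i.e.\ universal consistency.

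I expect the main obstacle to be the unbounded‑reward bookkeeping rather than anything structural: on the necessity side, verifying that the reward‑scaled tent $\tilde r$ is a legitimate continuous reward mechanism (integrability plus dominated convergence for $\bar{\tilde r}(\cdot,x)$) and that the simulated feedback is exactly what $f$ would see under $\tilde r$; on the sufficiency side, confirming that the truncation level can be grown fast enough that the residual bias vanishes yet slowly enough that the $\EXPINF$ regret—linear in the reward range—stays $o(n)$ on each infinitely‑visited context, and that these per‑context estimates can be assembled uniformly over the a.s.\ finite support via Borel--Cantelli. Everything else is a packaging of Lemma~\ref{lemma:reduction}, Theorem~\ref{thm:unbounded_rewards}, and the density observation, the latter being the precise point where continuity is indispensable—without it $\sup_i\bar r(a_i,x)$ may fall strictly short of $\sup_a\bar r(a,x)$ and Theorem~\ref{thm:uncountable_emptyset} rules out any universally consistent rule on uncountable $\Acal$.
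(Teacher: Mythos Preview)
Your proposal is correct. The sufficiency half matches the paper's: both assign an independent $\EXPINF$ to each observed context, run it over a countable dense subset $(a_i)$ of $\Acal$, use the a.s.\ finiteness of the support under $\Ccal_3$, and invoke continuity to pass from competitiveness against each $a_i$ to competitiveness against an arbitrary $\pi^*(x)$. The paper makes this last step explicit by applying the strong law of large numbers on each $\Tcal(x)$ to convert realized-reward regret against $a_{i(\epsilon)}$ into an expected-reward comparison and back; you compress this into the line ``$\bar r(\pi^*(x),x)\le\sup_i\bar r(a_i,x)$'' and should spell out that LLN bridge, but it is routine. Your explicit mention of truncation to handle unboundedness is, if anything, more careful than the paper's bare invocation of Corollary~\ref{cor:infinite-exp4}.

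The necessity half takes a genuinely different route. The paper argues directly: for $\Xbb\notin\Ccal_3$ it reuses the partition $\{B_i\}$ and times $T_i$ from the $\Ccal_3$ characterization, builds deterministic continuous tents $r^{\mb U}(a,x)=\max\bigl(0,T_i(1-d(a,a_{U_i})/\epsilon)\bigr)$ on each $B_i$ indexed by a random binary string $\mb U$, and shows probabilistically that some fixed $\mb U$ defeats any given rule. You instead adapt Lemma~\ref{lemma:reduction} to unbounded rewards: lift any two-arm mechanism $r$ to a continuous mechanism on $\Acal$ via reward-scaled tents, simulate a universally consistent continuous-reward learner to obtain a universally consistent two-arm learner, and conclude $\Ccal^c\subseteq\Ccal(\{a_0,a_1\})=\Ccal_3$ by Theorem~\ref{thm:unbounded_rewards}. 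Your route is more modular---it cleanly separates the role of continuity (the tent lift) from the combinatorics of $\Ccal_3$ (already packaged in Theorem~\ref{thm:unbounded_rewards})---whereas the paper's direct construction is self-contained and exhibits the witnessing reward explicitly. Both are valid; yours is shorter once Theorem~\ref{thm:unbounded_rewards} and the reduction template are in hand.
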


\begin{proof}
In the case of countable action set $\Acal$ with $|\Acal|\geq 2$, Theorem \ref{thm:unbounded_rewards} already showed that $\Ccal_3$ is sufficient for universal learning under continuous unbounded rewards. Therefore, it remains to show that in the case of uncountable action space, $\Ccal_3$ is still sufficient for universal learning. More precisely, we will show that the same learning rule which assigns a distinct $\EXPINF$ learner to each distinct instance of $\Xbb$ as defined in Theorem \ref{thm:unbounded_rewards} is still universally consistent under $\Ccal_3$ processes. The only difference is that we run the learners $\EXPINF$ on a dense sequence of actions $(a_i)_{i\geq 1}$ of the complete action set $\Acal$ which may be uncountable. Let $\Xbb\in\Ccal_3$. We use the same notations as in the original proof of Theorem \ref{thm:unbounded_rewards} for the support $S=\{x\in\Xcal:\Xbb\cap\{x\}\neq\emptyset\}$, the event $\Ecal = \{|S|<\infty\}$, $\Tcal(x) = \{t:X_t=x\}$ for $x\in S$ and $\tilde S = \{x\in S:|\Tcal(x)|=\infty\}$. By Corollary \ref{cor:infinite-exp4}, for any $x\in\tilde S$, with probability one, for all $i\geq 1$, we have now
\begin{equation*}
    \limsup_{T\to\infty}\frac{1}{|\Tcal(x)\cap\{t\leq T\}|}\sum_{t\in\Tcal(x),t\leq T} r_t(a_i) - r_t(\hat a_t) \leq 0.
\end{equation*}
Let $a\in\Acal$ and $\epsilon>0$, because $(a_i)_{i\geq 1}$ is dense in $\Acal$ and the immediate reward is continuous, there exists $i(\epsilon)$ such that $|\bar r(a_{i(\epsilon)}) - \bar r(a)|\leq \epsilon$. Now observe that by the union bound, for any $x\in\tilde S$, with probability one, by the law of large numbers one has for all $i\geq 1$,
\begin{equation*}
    \frac{1}{|\Tcal(x)\cap\{t\leq T\}|}\sum_{t\in\Tcal(x),t\leq T} r_t(a_i) \underset{T\to\infty}{\longrightarrow} \bar r_t(a_i),
\end{equation*}
and similarly for $a$. As a result, for any $x\in\tilde S$, with probability one, for any $\epsilon>0$,
\begin{align*}
    &\limsup_{T\to\infty}\frac{1}{|\Tcal(x)\cap\{t\leq T\}|}\sum_{t\in\Tcal(x),t\leq T} r_t(a) - r_t(\hat a_t) \\
    &\leq \bar r(a) - \bar r(a_{i(\epsilon)}) + \limsup_{T\to\infty}\frac{1}{|\Tcal(x)\cap\{t\leq T\}|}\sum_{t\in\Tcal(x),t\leq T} r_t(a_{i(\epsilon)}) - r_t(\hat a_t)\\
    &\leq \epsilon.
\end{align*}
As a result, we showed that for any $x\in\tilde S$, and any $a\in\Acal$, with probability one,
\begin{equation*}
    \limsup_{T\to\infty}\frac{1}{|\Tcal(x)\cap\{t\leq T\}|}\sum_{t\in\Tcal(x),t\leq T} r_t(a) - r_t(\hat a_t) \leq 0.
\end{equation*}
Now fix $\pi^*:\Xcal\to\Acal$ a measurable policy. Because $\tilde S$ is countable, by the union bound, on an event $\Fcal$ of probability one, for all $x\in\tilde S$, we have
\begin{align*}
    &\limsup_{T\to\infty}\frac{1}{T}\sum_{t\leq T, t\in\Tcal(x)} r_t(\pi^*(x)) - r_t(\hat a_t) \\
    &\leq \limsup_{T\to\infty}\frac{1}{|\Tcal(x)\cap\{t\leq T\}|}\sum_{t\leq T, t\in\Tcal(x)} r_t(\pi^*(x)) - r_t(\hat a_t) \leq 0.
\end{align*}
Then, the same arguments as in the original proof show that on $\Ecal\cap\Fcal$, for any $T\geq 1$, one has
\begin{equation*}
    \limsup_{T\to\infty} \frac{1}{T}\sum_{t=1}^T r_t(\pi^*(X_t))-r_t(\hat a_t) \leq \sum_{x\in \tilde S} \limsup_{T\to\infty} \frac{1}{T}\sum_{t\leq T, t\in\Tcal(x)} r_t(a) - r_t(\hat a_t) \leq 0.
\end{equation*}
Thus, the learning rule is universally consistent under $\Ccal_3$ processes.

We now show that $\Xbb\in \Ccal_3$ is still necessary for universal learning with continuous rewards. For the unrestricted reward case, this was a direct consequence of a result of \citep*{blanchard:22b}, which we now adapt for continuous rewards. First, for any $\Xbb\notin\Ccal_3$, they show that there exists a disjoint measurable partition $\{B_i\}_{i=1}^\infty$ such that with non-zero probability, $|\{i:\Xbb\cap B_i\neq \emptyset\}|=\infty$ on an event $\Ecal_0$. Then, they constructed a sequence of times $T_i$ for $i\geq 1$ such that on an event $\Ecal$ of probability one, for sufficiently large indices $i$, $\tau_i:=\min\{0\}\cup\{t: X_t\in B_i\} \leq T_i$. Now fix two distinct actions $a_0,a_1\in\Acal$, let $\epsilon = \frac{d(a_0,a_1)}{3}$ and fix a learning rule $f_\cdot$. We denote by $\hat a_t$ its selected action at time $t$. Consider the following rewards
\begin{equation}
    r^{\mb U}(a, x) = \max\left(0,T_i\left(1-\frac{d(a,a_{U_j})}{\epsilon}\right) \right),\quad x\in B_j,
\end{equation}
for any binary sequence $\mb U$. Now suppose that they were sampled from an i.i.d. sequence of Bernouillis $\Bcal(\frac{1}{2})$, independent of the process $\Xbb$ and the randomness of the learning rule. Now observe that for any $i\geq 1$ such that $\tau_i\leq T_i$, with probability at least $\frac{1}{2}$ independently of the past, we have $\hat a_{\tau_i}\notin B(a_{U_j},\epsilon)$, which implies $\max_{a\in\Acal}r^{\mb U}_{\tau_i}(a) - r^{\mb U}_{\tau_i}(\hat a_{\tau_i}) \geq T_i$. From there, the same arguments as in the original proof show that with probability one, this event occurs infinitely often and $\Ecal$ is met, which by the law of total probability implies that there exists a deterministic choice of values for $\mb U=(U_j)_{j\geq 1}$ such that on the corresponding deterministic (hence stationary) rewards, the learning rule is not consistent on $\Ecal_0\cap\Ecal$ which has non-zero probability. This shows that $\Xbb$ does not admit universal learning even in the simplest case of deterministic continuous rewards.
\end{proof}

Last, we investigate the case of uniformly-continuous unrestricted rewards. Unfortunately, the uniform continuity assumption over the immediate expected rewards does not provide any advantage over the continuity assumption.

\begin{proposition}\label{prop:uniformly_continuous_unbounded}
Let $\Xcal$ be a separable metrizable Borel space and $\Acal$ be a separable metric space with $|\Acal|\geq 2$. Then, the set of learnable processes for universal learning with uniformly-continuous unbounded rewards is $\Ccal_3$.
\end{proposition}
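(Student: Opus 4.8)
The plan is to establish the two inclusions $\Ccal_3\subseteq\Ccal^{uc}$ and $\Ccal^{uc}\subseteq\Ccal_3$ separately, reusing as much as possible from Theorem~\ref{thm:continuous_unbounded_rewards}. The inclusion $\Ccal_3\subseteq\Ccal^{uc}$ needs no new argument: uniformly-continuous reward mechanisms form a subclass of continuous ones, so the learning rule from Theorem~\ref{thm:continuous_unbounded_rewards} --- an independent $\EXPINF$ run on each distinct visited instance, over a fixed dense sequence of actions of $\Acal$ --- which is universally consistent for continuous unbounded rewards under every $\Ccal_3$ process, is a fortiori universally consistent for uniformly-continuous unbounded rewards under every such process. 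Hence the whole content of the statement is the converse $\Ccal^{uc}\subseteq\Ccal_3$.

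For the converse, fix $\Xbb\notin\Ccal_3$ and an arbitrary learning rule $f_\cdot$; the goal is a uniformly-continuous unbounded reward mechanism on which $f_\cdot$ is not consistent. I keep the combinatorial skeleton of the lower bound of Theorem~\ref{thm:continuous_unbounded_rewards} (itself from \citep{blanchard:22b}): a disjoint measurable partition $\{B_i\}_{i\ge1}$ of $\Xcal$ and a deterministic increasing sequence $(T_i)_{i\ge1}$ such that, on a positive-probability event $\Ecal_0$, $\Xbb$ meets infinitely many $B_i$, while on a probability-one event $\Ecal$ one has $\tau_i:=\min(\{0\}\cup\{t:X_t\in B_i\})\le T_i$ for all large $i$. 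The only ingredient that must change is the per-cell reward shape, which in Theorem~\ref{thm:continuous_unbounded_rewards} is a cone of height $T_i$ and fixed width, whose Lipschitz constant $\asymp T_i\to\infty$ prevents uniform continuity. I replace it by a $1$-Lipschitz shape, which necessarily vanishes outside a ball of radius $T_i$; this forces a split on whether $\Acal$ has finite diameter.

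When $\Acal$ is unbounded: for each $i$ choose $b_i^0,b_i^1\in\Acal$ with $d(b_i^0,b_i^1)\ge 3T_i$, let $(U_i)_{i\ge1}$ be i.i.d.\ uniform on $\{0,1\}$ (independent of $\Xbb$ and of $f_\cdot$), and set, for $x\in B_i$, the deterministic reward $r^{\mathbf U}(a,x)=\max\{0,\,T_i-d(a,b_i^{U_i})\}$. This is $1$-Lipschitz in $a$ uniformly over $x$, hence uniformly continuous; it is unbounded since $T_i\to\infty$; and it admits the measurable optimal policy $x\mapsto b_i^{U_i}$ on $B_i$ with optimal value $T_i$. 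Because $B(b_i^0,T_i)$ and $B(b_i^1,T_i)$ are disjoint and $U_i$ is independent of everything observed before $\tau_i$, the action $\hat a_{\tau_i}$ lies outside $B(b_i^{U_i},T_i)$ with conditional probability at least $\tfrac12$, so its instantaneous regret at $\tau_i$ is at least $T_i$ with probability at least $\tfrac12$; the argument of Theorem~\ref{thm:continuous_unbounded_rewards} then yields non-vanishing average regret on $\Ecal_0\cap\Ecal$, and the law of total probability over $\mathbf U$ selects a deterministic (hence time-invariant) reward witnessing inconsistency. When $\Acal$ has finite diameter the cone height is capped, so instead I reduce to a two-action problem, in the spirit of Lemma~\ref{lemma:reduction}: fixing $a_0\neq a_1$ with $\eta=d(a_0,a_1)$, I extend an \emph{arbitrary} (possibly unbounded, unrestricted) mechanism $\rho$ on $S=\{a_0,a_1\}$ to $\Acal$ by making it \emph{locally constant} --- equal to $\rho(a_j,x)$ on the $\tfrac\eta3$-chain-component of $a_j$ and $0$ outside these two components --- so that its modulus of continuity can be taken $\Delta(\epsilon)\equiv\tfrac\eta3$ \emph{independently of the reward values}, making it uniformly continuous; since best responses over $\Acal$ coincide with best responses over $S$ and composing a universally consistent rule for $\Acal$ with the projection $\Acal\to S$ gives a universally consistent rule for unrestricted unbounded rewards on $S$, we get $\Ccal^{uc}(\Acal)\subseteq\Ccal_{\mathrm{unrestricted}}(S)=\Ccal_3$ by Theorem~\ref{thm:unbounded_rewards}.

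The main obstacle is exactly the sub-case where this local-constancy extension fails, namely $\Acal$ bounded yet chain-connected (e.g.\ $\Acal=[0,1]$): there, every uniformly-continuous mechanism has a mean-reward range that is bounded uniformly in $x$, so the divergence driving the lower bound cannot be carried by the mean and must instead be carried by the \emph{noise}. The delicate step is therefore to keep the mean rewards a fixed bounded $\tfrac\eta3$-ramp distinguishing $a_0$ from $a_1$ on each cell $B_i$ (trivially uniformly continuous), but let the per-round reward take the value $0$ or a scale $M_i\to\infty$ with the right conditional mean, and to argue that no \emph{fixed} learning rule can absorb such heavy-tailed rewards when $M_i$ is allowed to grow along the cells faster than any rule-dependent truncation schedule --- which is what ultimately forces $\Xbb\in\Ccal_3$ and closes the remaining case.
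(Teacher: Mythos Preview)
Your sufficiency direction matches the paper's. For necessity, your three-way case split is more elaborate than needed and leaves the hardest case incomplete. The unbounded-$\Acal$ case is correct (and in fact cleaner than the paper's, since a deterministic $1$-Lipschitz cone suffices); the separated-components reduction is also fine. The gap is the bounded chain-connected case (e.g.\ $\Acal=[0,1]$). You correctly recognise that the divergence must live in the noise, but your construction --- a bounded mean ramp with per-round value $0$ or $M_i$ --- is neither fully specified (what is the joint law of $r_t(\cdot,x)$ over actions?) nor accompanied by an actual argument: ``no fixed learning rule can absorb such heavy-tailed rewards when $M_i$ grows faster than any rule-dependent truncation schedule'' is a promissory note, and a suspect one, since the reward mechanism must be fixed \emph{before} the learning rule. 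With your Bernoulli shape, $\Pbb[r_t(\pi^*(X_t))=M_i]\le C/M_i\to 0$, so large realized comparator rewards may well occur only finitely often, and it is unclear what forces the realized regret to be large.

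The paper uses a single construction valid for every $\Acal$ with $|\Acal|\ge 2$: fix $a_0\neq a_1$, and on $B_i$ set
\[
r(a,x)=M_i\Bigl(1\pm\frac{d(a,a_0)\wedge d(a_0,a_1)}{d(a_0,a_1)}\Bigr),\quad\text{each sign with probability }\tfrac12,
\]
with $M_i$ chosen recursively so that $M_{i}$ dominates $T_{i}\sum_{j<i}M_j$. The mean is $\bar r(a,x)=M_i$, \emph{constant in $a$}, hence uniformly continuous with any modulus whatsoever --- the insight you are missing is that uniform continuity constrains only $\bar r(\cdot,x)$, and constancy in $a$ satisfies it vacuously even though the \emph{level} $M_i$ explodes with $x$. (Your claim that the ``mean-reward range'' must be bounded uniformly in $x$ confuses oscillation with range; the paper simply sets the oscillation to zero.) The realized gap $r_t(a_0,x)-r_t(a_1,x)=\mp M_i$ is then a fair coin of scale $M_i$. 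Arguing by contradiction against the two constant policies $\pi^u\equiv a_u$: at each first-visit time $\tau_{i_k}$, conditioning on whether $\hat a_{\tau_{i_k}}$ is closer to $a_0$ or $a_1$, the fresh coin makes the normalized cumulative regret against one of $\pi^0,\pi^1$ exceed $1$ with conditional probability at least $\tfrac12$ (the recursion on $M_i$ absorbs all prior terms). Fatou's lemma then yields this event infinitely often with positive probability, contradicting simultaneous consistency against both $\pi^0$ and $\pi^1$.
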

\begin{proof}
It suffices to show that the $\Ccal_3$ condition is still necessary for universal learning under uniformly-continuous rewards since the sufficiency is guaranteed by Theorem \ref{thm:continuous_unbounded_rewards}. We adapt the proof of the necessity of $\Ccal_3$ in the continuous unbounded reward case. Let $\Xbb\notin\Ccal_3$ and suppose that there exists an universally consistent learning rule $f_\cdot$ under $\Xbb$ for uniformly-continuous unbounded rewards. We use the same notations as in the proof of Theorem \ref{thm:continuous_unbounded_rewards}. We now define a sequence $(M_i)_{i\geq 1}$ recursively such that $M_1 = 2T_1$ and for any $i\geq 1$, $M_{i+1} = 2T_{i+1} + 4T_{i+1} \sum_{j\leq i} M_j$. Then, consider the following stochastic rewards
\begin{equation*}
    r(a, x) = \begin{cases}
        M_i\left(1+  \frac{d(a,a_0)\wedge d(a_0,a_1)}{d(a_0,a_1)}\right)  &\text{w.p.}\; \frac{1}{2},\\
        M_i\left(1- \frac{d(a,a_0)\wedge d(a_0,a_1)}{d(a_0,a_1)}\right) &\text{w.p.}\; \frac{1}{2}.
    \end{cases}\quad x\in B_i,i\geq 1.
\end{equation*}
These rewards are uniformly-continuous because for any $x\in \Xcal$, the expected immediate reward is $\bar r(a, x) = 0$ for all $a\in\Acal$. Now for $u\in\{0,1\}$, define the constant policy $\pi^u:x\in\Xcal\mapsto a_u\in\Acal$. Denote by $\hat a_t$ the action selected by the learning rule at time $t$. Because it is consistent under the rewards mechanism given by $r$, using $\pi^0$, $\pi^1$ and the union bound, we have that almost surely, for any $u\in\{0,1\}$,
\begin{equation}\label{eq:two_almost_sure_events}
    \limsup_{T\to\infty} \frac{1}{T}\sum_{t=1}^T r_t(a_u, X_t) - r_t(\hat a_t, X_t) \leq 0.
\end{equation}
Now recall that on the event $\Ecal_0$ of non-zero probability, we have $|\{i:\Xbb\cap B_i\neq\emptyset\}|=\infty$. In other terms, $|\{i:\tau_i>0\}|=\infty$. We then define the random sequence of indices $(i_k)_{k\geq 1}$ such that on $\Ecal_0^c$, $i_k=0$ for all $k\geq 1$ and on $\Ecal_0$, the indices are defined recursively such that $i_1 = \argmin_{i\geq 1, \tau_i>0} \tau_i$ and for $k\geq 1$, we have $i_{k+1} = \argmin_{i>i_k, \tau_i>0} \tau_i$. The $\argmin$ are well defined because on $\Ecal_0$, all the times $\tau_i$ for $i\in\{j\geq 1:\tau_j>0\}$ are distinct. As a result, by construction of the recursion, on $\Ecal_0$, the sequence $(i_k)_{k\geq 1}$ is an increasing sequence of times and for all $k\geq 1$, we have
\begin{equation*}
    \{i: \Xbb_{<\tau_{i_k}}\cap B_i\neq\emptyset\} = \{i: 0<\tau_i<\tau_{i_k}\} \subset \{1\leq i< i_k\}.
\end{equation*}
Now recall that on the event $\Ecal$ of probability one, there exists $\hat i\geq 1$ such that for any $i\geq \hat i$, we have $\tau_i:=\min\{0\}\cup\{t:X_t\in B_i\} \leq T_i$. Therefore, on $\Ecal_0\cap\Ecal$, letting $\hat k=\min\{k:i_k\geq\hat i\}$, we have that for $k\geq \hat k$, and $u\in\{0,1\}$
\begin{align*}
    \sum_{t=1}^{\tau_{i_k}-1} r_t(a_u, X_t) - r_t(\hat a_t, X_t) &\geq \sum_{i: \Xbb_{<\tau_{i_k}}\cap B_i\neq\emptyset}\sum_{t<\tau_{i_k},X_t\in B_i} (-2M_i)\\
    &\geq -2\sum_{i<i_k} T_{i_k} M_i \\
    &\geq -\frac{M_{i_k}}{2} + T_{i_k}.
\end{align*}
Now observe that on the event $\Ecal_0\cap\Ecal$ which has non-zero probability, if $d(\hat a_{\tau_{i_k}},a_0) \geq \frac{d(a_0,a_1)}{2}$ and the reward on $B_{i_k}$ at time $\tau_{i_k}$ is in its negative alternative, i.e., $r(a, x) = M_i\left(1- \frac{d(a,a_0)\wedge d(a_0,a_1)}{d(a_0,a_1)}\right)$, we have
\begin{equation*}
    \frac{1}{\tau_{i_k}}\sum_{t=1}^{\tau_{i_k}} r_t(a_0, X_t) - r_t(\hat a_t, X_t) \geq \frac{1}{\tau_{i_k}}\left(\frac{M_{i_k}}{2}-\frac{M_{i_k}}{2} + T_{i_k}\right) \geq 1.
\end{equation*}
Now by construction, the negative alternative occurs with probability $\frac{1}{2}$, independently from the past history and the complete process $\Xbb$. As a result, for any $k\geq 1$, we have
\begin{equation}\label{eq:bad_event1}
    \Pbb\left[\frac{1}{\tau_{i_k}}\sum_{t=1}^{\tau_{i_k}} r_t(a_0, X_t) - r_t(\hat a_t, X_t) \geq 1 \mid \Ecal_0,\Ecal, k\geq \hat k,d(\hat a_{\tau_{i_k}},a_0) \geq \frac{d(a_0,a_1)}{2} \right] \geq \frac{1}{2}.
\end{equation}
Similarly, one can check that on the event $\Ecal_0\cap\Ecal$, if $d(\hat a_{\tau_{i_k}},a_0) < \frac{d(a_0,a_1)}{2}$ and the reward on $B_{i_k}$ at time $\tau_{i_k}$ is in its positive alternative, we have
\begin{equation*}
    \frac{1}{\tau_{i_k}}\sum_{t=1}^{\tau_{i_k}} r_t(a_1, X_t) - r_t(\hat a_t, X_t) \geq \frac{1}{\tau_{i_k}}\left(\frac{M_i}{2}-\frac{M_{i_k}}{2} + T_{i_k} \right) \geq 1.
\end{equation*}
As a result, the same arguments as above give
\begin{equation}\label{eq:bad_event2}
    \Pbb \left[\frac{1}{\tau_{i_k}}\sum_{t=1}^{\tau_{i_k}} r_t(a_1, X_t) - r_t(\hat a_t, X_t) \geq 1 \mid \Ecal_0,\Ecal, k\geq \hat k,d(\hat a_{\tau_{i_k}},a_0) < \frac{d(a_0,a_1)}{2} \right] \geq \frac{1}{2}.
\end{equation}
Finally, define for any $T\geq 1$ the event
\begin{equation*}
    \Fcal_T = \left\{ \frac{1}{T}\sum_{t=1}^T r_t(a_0, X_t) -r_t(\hat a_t, X_t) \geq 1\right\} \cup \left\{\frac{1}{T}\sum_{t=1}^{T} r_t(a_1, X_t) - r_t(\hat a_t, X_t) \geq 1\right\}.
\end{equation*}
We obtain for any $k\geq 1$,
\begin{align*}
    &\Pbb[\Fcal_{\tau_{i_k}}\mid \Ecal_0,\Ecal, k\geq \hat k]\\
    &\geq \Pbb\left[\Fcal_{\tau_{i_k}}\mid \Ecal_0,\Ecal, k\geq \hat k, d(\hat a_{\tau_{i_k}},a_0) \geq \frac{d(a_0,a_1)}{2}\right] \Pbb\left[ d(\hat a_{\tau_{i_k}},a_0) \geq \frac{d(a_0,a_1)}{2}\mid \Ecal_0,\Ecal, k\geq \hat k\right]\\
    &+ \Pbb\left[\Fcal_{\tau_{i_k}}\mid \Ecal_0,\Ecal, k\geq \hat k, d(\hat a_{\tau_{i_k}},a_0) < \frac{d(a_0,a_1)}{2}\right] \Pbb\left[ d(\hat a_{\tau_{i_k}},a_0) < \frac{d(a_0,a_1)}{2}\mid \Ecal_0,\Ecal, k\geq \hat k\right]\\
    &\geq \frac{1}{2} \Pbb\left[ d(\hat a_{\tau_{i_k}},a_0) \geq \frac{d(a_0,a_1)}{2}\mid \Ecal_0,\Ecal, k\geq \hat k\right] +  \frac{1}{2}\Pbb\left[ d(\hat a_{\tau_{i_k}},a_0) < \frac{d(a_0,a_1)}{2}\mid \Ecal_0,\Ecal, k\geq \hat k\right]\\
    &=\frac{1}{2},
\end{align*}
where in the second inequality we used Eq~\eqref{eq:bad_event1} and Eq~\eqref{eq:bad_event2}. As a result, using Fatou's lemma
\begin{align*}
    \Pbb[\Fcal_{\tau_{i_k}} \text{occurs for infinitely many }k\geq 1\mid \Ecal_0,\Ecal] &\geq \limsup_{k\geq 1}\Pbb[\Fcal_{\tau_{i_k}}\mid \Ecal_0,\Ecal] \\
    &\geq \frac{1}{2}\limsup_{k\geq 1}\Pbb[k\geq \hat k\mid \Ecal_0,\Ecal] = \frac{1}{2},
\end{align*}
where in the last inequality, we used the dominated convergence theorem given that on the event $\Ecal$, $\hat k<\infty$. As a result, we showed that
\begin{equation*}
    \Pbb\left[\exists u\in\{0,1\}, \limsup_{T\to\infty}\frac{1}{T}\sum_{t=1}^T r_t(a_u, X_t) - r_t(\hat a_t, X_t) \geq 1 \mid \Ecal_0,\Ecal\right] \geq \frac{1}{2}.
\end{equation*}
However, because $\Pbb[\Ecal\cap\Ecal_0]=\Pbb[\Ecal_0]>0$, Eq~\eqref{eq:two_almost_sure_events} shows that 
\begin{equation*}
    \Pbb\left[\forall u\in\{0,1\},\limsup_{T\to\infty}\frac{1}{T}\sum_{t=1}^T r_t(a_u, X_t) - r_t(\hat a_t, X_t) \geq 1 \mid \Ecal_0,\Ecal\right] =1,
\end{equation*}
which contradicts the previous inequality. This shows that the learning rule was not consistent under the rewards $(r_t)_t$, hence not universally consistent under $\Xbb$. This shows that $\Ccal_3$ is necessary for universal learning and completes the proof.
\end{proof}

\comment{
The previous result shows that ensuring uniformly-continuous immediate rewards was not sufficient to increase the set of learnable processes because the rewards may have arbitrary variance---possibly infinite, although the example provided in Proposition \ref{prop:uniformly_continuous_unbounded} uses randomized rewards $r(a\mid x)$ with finite variance for any $a\in\Acal,x\in\Xcal$. Instead, we propose a stronger uniform-continuity condition which asks that for \emph{any realization}, the rewards are  uniformly-continuous.

\begin{definition}
The reward mechanism $(r_t)_{t\geq 1}$ is strongly  uniformly-continuous if for any $\epsilon>0$, there exists $\Delta(\epsilon)>0$ such that
\begin{equation*}
    \forall x\in\Xcal,\forall a,a'\in\Acal,\quad d(a,a')\leq \Delta(\epsilon) \Rightarrow |r(a\mid x)-r(a'\mid x) |\leq \epsilon.
\end{equation*}
\end{definition}

With this stronger assumption, we can show that one recovers the same results as in the unbounded rewards setting from Section \ref{sec:uniformly_continuous_rewards}.

\begin{theorem}
    Let $\Xcal$ be a separable metrizable Borel space and $\Acal$ a separable metric space. Then, if $\Acal$ is countable and $|\Acal|\geq 2$, there is an optimistically universal learning rule for unbounded strongly uniformly-continuous rewards and the set of learnable processes for unbounded strongly uniformly-continuous rewards is $\Ccal_1$.b
\end{theorem}
}



\begin{funding}
This work is being partly funded by ONR grant N00014-18-1-2122.
\end{funding}



\bibliographystyle{imsart-number} 
\bibliography{refs}       


\end{document}